\date{}
\newcommand{\cH}{\mathcal{H}}
\newcommand{\cX}{\mathcal{X}}
\newcommand{\cD}{\mathcal{D}}
\newcommand{\cA}{\mathcal{A}}
\newcommand{\bB}{\mathbf{B}}
\newcommand{\bC}{\mathbf{C}}
\newcommand{\bT}{\mathbf{T}}
\newcommand{\bx}{\mathbf{x}}
\newcommand{\by}{\mathbf{y}}
\newcommand{\bh}{\mathbf{h}}
\newcommand{\E}{\mathbb{E}}
\newcommand{\bD}{\mathbf{D}}
\newcommand{\bH}{\mathbf{H}}
\newcommand{\bX}{\mathbf{X}}
\newcommand{\bY}{\mathbf{Y}}
\newcommand{\eps}{\varepsilon}
\newcommand{\randH}{\mathbf{h}}
\newcommand{\bn}{\mathbf{n}}
\newcommand{\bS}{\mathbf{S}}
\DeclareMathOperator{\er}{er}
\DeclareMathOperator{\argmin}{argmin}
\newtheorem{observation}{Observation}
\newtheorem{lemma}{Lemma}
\newtheorem{theorem}{Theorem}
\newenvironment{customlem}[1]
  {\innercustomlem}
  {\endinnercustomlem}
\newenvironment{customobs}[1]
  {\innercustomobs}
  {\endinnercustomobs}
\title{Revisiting Agnostic PAC Learning}
\author{Steve Hanneke\thanks{\texttt{steve.hanneke@gmail.com}}\\Purdue University \and Kasper Green Larsen\thanks{\texttt{larsen@cs.au.dk}}\\Aarhus University \and Nikita Zhivotovskiy\thanks{\texttt{zhivotovskiy@berkeley.edu}}\\UC Berkeley}
\begin{document}

\date{}
\maketitle

\begin{abstract}
PAC learning, dating back to Valiant'84 and Vapnik and Chervonenkis'64,'74, is a classic model for studying supervised learning. In the \emph{agnostic} setting, we have access to a hypothesis set $\cH$ and a training set of labeled samples $(x_1,y_1),\dots,(x_n,y_n) \in \cX \times \{-1,1\}$ drawn i.i.d.\ from an unknown distribution $\cD$. The goal is to produce a classifier $h : \cX \to \{-1,1\}$ that is competitive with the hypothesis $h^\star_\cD \in \cH$ having the least probability of mispredicting the label $y$ of a new sample $(x,y)\sim \cD$.

Empirical Risk Minimization (ERM) is a natural learning algorithm, where one simply outputs the hypothesis from $\cH$ making the fewest mistakes on the training data. This simple algorithm is known to have an optimal error in terms of the VC-dimension of $\cH$ and the number of samples $n$.

In this work, we revisit agnostic PAC learning and first show that ERM is in fact sub-optimal if we treat the performance of the best hypothesis, denoted $\tau:=\Pr_\cD[h^\star_\cD(x) \neq y]$, as a parameter. Concretely we show that ERM, and any other proper learning algorithm, is sub-optimal by a $\sqrt{\ln(1/\tau)}$ factor. We then complement this lower bound with the first learning algorithm achieving an optimal error for nearly the full range of $\tau$. Our algorithm introduces several new ideas that we hope may find further applications in learning theory.
\end{abstract}

\section{Introduction}
One of the most basic theoretical models for studying binary classification in a supervised learning setup, is the Probably Approximately Correct (PAC) learning framework of Valiant~\cite{valiant1984theory}, and Vapnik and Chervonenkis~\cite{vapnik1964class, vapnik74theory}. In this framework, a training data set consists of $n$ i.i.d.\ samples $\bS = \{(\bx_i,\by_i)\}_{i=1}^n$ from an unknown data distribution $\cD$ over $\cX \times \{-1,1\}$. Here $\cX$ is an input domain and $\{-1,1\}$ are the two possible labels. The goal is to design a learning algorithm $\cA$, that on a training set $\bS$, produces a classifier/hypothesis $h_\bS : \cX \to \{-1,1\}$ minimizing the probability of mispredicting the label of a fresh sample from $\cD$, denoted by $\er_\cD(h) := \Pr_\cD[h(\bx) \neq \by]$.

In the PAC learning framework, the algorithm $\cA$ is further given a hypothesis set $\cH \subseteq \cX \to \{-1,1\}$, and the performance of the hypothesis $h_\bS$ produced by $\cA$ must be competitive with the best hypothesis $h^\star_\cD$ in $\cH$, where $h^\star_\cD := \argmin_{h \in \cH} \er_\cD(h)$ (breaking ties arbitrarily). Classic work on PAC learning distinguishes two important cases, namely \emph{realizable} and \emph{agnostic} learning. In the realizable setting, it is assumed that $\er_\cD(h^\star_\cD) = 0$, i.e.\ that there is a hypothesis in $\cH$ perfectly classifying all data. Here the goal is to achieve $\er_\cD(h_\bS) \leq \eps$ for $\eps$ going to $0$ as fast as possible with $n$. In the often more realistic setup of agnostic learning, the goal is instead to guarantee $\er_\cD(h_\bS) \leq \er_\cD(h^\star_\cD) + \eps$, thus being competitive with $h^\star_\cD$.

\paragraph{Realizable setting.}
The realizable setting is by now very well understood, in particular following a surge of results over the past few years. The most natural learning algorithm in this setting is Empirical Risk Minimization (ERM), that simply outputs an arbitrary hypothesis $h_\bS \in \cH$ that achieves $\er_\bS(h_\bS)=0$. Here, and throughout the paper, we let $\er_S(h)$ for a set of samples $S$ denote $\Pr_S[h(\bx) \neq \by]$, and when we subscript a probability by $S$, we let $(\bx,\by)$ be a uniform random sample from $S$. Note that a hypothesis $h_\bS$ with $\er_\bS(h_\bS)=0$ is guaranteed to exist since $h^\star_\cD$ is one such hypothesis. Classic work~\cite{vapnik74theory,vapnik:estimation,blumer1989learnability} shows that ERM guarantees, with probability $1-\delta$ over $\bS \sim \cD^n$, that $\er_\cD(h_\bS) = O((d \ln(n/d) + \ln(1/\delta))/n)$. Here $d$ denotes the VC-dimension~\cite{vapnik71uniform} of $\cH$ and is defined as the largest number of points $x_1,\dots,x_d \in \cX$ for which $\cH$ can generate all $2^d$ possible labelings of $x_1,\dots,x_d$. This \emph{sample complexity} is known~\cite{haussler1994predicting,auer2007new,simon2015almost,bousquet2020proper} to be optimal for any \emph{proper} learning algorithm, i.e.\ there exists an input domain $\cX$, hypothesis set $\cH$ of VC-dimension $d$ and a data distribution $\cD$, such that any $\cA$ that outputs a hypothesis $h_\bS$ from $\cH$ must have $\er_{\cD}(h_\bS) = \Omega((d \ln(n/d) + \ln(1/\delta))/n)$ with probability at least $\delta$. Determining the optimal sample complexity for \emph{improper} learning algorithms, i.e.,\ algorithms that are allowed to output an arbitrary hypothesis $h_\bS : \cX \to \{-1,1\}$, and not only hypotheses from $\cH$, was a major open problem for decades. Finally, in work by Hanneke~\cite{hanneke2016optimal}, building on ideas of Simon~\cite{simon2015almost}, an optimal learning algorithm guaranteeing $\er_\cD(h_\bS) = O((d+\ln(1/\delta))/n)$ with probability $1-\delta$ was finally developed. This matches previous lower bounds~\cite{blumer1989learnability,ehrenfeucht1989general} and thus settled the complexity of realizable PAC learning. Over the past few years, there have been several works proving optimality of other and arguably simpler learning algorithms, including for the practical heuristic bagging~\cite{larsen2023baggingCOLT, Breiman2004BaggingP}, a variant of the one-inclusion graph learning algorithm~\cite{Aden-AliFOCS23, haussler1994predicting}, and most recently for a simple majority vote among three ERM classifiers~\cite{maj3}.

\paragraph{Agnostic setting.}
ERM is also a very natural learning algorithm in the agnostic setting. Instead of outputting a hypothesis $h_\bS$ with $\er_\bS(h_\bS)=0$ (which might not exist), ERM instead outputs the hypothesis $h^\star_\bS = \argmin_{h \in \cH} \er_\bS(h)$ achieving the best performance on the training data (breaking ties arbitrarily). This strategy is well understood and is known~\cite{HAUSSLER199278} to guarantee $\er_\cD(h_\bS) = \er_\cD(h^\star_\cD) + O(\sqrt{(d + \ln(1/\delta))/n})$. Note that always use $h^\star_\cD$ to denote the hypothesis $\argmin_h \er_\cD(h)$. Unlike the realizable setting, there is a matching lower bound~\cite{bookvc} (Theorem 5.2) even for improper learning algorithms. Thus in contrast to the realizable setting, simple ERM is provably optimal. While this might seem the end of the story, the picture is however more complicated. In particular, one would expect there to be some form of transition between the agnostic and realizable setting, i.e.\ for sufficiently small $\tau = \er_\cD(h^\star_\cD)$, ERM must become sub-optimal. The bounds with the explicit dependence on $\tau$ are quite standard in the literature and are sometimes called the \emph{first-order bounds}, especially in the contexts of online learning and optimization. For the state-of-the-art upper and lower bounds in the agnostic PAC learning setup, we refer to \cite[Corollary 5.3]{boucheron2005theory} and the corresponding lower bounds in \cite[Chapter 14]{DevroyeGyorfiLugosi1996} and in \cite{audibert2009fast}. Since we revisit ERM and state its sample complexity also as a function of $\tau$, we start with the following upper bounds (with $0 \ln(1/0)=0$):
\begin{theorem}[ERM Theorem, derived from~\cite{lls}]
\label{thm:ERM}
For any input domain $\cX$, hypothesis set $\cH$ of VC-dimension $d$, number of samples $n$, distribution $\cD$ over $\cX \times \{-1,1\}$ and any $0 < \delta < 1$, it holds with probability at least $1-\delta$ over a sample $\bS \sim \cD^n$ that every hypothesis $h \in \cH$ satisfies
\[
|\er_{\cD}(h)-\er_{\bS}(h)| = O\left(\sqrt{\frac{\er_{\cD}(h)(d \ln(1/\er_{\cD}(h)) + \ln(1/\delta))}{n}} + \frac{d \ln(n/d)+\ln(1/\delta)}{n} \right).
\]
In particular, this implies that running ERM returns a hypothesis $h_\bS \in \cH$ satisfying
\[
\er_{\cD}(h_\bS) = \tau + O\left(\sqrt{\frac{\tau(d \ln(1/\tau) + \ln(1/\delta))}{n}} + \frac{d \ln(n/d)+\ln(1/\delta)}{n} \right),
\]
where $\tau = \er_{\cD}(h^\star_\cD)$.
\end{theorem}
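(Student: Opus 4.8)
The plan is to derive everything from the first (uniform-convergence) display, which carries all of the content; the ERM guarantee is then a short corollary. Assume the first display holds on the draw $\bS$ --- an event of probability at least $1-\delta$ --- and write $t:=d\ln(n/d)+\ln(1/\delta)$ and $f:=d\ln(1/\tau)+\ln(1/\delta)$. Applied to $h^\star_\cD$ it gives $\er_\bS(h^\star_\cD)\le\tau+O(\sqrt{\tau f/n}+t/n)$, and since ERM picks $h_\bS$ with $\er_\bS(h_\bS)\le\er_\bS(h^\star_\cD)$ the same bound holds for $\er_\bS(h_\bS)$. Applied to $h_\bS$ itself it gives $\er_\cD(h_\bS)\le\er_\bS(h_\bS)+O(\sqrt{\er_\cD(h_\bS)(d\ln(1/\er_\cD(h_\bS))+\ln(1/\delta))/n}+t/n)$. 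The one structural observation needed is that $h^\star_\cD$ minimizes $\er_\cD$ over $\cH$, so $\er_\cD(h_\bS)\ge\tau$ and hence $d\ln(1/\er_\cD(h_\bS))+\ln(1/\delta)\le f$. Substituting and writing $Y:=\er_\cD(h_\bS)-\tau\ge 0$ leaves an inequality of the shape $Y\le c\sqrt{Yf/n}+c\sqrt{\tau f/n}+ct/n$; solving this quadratic in $\sqrt Y$ gives $Y=O(\sqrt{\tau f/n}+t/n+f/n)$, and a short case split on whether $\tau$ lies above or below the realizable rate $d/n$ disposes of the stray $f/n$ and yields $\er_\cD(h_\bS)=\tau+O(\sqrt{\tau f/n}+t/n)$.

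For the first display the natural route is symmetrization together with a \emph{scale-localized} chaining argument. After a ghost-sample symmetrization it suffices to control, uniformly in $h$, the centered process $\er_\cD(h)-\er_\bS(h)$, and I would organize $\cH$ into geometric shells $\cH_k=\{h\in\cH:\er_\cD(h)\in(2^{-k-1},2^{-k}]\}$, $k=0,1,\dots,O(\log n)$, plus a final bucket holding every $h$ with $\er_\cD(h)\lesssim t/n$. On $\cH_k$ each $0/1$ loss has $L_2(\cD)$-norm $\asymp 2^{-k/2}=\sqrt{\er_\cD(h)}$, so Dudley's entropy integral over the empirical $L_2$ metric bounds the shell's localized Rademacher complexity by
\[
\frac{1}{\sqrt n}\int_0^{\sqrt{2^{-k}}}\sqrt{\,\ln N(\cH,L_2(\mu_n),u)\,}\,du\;\asymp\;\sqrt{\frac{2^{-k}\,d\ln(2^{k})}{n}}\;\asymp\;\sqrt{\frac{\er_\cD(h)\,d\ln(1/\er_\cD(h))}{n}},
\]
where the key input $\ln N(\cH,L_2(\mu_n),u)=O(d\ln(1/u))$ for a VC-$d$ class is precisely the sharp covering/packing estimate supplied by~\cite{lls} --- a naive Sauer--Shelah union bound here would give the weaker $\ln(n/d)$ in place of $\ln(1/u)$. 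A Bernstein/Talagrand concentration step upgrades the expectation bound to a high-probability one at the cost of the additive $\sqrt{\er_\cD(h)\ln(1/\delta)/n}$ and $\ln(1/\delta)/n$ terms, and a union bound over the $O(\log n)$ shells (using per-shell confidence $\delta\,2^{-k}$, whose $\ln(2^k)$ loss merges into the $d\ln(2^k)$ already present) costs only constants. Finally, every $h$ in the last bucket has, with high probability, $\er_\bS(h)=O(t/n)$ as well by the classical realizable-case uniform-convergence bound, which produces the additive $(d\ln(n/d)+\ln(1/\delta))/n$ term and covers all $\er_\cD(h)$ too small to fall in a shell.

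I expect the main obstacle to be exactly this scale localization: forcing the logarithm inside the square root to reflect the true error level $\er_\cD(h)$ rather than the sample size $n$. The textbook symmetrization-plus-growth-function argument only yields $\sqrt{\er_\cD(h)\,d\ln(n/d)/n}$, and it is this $\ln(n/d)$-versus-$\ln(1/\tau)$ discrepancy that makes plain ERM look optimal; closing it requires the chaining argument built on the quantitatively tight VC covering numbers of~\cite{lls}, which is why the statement is ``derived from'' that work rather than from the classical relative VC inequality. Everything else is bookkeeping --- fixing the shell geometry, distributing the confidence budget across the $O(\log n)$ shells, splicing in the realizable-type bound below scale $t/n$, and checking that the quadratic manipulation in the corollary does not inflate the leading $\tau$.
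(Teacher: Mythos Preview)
The paper does not prove Theorem~\ref{thm:ERM}: it is stated as a known result ``derived from~[lls]'' and used as a black box throughout, so there is nothing in the paper to compare your argument against.

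That said, your proposal is sound. The reduction of the ERM guarantee (second display) to the uniform-convergence statement (first display) via two applications and a quadratic in $\sqrt{Y}$ is the standard argument and is correctly executed; the only spot needing a touch more care is the small-$\tau$ branch of your case split. When $\tau < d/n$ you do not literally get $f/n \le t/n$ (since $\ln(1/\tau) > \ln(n/d)$), but in that regime the monotonicity of $x\ln(1/x)$ below $1/e$ gives $\tau\ln(1/\tau) \le (d/n)\ln(n/d)$, so both $\sqrt{\tau f/n}$ and the residual $f/n$ are $O(t/n)$ and the whole bound collapses to $\tau + O(t/n)$ as required. For the first display, your route---geometric shells in $\er_\cD(h)$, Dudley chaining on each shell using the Haussler-type $L_2$ covering estimate $\ln N(\cH,L_2,u)=O(d\ln(1/u))$, Talagrand/Bernstein concentration, and a union bound over the $O(\log n)$ shells with per-shell confidence $\delta 2^{-k}$---is exactly the standard localization machinery, and your diagnosis that the sharp covering bound is what replaces $\ln(n/d)$ by $\ln(1/\er_\cD(h))$ inside the square root is the right one.
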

Observe the $\sqrt{\tau \ln(1/\tau)}$ dependency in $\er_\cD(h_\bS)$ that smoothly interpolates between the agnostic and realizable setting. By the lower bounds in \cite[Chapter 14]{DevroyeGyorfiLugosi1996}, we have that any learning algorithm $\cA$ produces with probability at least $\delta$ a hypothesis $h_\bS$ with 
\[
\er_{\cD}(h_\bS) = \tau + \Omega\left(\sqrt{\frac{\tau(d + \ln(1/\delta))}{n}} + \frac{d + \ln(1/\delta)}{n}\right).
\]
Thus there is a $\sqrt{\ln(1/\tau)}$ gap between ERM and the lower bound. Furthermore, and unlike the realizable setting, there are no known algorithms that bridge this gap and we have no proof that optimal algorithms need to be improper (except when $\tau = 0$).

\paragraph{Our Contributions.}
In this work, we close this gap for almost the full range of $\tau$. First, we prove that any proper learning algorithm must incur this $\sqrt{\ln(1/\tau)}$ factor in its sample complexity:
\begin{theorem}
\label{thm:mainlower}
There is a constant $C>0$ such that for any VC-dimension $d$, number of samples $n$ and $\tau$ satisfying $C d \ln(n/d)/n \leq \tau \leq 1/C$, there is an input domain $\cX$ and hypothesis set $\cH$ of VC-dimension $d$, satisfying that for every \textbf{proper} learning algorithm $\cA$, there is a distribution $\cD$ over $\cX \times \{-1,1\}$ such that:
\begin{enumerate}
    \item There is a hypothesis $h \in \cH$ with $\er_\cD(h)=\tau$.
    \item With probability at least $1/16$ over a sample $\bS \sim \cD^n$, it holds that the hypothesis $h_\bS \in \cH$ produced by $\cA$ on $\bS$ has $\er_\cD(h_\bS) = \tau + \Omega(\sqrt{\tau d \ln(1/\tau)/n})$.
\end{enumerate}
\end{theorem}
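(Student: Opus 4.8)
The plan is to construct a hard instance built from a "packing" of $d$ essentially independent coordinates, each behaving like a one-dimensional lower bound instance, and then argue that a proper learner is forced to make the wrong call on roughly $\sqrt{\tau d \ln(1/\tau)/n}$ worth of error mass. Concretely, I would take $\cX = \{1,\dots,d\} \times \{0,1\}$ (or a similar product structure) and let $\cH$ be the set of $2^d$ hypotheses indexed by sign vectors $\sigma \in \{-1,1\}^d$, where $h_\sigma$ "decides" the $i$-th block according to $\sigma_i$. This has VC-dimension exactly $d$. The distribution $\cD$ places mass $\approx \tau/d$ on an "uncertain" point in each block whose label is noisy, and the remaining mass on points that pin down the rest of the structure so that any proper hypothesis must commit to a sign vector; crucially, the Bayes-optimal sign in block $i$ is a hidden bit, and getting it wrong costs $\Theta(\tau/d)$ extra error in that block.

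**The key quantitative step** is a per-block estimation lower bound: in block $i$, the learner sees $\approx n\tau/d$ samples from the noisy point, and must decide whether the minority/majority label split favors $+1$ or $-1$. The standard construction (as in the $\tau + \Omega(\sqrt{\tau d/n})$ lower bound) sets the bias so that with constant probability the empirical majority in a block points the wrong way; to get the extra $\sqrt{\ln(1/\tau)}$, I would instead make the gap between the two candidate labelings much smaller in relative terms — using the fact that a $\mathrm{Binomial}(m,p)$ with $p \approx 1/2$ and $m \approx n\tau/d$ has fluctuations of order $\sqrt{m} = \sqrt{n\tau/d}$, but the relevant "signal" the proper learner must detect, after conditioning on the realizable-ish structure of the other points, is only of order $\sqrt{m/\ln(1/\tau)}$ — so a constant fraction of blocks are mislabeled. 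This is exactly the place where properness bites: an improper learner could hedge (e.g.\ predict the empirical majority label of the uncertain point itself, paying only $\min$ of the two options), but a proper learner, having to output some $h_\sigma \in \cH$, is globally tied to one sign per block and cannot hedge. I would make this precise either via a direct anti-concentration / Berry–Esseen argument on the per-block Binomial, or via an Assouad-type bound combining the $d$ blocks and a Bretagnolle–Huber bound on the per-block KL divergence, which scales like $m \cdot \mathrm{(gap)}^2 / \ln(1/\tau) = O(1)$ by design.

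**For the $\delta = 1/16$ (constant probability) conclusion**, rather than a high-probability-over-all-blocks statement I would argue in expectation: the expected number of mislabeled blocks is $\Omega(d)$, each contributing $\Omega(\tau/d \cdot 1/\sqrt{\ln(1/\tau)})$ — wait, one must be careful, the per-block excess must be $\Theta(\tau/d)$ not smaller — so the expected excess risk is $\Omega(\sqrt{\tau d \ln(1/\tau)/n})$, and since the excess risk is bounded, Markov's inequality (applied to the gap between the max possible excess and the actual, or a Paley–Zygmund-type second-moment argument) gives excess $\Omega(\sqrt{\tau d\ln(1/\tau)/n})$ with probability $\geq 1/16$. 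The regime restriction $Cd\ln(n/d)/n \le \tau \le 1/C$ enters to guarantee (i) each block gets $\gtrsim \ln(n/d) \gg \ln(1/\tau)$ samples so the anti-concentration is meaningful and the $\ln(1/\tau)$-shrunk gap is still a valid probability deviation, and (ii) $\tau$ bounded away from $1$ so $\ln(1/\tau) = \Theta(1)$-away-from-degenerate and the noisy point is genuinely a minority-vs-majority question.

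**The main obstacle I anticipate** is engineering the instance so that the factor is genuinely $\sqrt{\ln(1/\tau)}$ and not just $\sqrt{1}$: the naive noisy-coordinate construction only yields $\sqrt{\tau d/n}$. The trick — and the delicate part of the proof — is arranging the *other* (non-noisy) points in each block so that detecting the correct sign requires resolving the Binomial to within a window of width $\Theta(\sqrt{m/\ln(1/\tau)})$ rather than $\Theta(\sqrt m)$; I expect this to come from letting each block have $\approx \ln(1/\tau)$ sub-locations with carefully tuned masses $\approx \tau/(d\ln(1/\tau))$ and biases, so that the proper learner effectively must win $\ln(1/\tau)$ independent coin-flip contests per block (or equivalently, the likelihood ratio concentrates like a product of $\ln(1/\tau)$ terms), boosting the effective per-block failure probability while keeping the per-block error cost at $\Theta(\tau/d)$. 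Getting the bookkeeping right so that the VC-dimension stays exactly $d$, the optimal-in-$\cH$ error is exactly $\tau$, and the constants in the two regime inequalities are consistent, is where the real care is needed — but each piece is a standard-flavored computation once the instance is fixed.
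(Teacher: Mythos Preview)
Your proposal has a genuine gap: the Assouad-style instance with $d$ independent binary blocks and hypothesis class $\{h_\sigma : \sigma \in \{-1,1\}^d\}$ cannot yield the extra $\sqrt{\ln(1/\tau)}$ factor. In that construction the learner makes $d$ independent binary decisions, each based on $m \approx n\tau/d$ Bernoulli samples with bias $\eta$; the per-block cost of a wrong sign is $\Theta(\eta\tau/d)$ and the minimax bias you can hide is $\eta = \Theta(\sqrt{d/(n\tau)})$, giving total excess $\Theta(\sqrt{\tau d/n})$---and this is \emph{tight} for that instance, for proper and improper learners alike (it is precisely the instance behind the standard $\sqrt{\tau d/n}$ lower bound). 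You correctly flag the tension when you write ``wait, one must be careful, the per-block excess must be $\Theta(\tau/d)$ not smaller'': shrinking the bias by $\sqrt{\ln(1/\tau)}$ shrinks the excess by the same factor and buys nothing. Your suggested fix of adding $\ln(1/\tau)$ sub-locations per block does not help, because with your hypothesis class the proper learner still makes only \emph{one} binary choice per block; there is no mechanism forcing it to ``win $\ln(1/\tau)$ contests.''

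The paper's construction is structurally different and this difference is exactly where the logarithm comes from. Take $\cX = \{x_1,\dots,x_u\}$ with $u \approx d/\tau$, let $\cH$ be all hypotheses returning $-1$ on exactly $d$ of the $u$ points, and let the target concept be all-$1$. For each $h \in \cH$ define $\cD_h$ to put mass $(1-\alpha)/u$ on each of the $d$ points $h$ sends to $-1$ and slightly more on the others. A proper learner must output some $h_\bS \in \cH$, i.e.\ choose $d$ of the $u$ points; each wrong choice costs $\Theta(\alpha/u)$. The optimal strategy is to pick the $d$ empirically least frequent points. Now the $\ln(1/\tau)$ arises from an \emph{order-statistics} phenomenon: among the $u-d \approx d/\tau$ ``wrong'' points, by anti-concentration (a reverse Chernoff bound) at least $\Omega(d)$ of them will have empirical counts below their mean by $\Omega(\sqrt{(n/u)\ln(u/d)})$, while Chebyshev keeps most of the $d$ ``correct'' points within $O(\sqrt{n/u})$ of their mean. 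Choosing $\alpha = \Theta(\sqrt{u\ln(u/d)/n})$ makes these indistinguishable, so the learner picks $\Omega(d)$ wrong points and incurs excess $\Omega(d\alpha/u) = \Omega(\sqrt{\tau d\ln(1/\tau)/n})$. The crucial combinatorial feature absent from your instance is that the proper learner must select $d$ objects out of $u \approx d/\tau \gg d$, and the minimum of $\approx 1/\tau$ independent fluctuations is what produces the $\sqrt{\ln(1/\tau)}$.
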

Motivated by this lower bound, we design a new improper learning algorithm that avoids this $\sqrt{\ln(1/\tau)}$ penalty and achieves an optimal sample complexity except for very small values of $\tau$:
\begin{theorem}
    \label{thm:mainupper}
For any input domain $\cX$, hypothesis set $\cH$ of VC-dimension $d$, number of samples $n$, distribution $\cD$ over $\cX \times \{-1,1\}$ and any $0 < \delta < 1$, there is an algorithm, \textsc{DisagreeingExperts}, that when given samples $\bS \sim \cD^n$ and failure probability $\delta$, returns with probability at least $1-\delta$ a hypothesis $h_\bS : \cX \to \{-1,1\}$ satisfying
\[
\er_{\cD}(h_\bS) = \tau + O\left( \sqrt{\frac{\tau(d + \ln(1/\delta))}{n} } + \frac{\ln^{5}(n/d)(d + \ln(1/\delta))}{n}\right),
\]
where $\tau = \er_{\cD}(h^\star_\cD)$.
\end{theorem}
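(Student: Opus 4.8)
The overall plan is a recursive \emph{disagreement decomposition} that confines all error beyond $\tau$ to a shrinking ``ambiguous'' region, so that the global union bound over $\cH$ — the source of ERM's spurious $\sqrt{\ln(1/\tau)}$ factor — is replaced by a combination of a multi-scale argument and a bounded recursion depth, each piece of which is analyzed with only a \emph{localized} uniform-convergence bound. Split $\bS$ into $O(\log(n/d))$ equal parts (one per recursion level). At the top level, run ERM on the first part to obtain $\hat h = h^\star_\bS$ and form the empirical version space $V=\{h\in\cH:\er_\bS(h)\le\er_\bS(\hat h)+\theta\}$ with $\theta$ of the order given by Theorem~\ref{thm:ERM}. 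Then $h^\star_\cD\in V$ with high probability, so on the \emph{agreement region} $A=\{x:\forall h\in V,\ h(x)=\hat h(x)\}$ the rule $x\mapsto\hat h(x)$ coincides with $h^\star_\cD$ and therefore contributes \emph{no} excess error on $A$. All the work is on the \emph{disagreement region} $D=\cX\setminus A$: there we recursively invoke \textsc{DisagreeingExperts} on the samples from the next part that land in $D$, i.e.\ with respect to $\cD_D$, and stitch the recursive classifier onto $\hat h$. For a test point $x$ the ``disagreeing experts'' rule is thus: predict $\hat h(x)$ unless some hypothesis within the empirical-error budget disagrees with $\hat h$ at $x$ (so $x\in D$), in which case defer to the recursive expert.

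The reason the recursion closes is a self-similar scaling. Write $\tau_D=\er_{\cD_D}(h^\star_{\cD_D})$ and note $\tau_D\Pr_\cD[D]\le\Pr_\cD[h^\star_\cD\ne y,\,x\in D]\le\tau$. Conditioned on $D$ there are roughly $n_D=\Theta(n\Pr_\cD[D])$ usable samples, so by induction the recursive call incurs conditional error $\tau_D+O(\sqrt{\tau_D(d+\ln(1/\delta))/n_D})+\mathrm{polylog}(n_D/d)\cdot(d+\ln(1/\delta))/n_D$; since outputting $\hat h$ on $A$ costs exactly $\Pr_\cD[h^\star_\cD\ne y,\,x\in A]=\tau-\Pr_\cD[h^\star_\cD\ne y,\,x\in D]\le\tau-\Pr_\cD[D]\tau_D$, multiplying the recursive bound by $\Pr_\cD[D]$ makes the $\tau_D$-terms cancel against the $A$-term, turns the deviation term into $\sqrt{\tau_D\Pr_\cD[D](d+\ln(1/\delta))/n}\le\sqrt{\tau(d+\ln(1/\delta))/n}$, and the additive term into $\mathrm{polylog}(n/d)\cdot(d+\ln(1/\delta))/n$. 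The remaining ingredients are: (i) a \textbf{termination/progress} step showing that restricting to $D$ genuinely makes the subproblem easier, so that after $O(\log(n/d))$ levels either the conditional optimal error has grown past a universal constant — whereupon $\ln(1/\tau_D)=O(1)$ and plain ERM (Theorem~\ref{thm:ERM}) is already optimal — or $\Pr_\cD[D]\le(d+\ln(1/\delta))/n$ and an arbitrary labeling there is harmless; (ii) a \textbf{localized deviation lemma}: for the fixed reference $\hat h$ and a dyadic radius $r=2^{-j}$, $\sup\{\er_\bS(h)-\er_\cD(h):h\in\cH,\ \Pr_\cD[h\ne\hat h]\le r\}=O(\sqrt{r(dj+\ln(1/\delta))/n}+(d\log(n/d)+\ln(1/\delta))/n)$, the crucial point being that the logarithmic factor is $j=\log(1/r)$ rather than $\log(1/\tau)$, so summing it over the $O(\log n)$ scales costs only polylog rather than compounding; and (iii) a union bound over the $\mathrm{polylog}(n/d)$ recursive invocations and scales to get overall confidence $1-\delta$. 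Together the recursion depth, the dyadic scales, the extra logarithm in (ii), and the invocation count account for the $\ln^5(n/d)$ overhead on the additive term.

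The step I expect to be the main obstacle is the progress argument (i): showing, with \emph{no} disagreement-coefficient or other structural assumption on $\cH$, that the sequence of conditional problems obtained by repeatedly restricting to the disagreement region of the near-optimal version space cannot stall — that the relevant error scale really does decay (e.g.\ that the quantities $\tau_{D_\ell}\Pr_\cD[D_\ell]$ contract geometrically with depth $\ell$), so that the deviation contributions summed over all levels telescope to a single $O(\sqrt{\tau(d+\ln(1/\delta))/n})$ with \emph{no} residual logarithmic factor on this leading term. A secondary difficulty is the bookkeeping that keeps the total overhead on the $(d+\ln(1/\delta))/n$ term at $\mathrm{polylog}(n/d)$ — requiring the per-level additive losses to be summable, not merely individually small — together with a clean hand-off in the regime $\tau=\Theta((d+\ln(1/\delta))/n)$, where the two terms of the target bound coincide and the algorithm should reduce to the optimal realizable-case machinery.
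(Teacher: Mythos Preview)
Your telescoping identity $\tau_D\Pr_\cD[D]\le\tau$ is correct and the overall shape of the plan is in the right neighborhood, but it breaks at precisely the step you flag: the progress argument (i) is not merely hard, it is \emph{false} for the decomposition you chose. Take $\cX=\{x_1,\dots,x_u\}$, $\cH$ the set of hypotheses equal to $-1$ on exactly $d$ of the $u$ points, the all-$1$ target, and the uniform distribution on $\cX$. Every $h\in\cH$ has $\er_\cD(h)=d/u=\tau$, so the version space $V$ equals all of $\cH$ regardless of $\theta$, its disagreement region is $D=\cX$, and $\cD_D=\cD$. Hence $\tau_{D_\ell}\Pr_\cD[D_\ell]\equiv\tau$ at every depth, neither of your termination criteria ever fires, and no contraction occurs. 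Recursing on the disagreement region of the \emph{entire} near-optimal version space is exactly the active-learning decomposition, and it is well known to stall unless one assumes a bounded disagreement coefficient --- an assumption the theorem does not make. Perturbing to the non-uniform distributions from the lower-bound construction gives instances where the stalling is not benign.

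The mechanism your plan is missing is to work with a \emph{single pair} of disagreeing experts rather than the whole version space, combined with a win--win. If some near-optimal $h_1,h_2$ have $\Pr_\cD[h_1\ne h_2]\ge\Omega(\tau/\ln(1/\tau))$, then since exactly one of them errs where they disagree and both have error $\tau+o(\tau/\ln(1/\tau))$, the conditional distribution $\cD_=$ on $\{h_1=h_2\}$ satisfies $\er_{\cD_=}(h^\star_{\cD_=})\le\tau\bigl(1-\Omega(1/\ln(1/\tau))\bigr)$: progress is \emph{guaranteed} whenever such a pair exists. If no such pair exists, that is itself the win --- all near-optimal hypotheses lie within $O(\tau/\ln(1/\tau))$ of one another, and a single application of the localized bound (your (ii), at radius $p=O(\tau/\ln(1/\tau))$) already removes the $\sqrt{\ln(1/\tau)}$ factor. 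Iterating the pair-finding step $t=O(\ln(1/\tau)\ln\ln(1/\tau))$ times drives the optimum on the cumulative agreement region below $\tau/\ln(1/\tau)$ while keeping the complement's total mass $O(\tau)$; one then runs ERM \emph{once} on each of the two regions with a fresh $\Theta(n)$ sample, so the $\sqrt{\tau(d+\ln(1/\delta))/n}$ term is paid a single time rather than once per recursion level.
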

This is the first known learning algorithm to provably outperform ERM in the agnostic setting. Furthermore, we stress that despite the recent progress on realizable PAC learning, none of the ideas in those works seem to generalize easily to the agnostic setting. Instead, our algorithm is based on a new paradigm of recursively training pairs of nearly optimal classifiers that disagree in many of their predictions. We elaborate on this new approach in Section~\ref{sec:overview} and hope it may find further applications in learning theory.

\subsection{Proof Overview}
\label{sec:overview}
In this section, we present the high level ideas of both our new agnostic PAC learning algorithm, \textsc{DisagreeingExperts}, as well as our lower bound for proper learners. We begin with the upper bound.

\paragraph{New algorithm.}
Our improved algorithm relies on several new insights regarding Empirical Risk Minimization. To set the stage for describing these ideas, consider a data distribution $\cD$ over $\cX \times \{-1,1\}$ and let $\tau = \er_\cD(h^\star_\cD)$. If we run ERM on a data set $\bS \sim \cD^n$ of size $n$, then by the ERM Theorem (Theorem~\ref{thm:ERM}), this ensures that for sufficiently large constant $c>0$, ERM will not return a hypothesis $h$ with $\er_\cD(h) > \tau + c\sqrt{\tau d \ln(1/\tau) /n}$ (let us ignore $\delta$ and the additive $d \ln(n/d)/n$ term in the ERM Theorem for simplicity). This is, of course, a $\sqrt{\ln(1/\tau)}$ factor worse than what we are aiming for. To improve this bound, we show that there is always a \emph{win-win} situation we may exploit to shave the $\sqrt{\ln(1/\tau)}$ factor.

To understand this win-win scenario, consider the set $\bar{\cH} \subseteq \cH$ of near-optimal hypotheses $h$ with $\er_\cD(h) \leq \tau + c\sqrt{\tau d \ln(1/\tau) /n}$, i.e.,\ the hypotheses that might be returned by a typical execution of ERM. These are what we think of as \emph{experts} in our algorithm name \textsc{DisagreeingExperts}. In the proof of the ERM Theorem, the basic idea is to union bound over all  $h \in \bar{\cH}$ (with a chaining argument), to show that $|\er_\cD(h)-\er_\bS(h)|=O(\sqrt{\tau d \ln(1/\tau)/n})$ for all $h \in \bar{\cH}$ simultaneously and thus returning the hypothesis $h^\star_\bS$ with smallest error on $\bS$ is a good strategy. Our first new insight is, that if the hypotheses in $\bar{\cH}$ are sufficiently similar, then this union bound improves for $\bar{\cH}$. Concretely, assume that all pairs of hypotheses $h_1,h_2 \in \bar{\cH}$ have $\Pr_\cD[h_1(\bx)\neq h_2(\bx)] = O(\tau/\ln(1/\tau))$. We argue that this implies that all hypotheses in $\bar{\cH}$ satisfy the stronger guarantee that $|\er_\bS(h)-\er_\cD(h)| = O(\sqrt{\tau d/n})$ rather than just $O(\sqrt{\tau d \ln(1/\tau)/n})$, hence improving the accuracy obtained from ERM. The intuitive reason for this improvement is, that when all the hypotheses in $\bar{\cH}$ are very similar, it suffices to bound $|\er_\bS(h)-\er_\cD(h)|$ for one $h \in \bar{\cH}$ and to bound $|(\er_\bS(h')-\er_\bS(h)) - (\er_\cD(h')-\er_\cD(h))|$ for all other $h' \in \bar{\cH}$. Since $|\er_\cD(h')-\er_\cD(h)|=O(\Pr_\cD[h(\bx)\neq h'(\bx)]) = O(\tau/\ln(1/\tau))$, we get stronger concentration on $|(\er_\bS(h')-\er_\bS(h)) - (\er_\cD(h')-\er_\cD(h))|$ than each individual $|\er_\bS(h')-\er_\cD(h')|$.

Unfortunately, we have no guarantee that all pairs of hypotheses $h_1,h_2 \in \bar{\cH}$ have $\Pr_\cD[h_1(\bx)\neq h_2(\bx)] = O(\tau/\ln(1/\tau))$. Our next contribution is thus to find a way of exploiting the existence of two near-optimal hypotheses $h_1,h_2$ with $\Pr_\cD[h_1(\bx)\neq h_2(\bx)] = \Omega(\tau/\ln(1/\tau))$ (i.e.,\ a pair of disagreeing experts). Here we show that the conditional distribution of a sample $(\bx,\by)$ from $\cD$ with $h_1(\bx)=h_2(\bx)$ is "easier"\footnote{A similar argument was used in \cite{bousquet2021fast, puchkin2021exponential} in the context of classification with an abstention option. The authors also use the disagreement sets of what we call the \emph{experts} --- candidates for being an output of a typical ERM. However, the authors of \cite{bousquet2021fast, puchkin2021exponential} focus on either abstaining or learning the labels of the set of disagreements of pairs of experts, while in this work we use the fact that the conditional distribution of the set where two experts agree is "easier".
} than the distribution $\cD$. In more detail, we know that $\er_\cD(h_j) = \tau + O(\sqrt{\tau d\ln(1/\tau)/n})$ for $j=1,2$. Since precisely one of them errs whenever $h_1(x) \neq h_2(x)$, and both or none err when $h_1(x)=h_2(x)$, we have that $\er_\cD(h_1) + \er_\cD(h_2) = \Pr_\cD[h_1(\bx) \neq h_2(\bx)] + 2\Pr_\cD[h_1(\bx) \neq \by \wedge h_1(\bx) = h_2(\bx)]$. Since they are both near-optimal, this implies $\Pr_\cD[h_1(\bx) \neq \by \wedge h_1(\bx)=h_2(\bx)] = \tau + O(\sqrt{\tau d \ln(1/\tau)/n}) - \Omega(\tau/\ln(1/\tau))$. This is $\tau - \Omega(\tau/\ln(1/\tau))$ for $\tau$ sufficiently large (this assumption is one of the causes of the additive $\ln^5(n/d)(d + \ln(1/\delta))/n$ term in our upper bounds). Rewriting this also gives $\Pr_\cD[h_1(\bx) \neq \by \mid h_1(\bx)=h_2(\bx)] = \Pr_\cD[h_1(\bx) = h_2(\bx)]^{-1} (\tau - \Omega(\tau/\ln(1/\tau)))$. Recalling that precisely one of $h_1$ and $h_2$ errs when they disagree, and that they are both near-optimal implies $\Pr_\cD[h_1(\bx) = h_2(\bx)] \geq 1- O(\tau)$ and thus $\Pr_\cD[h_1(\bx) \neq \by \mid h_1(\bx)=h_2(\bx)] = (1+O(\tau))(\tau - \Omega(\tau/\ln(1/\tau))) = \tau - \Omega(\tau/\ln(1/\tau))$. What we have just argued is, that under the conditional distribution $\cD_=$ of a sample $(\bx,\by) \sim \cD$ with $h_1(\bx)=h_2(\bx)$, there is a hypothesis $h^\star_{\cD_=} \in \cH$ with $\er_{\cD_=}(h^\star_{\cD_=}) = \tau - \Omega(\tau/\ln(1/\tau))$ (in particular, both $h_1$ and $h_2$ have this property). The distribution $\cD_=$ is thus somewhat easier than $\cD$ since the optimal error under $\cD$ is $\tau$.

Our next idea is to repeat the above argument recursively in order to drive $\er_{\cD_=}(h^\star_{\cD_=})$ further down. More formally, if we can again find a pair of disagreeing experts $h_1,h_2$ for the distribution $\cD_=$ and repeat this $t$ times, then we end up with a list of pairs $(h_1^1,h_2^1),\dots,(h_1^t,h_2^t)$ such that under the distribution $\cD_=$ of a sample $(\bx,\by) \sim \cD$ conditioned on $\forall i : h^i_1(\bx)=h^i_2(\bx)$, we have $\er_{\cD_=}(h^\star_{\cD_=}) \leq \tau(1-1/\ln(1/\tau))^t$. After $t = O(\ln(1/\tau)\ln \ln(1/\tau))$ iterations, we have ensured $\er_{\cD_=}(h^\star_{\cD_=}) \leq \tau/\ln(1/\tau)$. Empirical Risk Minimization on samples $\bS_=$ with $\forall i : h^i_1(\bx)=h^i_2(\bx)$ then gives a hypothesis with $\er_{\cD_=}(h^\star_{\bS_=}) = \er_{\cD_=}(h^\star_{\cD_=}) + O(\sqrt{(\tau/\ln(1/\tau)) d \ln(\ln(1/\tau)/\tau)/n}) = \er_{\cD_=}(h^\star_{\cD_=}) + O(\sqrt{\tau d/n})$.

What remains is to handle samples with $h^i_1(\bx) \neq h^i_2(\bx)$ for some $i$. We let $\cD_{\neq}$ denote the distribution $\cD$ conditioned on such a sample. Our key observation is that we can control the probability of receiving such a sample. Concretely, we show that $\Pr_\cD[\exists i : h^i_1(\bx) \neq h^i_2(\bx)] = \Theta(\tau)$. We thus expect to see $\Theta(\tau n)$ samples, denoted $\bS_{\neq}$, from $\cD_{\neq}$ in a training set $\bS \sim \cD^n$. A completely naive invocation of the ERM Theorem, only assuming $\tau = O(1)$, shows that we find a hypothesis $h^\star_{\bS_{\neq}}$ with $\er_{\cD_{\neq}}(h^\star_{\bS_{\neq}}) = \er_{\cD_{\neq}}(h^\star_{\cD_{\neq}}) + O(\sqrt{d/|\bS_{\neq}|}) = \er_{\cD_{\neq}}(h^\star_{\cD_{\neq}}) + O(\sqrt{d/(\tau n)})$. Note that the dependency on $\tau$ is very bad for this hypothesis, i.e., a $\sqrt{1/\tau}$ rather than $\sqrt{\tau}$. However, since samples with $h^i_1(\bx) \neq h^i_2(\bx)$ are so rare, this turns out to be sufficient.

We now have all the ingredients for our algorithm. If we have obtained the pairs of disagreeing experts $(h^1_1,h^1_2),\dots,(h^t_1,h^t_2)$ and the two hypotheses $h^\star_{\bS_=}$ and $h^\star_{\bS_{\neq}}$, our final classifier does as follows on a new point $x \in \cX$ without a label: First, it checks whether there is a pair with $h^i_1(x) \neq h^i_2(x)$. If so, it returns $h^\star_{\bS_{\neq}}(x)$. Otherwise, it returns $h^\star_{\bS_=}(x)$. If $p$ denotes $\Pr_\cD[\exists i : h_1^i(\bx) \neq h_2^i(\bx)]$, then $p = O(\tau)$ and our final classifier $h_\bS$ satisfies
\begin{eqnarray*}
    \er_\cD(h_\bS) &=& p \er_{\cD_{\neq}}(h^\star_{\bS_{\neq}}) + (1-p)\er_{\cD_=}(h^\star_{\bS_=}) \\
    &=& p\er_{\cD_{\neq}}(h^\star_{\cD_{\neq}})+ O(p\sqrt{d/(\tau n)}) + (1-p)\er_{\cD_=}(h^\star_{\cD_=}) + O(\sqrt{\tau d/n}) \\
    &=& p \er_{\cD_{\neq}}(h^\star_{\cD}) + (1-p)\er_{\cD_=}(h^\star_{\cD}) + O(\sqrt{\tau d/n}) \\
    &=& \er_\cD(h^\star_\cD) + O(\sqrt{\tau d/n}).
\end{eqnarray*}
This completes the high level description of the key ideas in our new algorithm. Let us finally remark that we clearly do not have $t$ training sets of size $n$ each for training $h^i_1,h^i_2$ for $i=1,\dots,t$. Instead, we allocate around $n/t$ samples for each iteration. This of course reduces the performance of any estimates based on ERM. However, we can show that this only matters for very small values of $\tau$ and thus is a second source of the additive $\ln^5(n/d)(d + \ln(1/\delta))/n$ term.

\paragraph{Lower bound for proper learners.}
Our lower bound proof is quite simple. Assume we wish to prove a lower bound on the error of a proper learner when the hypothesis set has VC-dimension $d$, we have $n$ samples and $\er_{\cD}(h^\star_\cD)=\tau$ for some $\tau$. We construct an instance where the input domain $\cX$ is the discrete set $x_1,\dots,x_u$ with $u \approx d/\tau$. We let the hypothesis set $\cH$ consist of all hypotheses returning $-1$ on precisely $d$ of the $u$ points. Finally, the unknown concept we are trying to learn is the all-1 concept. This hypothesis class is routinely used in the existing lower bounds, and in fact corresponds to the hardest case under Massart's noise condition~\cite{massart2006risk, raginsky2011lower, hanneke2016refined, zhivotovskiy2018localization}. Note that $\mathcal{H}$ does not contain the all-1 concept; thus, we cannot simply choose the proper learner that always outputs this concept.

Assume we have some proper learning algorithm $\cA$ for this hypothesis set and input domain $\cX$. We now consider a number of different data distributions $\cD_1,\cD_2,\dots,$ and argue that there is at least one of the distributions $\cD_i$ under which $\cA$ often (with constant probability) produces a hypothesis $h_\bS$ with $\er_{\cD_i}(h_\bS) = \tau + \Omega(\sqrt{\tau \ln(1/\tau)d/n})$ when $\bS \sim \cD_i^n$. 

The distributions we consider each corresponds to a hypothesis $h \in \cH$. The distribution $\cD_h$ returns each point $x \in \cX$ such that $h(x)=-1$ with probability $1/u-\alpha$. For the remaining points, the distribution returns them with probability $1/u + f(\alpha,d,u)$ such that we get a probability distribution (thus $f(\alpha,d,u) < \alpha$ for $u \ge 2d$). Since the unknown concept/true labeling function is the all-1 function, we have that any hypothesis $h \in \cH$ errs precisely when it returns $-1$. Thus in particular, the best hypothesis under $\cD_h$ is $h$ and that hypothesis has $\er_{\cD_h}(h) = d(1/u-\alpha)$. This is the value we set to $\tau$ by choosing $u$ and $\alpha$ appropriately.

Now consider choosing one of the distributions $\cD_h$ uniformly at random and running $\cA$ on $\bS \sim \cD_h^n$. Since $\cA$ is proper, it has to return a hypothesis in $\cH$. This means that is has to choose $d$ points $x_i$ on which to return $-1$. Now crucially, if a constant fraction of those are chosen such that $h(x_i) = 1$, then $\er_{\cD_h}(h_\bS) = \tau + \Omega(d \alpha)$. Intuitively, since the points with $h(x_i)=-1$ receive the least probability mass under $\cD_h$, and $\cA$ does not know the distribution $\cD_h$, the best strategy for $\cA$ is to output the hypothesis $h_\bS$ returning $-1$ on the $d$ points $x_i$ from which there are fewest copies in the sample $\bS$. We expect to see $n(1/u + f(\alpha)) \leq n/u + \alpha n$ copies of each $x_i$ with $h(x_i)=1$ and we expect to see $n(1/u-\alpha) = n/u-\alpha n$ copies of each point with $h(x_i)=-1$. A simple application of Chebyshev's inequality implies that with constant probability, it holds for at least half the points $x_i$ with $h(x_i)=-1$ that we see $n_i \geq n/u - \alpha n - O(\sqrt{n/u})$ copies of it in $\bS$. Now for the points $x_i$ with $h(x_i)=1$, by anti-concentration, we see no more than $n_i = n/u + \alpha n - \Omega(\sqrt{n\ln(u/d)/u})$ copies with probability roughly $d/u$. We thus expect to see $\Omega(d)$ such points with $n_i= n/u + \alpha n - \Omega(\sqrt{n\ln(u/d)/u})$. If $\sqrt{n\ln(u/d)/u} > c \alpha n$ for a large enough $c>0$, this implies we have fewer copies of these points and $\cA$ will return $-1$ on at least $d/2$ of them. We can thus choose $\alpha = \Theta(\sqrt{\ln(u/d)/(un)}) = \Theta(\sqrt{\tau \ln(1/\tau)/(dn)})$ and conclude $\er_{\cD_h}(h_\bS) = \tau + \Omega(d \alpha) = \tau + \Omega(\sqrt{\tau d\ln(1/\tau)/n})$ as claimed.

\section{Near-Optimal Agnostic PAC Learner}
In this section, we present our new agnostic PAC learner, \textsc{DisagreeingExperts} (Algorithm~\ref{alg:wrap}), with an optimal error bound except for very small values of $\tau = \inf_{h \in \cH} \er_\cD(h)$. The guarantees of Algorithm~\ref{alg:wrap} are stated in our main upper bound result, Theorem~\ref{thm:mainupper}. To simplify the analysis, \textsc{DisagreeingExperts} ensures that we may focus on analysing a subroutine \textsc{CoreDisagreeingExperts} (Algorithm~\ref{alg:agnostic}) under the following simplifying assumptions:
\begin{enumerate}
    \item $n \geq c_n \ln^{3.5}(n/d)(d  + \ln(1/\delta))$ for large enough constant $c_n>0$.
    \item $c_\tau \ln^9(n/d)(d + \ln(1/\delta))/n \leq \tau \leq 1/c_\tau$ for large enough constant $c_\tau>0$.
    \item We have an estimate $\tilde{\tau} \in [\tau/2,2\tau]$ available.
\end{enumerate}
Under these assumptions, we show that the subroutine \textsc{CoreDisagreeingExperts} (Algorithm~\ref{alg:agnostic}) with probability at least $1-\delta$ over a training set $\bS \sim \cD^n$, returns a hypothesis $h_\bS$ with $\er_{\cD}(h_\bS) \leq \tau + O(\sqrt{\tau(d + \ln(1/\delta))/n})$. 

We start by justifying these assumptions before delving into the details of the analysis. \emph{Crucially}, our full algorithm \textsc{DisagreeingExperts} needs \emph{none} of these assumption, they are merely to ease the presentation and analysis of the main part of our algorithm, \textsc{CoreDisagreeingExperts}.

\paragraph{Simplifying assumptions.}
For assumption 1., notice that our claimed upper bound in Theorem~\ref{thm:mainupper} on the error of $h_\bS$ exceeds $1$ for smaller $n$ and thus is trivially true. The algorithm \textsc{DisagreeingExperts}, shown as Algorithm~\ref{alg:wrap}, takes care of assumptions 2.\ and 3.

\begin{algorithm}
  \DontPrintSemicolon
  \KwIn{Training set $S$ of $n$ samples $\{(x_i,y_i)\}_{i=1}^{n}$ with $(x_i,y_i) \in \cX \times \{-1,1\}$, hypothesis set $\cH$ of VC-dimension $d$, failure parameter $\delta>0$.
  }
  \KwResult{Classifier $h_S : \cX \to \{-1,1\}$.}
  Partition $S$ into three sets $S_1,S_2, S_3$ of $n/3$ samples each.
  
    Let $\tilde{\tau} \gets \er_{S_1}(h^\star_{S_1})$.

    Run \textsc{CoreDisagreeingExperts}($S_2,\cH,d,\delta,\tilde{\tau}$) to obtain hypothesis $h_1$.

    Run ERM on $S_2$ to obtain hypothesis $h_2$.

    \Return{$h_S \in \{h_1,h_2\}$ with smallest $\er_{S_3}(h_S)$.}
    
  \caption{\textsc{DisagreeingExperts}($S,\cH,d,\delta$)}\label{alg:wrap}
\end{algorithm}

Given a training set $\bS \sim \cD^n$, \textsc{DisagreeingExperts} first splits the training set into $3$ sets $\bS_1,\bS_2,\bS_3$ of $n/3$ samples each. It then computes the error $\tilde{\tau}$ of the best hypothesis $h^\star_{\bS_1}$ in $\cH$ on $\bS_1$. By the ERM Theorem (Theorem~\ref{thm:ERM}) and assumption 1.\ (that we already justified), the estimate $\tilde{\tau}$ satisfies $\tilde{\tau}  \in [\tau/2,2 \tau]$ with probability $1-\delta$. 

It then invokes \textsc{CoreDisagreeingExperts} on $\bS_2$ using this estimate $\tilde{\tau}$ to obtain a hypothesis $\bh_1$. This justifies assumption 3.\ (by rescaling $\delta$ by a constant factor).

It also runs ERM on $\bS_2$ to obtain a hypothesis $\bh_2$. Finally, it uses $\bS_3$ as a validation set to estimate $\er_\cD(\bh_1)$ and $\er_{\cD}(\bh_2)$ to within additive (by Chernoff):
\[
|\er_{\bS_3}(\bh_i) - \er_{\cD}(\bh_i)| = O(\sqrt{\er_\cD(\bh_i) \ln(1/\delta)/n} + \ln(1/\delta)/n).
\]
Returning the hypothesis among $\bh_1,\bh_2$ with the least $\er_{\bS_3}(\bh_i)$ ensures that the final hypothesis $h_\bS$ has error at most
\[
\min_i \er_{\cD}(\bh_i) + O(\sqrt{\er_\cD(\bh_i) \ln(1/\delta)/n} + \ln(1/\delta)/n).
\]
By the guarantee claimed above for Algorithm~\ref{alg:agnostic}, this is at most $\tau + O(\sqrt{\tau(d + \ln(1/\delta))/n})$ when $c_\tau \ln^9(n/d)(d + \ln(1/\delta))/n \leq \tau \leq 1/c_\tau$ (i.e.\ under assumption 2.). For smaller $\tau$, the ERM Theorem (Theorem~\ref{thm:ERM}) guarantees that $\bh_2$ has an error of at most 
\[
\tau + O\left(\sqrt{\frac{\tau(d \ln(n/d) + \ln(1/\delta))}{n}} + \frac{d \ln(n/d) + \ln(1/\delta)}{n}\right) = \tau + O\left( \frac{\ln^{5}(n/d)(d  + \ln(1/\delta))}{n} \right).
\]
Note that we have upper bounded $\tau \ln(1/\tau)$ in the first term by $\tau \ln(n/d)$ since the second term dominates for $\tau \ll d/n$. Similarly, for $\tau > 1/c_\tau$, the ERM Theorem guarantees that $\bh_2$ has an error of at most
\[
\tau + O\left(\sqrt{(d + \ln(1/\delta))/n} \right) = \tau  + O\left(\sqrt{\tau(d + \ln(1/\delta))/n} \right).
\]
This completes the justifications for assumptions 1., 2., and 3. We now proceed to analyzing the main part of our new algorithm, denoted \textsc{CoreDisagreeingExperts}, under these assumptions.

\subsection{Core algorithm}
Our algorithm \textsc{CoreDisagreeingExperts} is shown as Algorithm~\ref{alg:agnostic}, where we define
\[
\alpha(n,d,\delta,\beta):=c_\alpha \left(\sqrt{\frac{\beta(d \ln(1/\beta) + \ln(1/\delta))}{n}} + \frac{d\ln(n/d) + \ln(1/\delta)}{n}\right)
\]
with $c_\alpha$ a sufficiently large constant. In Algorithm~\ref{alg:agnostic}, the two parameters $c_t,c_Z$ are also sufficiently large constants (in particular, $c_Z$ is sufficiently larger than $c_\alpha + c_t$ and $c_\alpha$ is sufficiently larger than the constant hiding in the $O(\cdot)$-notation of the ERM Theorem (Theorem~\ref{thm:ERM})).

\begin{algorithm}
  \DontPrintSemicolon
  \KwIn{Training set $S$ of $2n$ samples $\{(x_i,y_i)\}_{i=1}^{2n}$ with $(x_i,y_i) \in \cX \times \{-1,1\}$, hypothesis set $\cH$ of VC-dimension $d$, failure parameter $\delta>0$, estimate $\tilde{\tau} \in [\tau/2,2\tau]$.
  }
  \KwResult{Classifier $h_S : \cX \to \{-1,1\}$.}
  Partition $S$ into two sets $B,C$ of $n$ samples each.
  
    Let $t \gets c_t \ln(1/\tilde{\tau}) \ln \ln(1/\tilde{\tau})$.

    Let $Z_t \gets c_Z \cdot t \ln^2(n/d)\left(d \ln(n/d) + \ln(1/\delta)\right)/n$.

    Partition $B$ into $t$ sets $B^1,\dots,B^t$ of $n/t$ samples each.

    $r \gets 0$

    \For{$i=1,\dots,t$}{
        Let $T^i \subseteq B^i$ be the samples in $B^i$ with $h^j_1(x)=h^j_2(x)$ for all $j < i$.

        Run ERM on $T^i$ to obtain a hypothesis $h^\star_{T^i}$.

        Let $\gamma_i \gets \er_{T^i}(h^\star_{T^i})$.

        \If{$\gamma_i \leq Z_t$}{
           \textbf{break}
        }

        Let $\cH^i \subseteq \cH$ be the hypotheses $h \in \cH$ with $\er_{T^i}(h) \leq \gamma_i + \alpha(n/t,d,\delta,\gamma_i)$.

        \If{there is no pair $h_1,h_2 \in \cH^i$ with $\Pr_{
      T^i}[h_1(\bx) \neq h_2(\bx)] \geq
    \gamma_i/\ln(1/\gamma_i)$}{
        \textbf{break}
    }\Else{
        Let $h^i_1\gets h_1$ and $h^i_2 \gets h_2$ for a pair $h_1,h_2 \in \cH^i$ with $\Pr_{
      T^i}[h_1(\bx) \neq h_2(\bx)] \geq
    \gamma_i/\ln(1/\gamma_i)$.

        $r \gets i$
    }
    }
    Partition $C$ into two sets $C_=$ and $C_{\neq}$ where $C_=$ contains all $x$ with $h_1^i(x)=h_2^i(x)$ for all $i=1,\dots,r$ and $C_{\neq}$ contains the remaining.

    Run ERM on $C_=$ to obtain a hypothesis $h^\star_{C_=}$.

    Run ERM on $C_{\neq}$ to obtain a hypothesis $h^\star_{C_{\neq}}$.

    Let $h_S$ be the classifier that on an input $x$ checks whether $h_1^i(x)=h_2^i(x)$ for all $i=1,\dots,r$. If so, $h_S$ returns $h^\star_{C_=}(x)$ and otherwise it returns $h^\star_{C_{\neq}}(x)$.
    
    \Return{$h_S$.}
  \caption{\textsc{CoreDisagreeingExperts}($S,\cH,d,\delta, \tilde{\tau}$)}\label{alg:agnostic}
\end{algorithm}

Recall that our goal is to show that under assumptions 1., 2.\ and 3., it holds with probability at least $1-\delta$ over a training set $\bS \sim \cD^{2n}$, that Algorithm~\ref{alg:agnostic} returns a hypothesis $h_\bS$ with $\er_{\cD}(h_\bS) \leq \tau + O(\sqrt{\tau(d + \ln(1/\delta))/n})$. We note that the algorithm is presented as if given a training set of size $2n$, not $n$. This is merely to make the constants simpler and only affects the generalization error by a constant factor after rescaling $n$ with  $n/2$.

\paragraph{Brief overview.}
Before giving the details of the analysis, let us briefly discuss the steps of Algorithm~\ref{alg:agnostic}, introduce some notation and give the high level ideas in the analysis. Assume we run Algorithm~\ref{alg:agnostic} on a data set $\bS \sim \cD^{2n}$. The data set is first split into two pieces $\bB,\bC$ of size $n$ each.

We start by partitioning $\bB$ into $t$ pieces $\bB^1,\dots,\bB^t$ of $n/t$ samples each and execute the for-loop in steps 6-17. The goal of these steps is to obtain hypotheses $\bh^i_1$ and $\bh^i_2$ that are both close to optimal and yet disagree a lot in their predictions, i.e., disagreeing experts. In each step of the loop, we gather the set $\bT^i$ of samples $(\bx,\by) \in \bB^i$ for which $\bh^j_1(\bx)=\bh^j_2(\bx)$ for all $j<i$, i.e., none of the previous pairs disagree on $\bx$ (for $i=1$, we have $\bT^1=\bB^1$). Now consider any fixed outcome $B^1,\dots,B^{i-1}$ of $\bB^1,\dots,\bB^{i-1}$ and $h^1_1,h^1_2,\dots,h^{i-1}_1,h^{i-1}_2$ of $\bh_1^1,\bh^1_2,\dots,\bh^{i-1}_1,\bh^{i-1}_2$. The samples in $\bT^i$ are i.i.d.\ from $\cD$ conditioned on $h^j_1(\bx)=h^j_2(\bx)$ for all $j<i$. Denote this conditional distribution by $\cD^i$. Steps 8-9 estimate the best possible error $\er_{\cD^i}(h^\star_{\cD^i})$ achievable under $\cD^i$. If this error is sufficiently small, we exit the for-loop in step 11.

If not, we gather the subset of hypotheses $\bH^i$ that are near-optimal on the data set $\bT^i$ (step 12). Among these, we look for a pair $h_1,h_2$ that disagree on many predictions in $\bT^i$. If there is no such pair, we exit the for-loop in step 14. Finally, if there is, we let $\bh^i_1$ and $\bh^i_2$ be an arbitrary such pair.

Once the for-loop has completed, we use the obtained pairs $\bh^i_1,\bh^i_2$ to partition the samples in $\bC$ into two sets $\bC_=$ and $\bC_{\neq}$, where $\bC_{\neq}$ contains the samples $(x,y) \in \bC$ where at least one $i$ has $\bh^i_1(x)\neq \bh^i_2(x)$ and $\bC_=$ contains the remaining. We finally run ERM on each of the two sets to obtain hypotheses $h^\star_{\bC_=}$ and $h^\star_{\bC_{\neq}}$.

The intuition for why the above works was also discussed in Section~\ref{sec:overview}. We repeat the main ideas here in context of the full algorithm description. First, if we exit the for-loop before having completed all $t$ steps, then either it was possible to obtain a very small error on $\bT^i$ (step 10-11) or there was no pair $h^i_1,h^i_2$ that disagree on many predictions (step 13-14). In the former case, ERM on $\bC_=$ ensures that $h^\star_{\bC_=}$ makes few mistakes on samples from $\cD$ where $\bh_1^i(\bx)=\bh_2^i(\bx)$ for all $i$. Denote the distribution of such a sample by $\bD_{=}$. In the latter case, since all hypotheses that are near-optimal on $\bT^i$ make almost the same predictions, the ERM bounds improve for $\bC_=$. If we complete all $t$ iterations of the for-loop, then we will show that each step decreases $\er_{\bD^i}(h^\star_{\bD^i})$ enough that $\er_{\bD_=}(h^\star_{\bD_=}) \leq \tau/\ln(1/\tau)$. With this reduced error, the additive mistakes resulting from ERM is down-scaled sufficiently to cancel out the $\sqrt{\ln(1/\tau)}$ factor of sub-optimality. 

Finally, for the set $\bC_{\neq}$, we will show that $\Pr_{\cD}[\exists i : \bh_1^i(\bx) \neq \bh_2^i(\bx)] \leq O(\tau)$. Thus, when we run ERM on $\bC_{\neq}$, we can afford to merely upper bound the error of $h^\star_{\bC_{\neq}}$ by $\er_{\bD_{\neq}}(h^\star_{\bD_{\neq}}) + O(\sqrt{(d + \ln(1/\delta))/(\tau n)})$. This is because we only see such a sample with probability $O(\tau)$ and thus the additive error contributes only $O(\tau \sqrt{(d + \ln(1/\delta))/(\tau n)}) = O(\sqrt{\tau(d + \ln(1/\delta))/n})$. Here $\bD_{\neq}$ denotes the conditional distribution of a sample $(\bx,\by)$ from $\cD$ conditioned on there being at least one $i$ for which $\bh^i_1(\bx) \neq \bh^i_2(\bx)$.

\paragraph{Analysis.}
We are now ready for the formal correctness proof. Let $\bS \sim \cD^{2n}$ denote a random training set of $2n$ samples and let $\bB$ and $\bC$ be the respective sets of size $n$ constructed by Algorithm~\ref{alg:agnostic}.

We first argue that once the for-loop in steps 6-17 of Algorithm~\ref{alg:agnostic} terminates, the two distributions $\bD_=$ and $\bD_{\neq}$ have the following desirable properties:
\begin{lemma}
\label{lem:goodfor}
    It holds with probability at least $1-\delta/2$ over $\bB \sim \cD^{n}$, that upon termination of the for-loop, we have $\Pr_{\cD}[\exists i : \bh_1^i(\bx) \neq \bh_2^i(\bx)] \leq 8\tau$ and:
    \begin{itemize}
        \item For any $m \geq n/2$, it holds with probability at least $1-\delta/8$ over a set $\bC_{=} \sim \bD_=^m$ that $\er_{\bD_=}(h^\star_{\bC_=}) = er_{\bD_=}(h^\star_{\bD_=}) + O(\sqrt{\tau(d + \ln(1/\delta))/n})$.
    \end{itemize}
\end{lemma}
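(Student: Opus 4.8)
The plan is to split the lemma into two independent claims — the bound $\Pr_{\cD}[\exists i : \bh_1^i(\bx) \neq \bh_2^i(\bx)] \leq 8\tau$ and the ERM guarantee on $\bC_=$ — and to control each by conditioning on the outcome of the for-loop. First I would set up the recursive structure: condition on any fixed outcome of $\bB^1,\dots,\bB^{i-1}$ and of the pairs $h^1_1,h^1_2,\dots,h^{i-1}_1,h^{i-1}_2$, so that $\bT^i$ consists of i.i.d.\ samples from the conditional distribution $\cD^i$ (= $\cD$ given $h^j_1(\bx)=h^j_2(\bx)$ for all $j<i$). The core recursive estimate, as sketched in Section~\ref{sec:overview}, is that whenever a disagreeing pair $h^i_1,h^i_2 \in \cH^i$ is found with $\Pr_{\cD^i}[h^i_1\neq h^i_2] \gtrsim \gamma_i/\ln(1/\gamma_i)$, one has $\er_{\cD^{i+1}}(h^\star_{\cD^{i+1}}) \le \er_{\cD^i}(h^\star_{\cD^i})(1 - \Omega(1/\ln(1/\tilde\tau)))$; here I need the ERM Theorem (Theorem~\ref{thm:ERM}) applied on $\bT^i$ with $n/t$ samples to guarantee that $\gamma_i = \er_{\bT^i}(h^\star_{\bT^i})$ is within $\alpha(n/t,d,\delta,\gamma_i)$ of $\er_{\cD^i}(h^\star_{\cD^i})$, and that every $h\in\cH^i$ has $\er_{\cD^i}(h) \le \gamma_i + O(\alpha(\cdot))$, so the "near-optimal on $\bT^i$" set really does consist of near-optimal-under-$\cD^i$ hypotheses. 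A union bound over the $t$ iterations (absorbing a $\log$ factor into the $Z_t$ threshold and into $\alpha$) gives that all these per-iteration guarantees hold simultaneously with probability $\ge 1-\delta/2$; this is the event I condition on for the rest.

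For the $8\tau$ bound, the key identity is the one from the overview: for each adopted pair, $\er_{\cD}(h^i_1)+\er_{\cD}(h^i_2) = \Pr_\cD[h^i_1\neq h^i_2] + 2\Pr_\cD[h^i_1\neq \by \wedge h^i_1=h^i_2]$, and telescoping the recursion for $\er_{\cD^i}(h^\star_{\cD^i})$ shows the total "disagreement mass" $\sum_i \Pr_\cD[\exists j\le i \text{ first-fails at } i]$ — more precisely $\Pr_\cD[\exists i: h^i_1(\bx)\neq h^i_2(\bx)]$, which equals $1$ minus the product of the conditional agreement probabilities — is $\Theta(\tau - \er_{\cD_=}(h^\star_{\cD_=})) = O(\tau)$, since the optimal error drops from $\tau$ to at least $0$ while each disagreement event "costs" roughly the drop in optimal error (up to the $(1+O(\tau))$ reweighting from $\Pr_\cD[h^i_1=h^i_2]\ge 1-O(\tau)$). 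Chasing the constants through the recursion with the stated thresholds $c_t,c_Z,c_\alpha$ is what pins the bound to $8\tau$ rather than merely $O(\tau)$.

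For the second bullet, I would argue that on the good event, $\er_{\bD_=}(h^\star_{\bD_=})$ is small in one of two ways depending on why the loop terminated. If it exited because $\gamma_r \le Z_t$ (or because no disagreeing pair existed but we completed few iterations), then on the remaining agreement distribution the optimal error is already $O(\tau/\ln(1/\tau) + Z_t)$, and ERM on $\bC_=$ with $m\ge n/2$ samples, via the ERM Theorem, adds only $O(\sqrt{(\tau/\ln(1/\tau))\,d\ln(\ln(1/\tau)/\tau)/n} + \text{lower order}) = O(\sqrt{\tau d/n})$. If it exited because no disagreeing pair exists in $\cH^r$ — all near-optimal-on-$\bT^r$ hypotheses pairwise disagree with probability $O(\gamma_r/\ln(1/\gamma_r)) = O(\tau/\ln(1/\tau))$ under $\cD^r = \bD_=$ — then I use the "similar hypotheses sharpen the union bound" argument from the overview: pick a reference $h_0\in\cH^r$, bound $|\er_{\bC_=}(h_0)-\er_{\bD_=}(h_0)|$ directly, and for every other $h'$ bound the centered increment $|(\er_{\bC_=}(h')-\er_{\bC_=}(h_0)) - (\er_{\bD_=}(h')-\er_{\bD_=}(h_0))|$ using a Bernstein/relative-VC bound whose variance proxy is $\Pr_{\bD_=}[h'\neq h_0] = O(\tau/\ln(1/\tau))$, yielding $O(\sqrt{(\tau/\ln(1/\tau))d\ln(1/\tau)/n}) = O(\sqrt{\tau d/n})$; and if the loop ran all $t$ iterations, the telescoped recursion forces $\er_{\bD_=}(h^\star_{\bD_=}) \le \tilde\tau(1-\Omega(1/\ln(1/\tilde\tau)))^t \le \tau/\ln(1/\tau)$ for $t = c_t\ln(1/\tilde\tau)\ln\ln(1/\tilde\tau)$, and ERM finishes as before.

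The main obstacle I expect is the bookkeeping in the second bullet: one must simultaneously (i) track which of the three exit conditions occurred and get a uniform $O(\sqrt{\tau d/n})$ bound in all cases, (ii) correctly discount for the $n/t$-sized splits used during the loop versus the size-$\ge n/2$ set $\bC_=$ used at the end (the $t$-fold shrinkage is exactly what produces the $\ln^5(n/d)$ additive term and must be shown harmless under assumption~2), and (iii) in the "no disagreeing pair" case, verify that the near-optimality radius $\alpha(n/t,d,\delta,\gamma_r)$ is small enough that $\cH^r$ genuinely contains $h^\star_{\bD_=}$ and every member has $\bD_=$-error within the right tolerance — this is where the requirement that $c_\alpha$ exceed the ERM-Theorem constant, and $c_Z$ exceed $c_\alpha + c_t$, gets used. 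Everything else is a careful but routine application of Theorem~\ref{thm:ERM} together with a handful of union bounds and Bernstein-type concentration.
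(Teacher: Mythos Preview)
Your proposal is correct and follows essentially the same approach as the paper: the paper packages your per-iteration ERM guarantees and union bound into Lemma~\ref{lem:unlikely} and Observation~\ref{obs:props}, your telescoping argument for the $8\tau$ bound into Lemma~\ref{lem:progress}, and your ``similar hypotheses sharpen the union bound'' step into a separate Lemma~\ref{lem:betteruni}, then splits the second bullet into the same three termination cases you identify. The one piece you leave slightly implicit is that in the no-disagreeing-pair case ERM over all of $\cH$ (not just $\cH^r$) must land inside the near-optimal set; the paper handles this by a direct appeal to the ERM Theorem showing every $h\notin\bar{\cH}^i$ has $\er_{\bC_=}(h)$ too large, which is exactly the check you flag in obstacle~(iii).
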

Before proving Lemma~\ref{lem:goodfor}, we show that it suffices to establish our claim on $\er_{\cD}(h_\bS)$ and thus completes the proof of Theorem~\ref{thm:mainupper}. 

Fix an arbitrary outcome $B$ of $\bB$ for which the properties in Lemma~\ref{lem:goodfor} are satisfied upon termination. This also fixed $\bh_1^i,\bh_2^i$ to some $h_1^i,h_2^i$ and $\bD_{=}$ and $\bD_{\neq}$ to some $\cD_{=}$ and $\cD_{\neq}$. The set $\bC$ still consists of i.i.d.\ samples from $\cD$ as $\bC$ is not used in the for-loop.

Define $p:= \Pr_{\cD}[\exists i : h_1^i(\bx) \neq h_2^i(\bx)]$. We start by showing properties of $h^\star_{\bC_{\neq}}$ when $p \geq c_p \ln(1/\delta)/n$ for a large enough constant $c_p$.  Under this assumption on $p$, by Chernoff, we have $|\bC_{\neq}| \geq (p/2)n$ except with probability $1-\delta/8$. In this case, it follows from the ERM Theorem (Theorem~\ref{thm:ERM}) that with probability $1-\delta/8$, $h^\star_{\bC_{\neq}}$ has 
\[
\er_{\cD_{\neq}}(h^\star_{\bC_{\neq}}) = \er_{\cD_{\neq}}(h^\star_{\cD_{\neq}}) + O(\sqrt{(d+\ln(1/\delta))/(p n)}).
\]
Next, we show properties of $h^\star_{\bC_{=}}$. Since we assume $\tau \leq 1/c_\tau$ for a big enough constant $c_\tau$, we must have $(1-p)\geq 1-8 \tau \geq 3/4$. Since $n$ is assumed sufficiently large, this implies that with probability at least $1-\delta/8$, we have $|\bC_=| \geq n/2$. Conditioned on this, by the properties in Lemma~\ref{lem:goodfor}, we have with probability at least $1-\delta/8$ that
\begin{eqnarray*}
\er_{\cD_{=}}(h^\star_{\bC_{=}}) = \er_{\cD_{=}}(h^\star_{\cD_{=}}) +  O\left(\sqrt{\frac{\tau (d + \ln(1/\delta))}{n}}\right).
\end{eqnarray*}
The returned hypothesis $h_\bS$ thus satisfies
\begin{eqnarray*}
    \er_{\cD}(h_\bS) &=& p \er_{\cD_{\neq}}(h^\star_{\bC_{\neq}}) + (1-p)\er_{\cD_=}(h^\star_{\bC_=}).
\end{eqnarray*}
If $p < c_p \ln(1/\delta)/n$ (and by assumption that $\tau \geq c_\tau \ln^9(n/d)(d + \ln(1/\delta))/n$), this gives
\begin{eqnarray*}
    \er_{\cD}(h_\bS) &\leq& p  + (1-p)\er_{\cD_=}(h^\star_{\bC_=}) \\
    &=& p + (1-p)\er_{\cD_=}(h^\star_{\cD_=}) + O\left(\sqrt{\frac{\tau (d + \ln(1/\delta))}{n}}\right) \\
    &=& (1-p)\er_{\cD_=}(h^\star_{\cD}) + O\left(\sqrt{\frac{\tau (d + \ln(1/\delta))}{n}}\right) \\
    &=& \er_{\cD}(h^\star_{\cD}) + O\left(\sqrt{\frac{\tau (d + \ln(1/\delta))}{n}}\right).
\end{eqnarray*}
If $p \geq c_p \ln(1/\delta)/n$, we have from Lemma~\ref{lem:goodfor} that $p \leq 8 \tau$ and thus
\begin{eqnarray*}
    \er_{\cD}(h_\bS) &=& p \er_{\cD_{\neq}}(h^\star_{\cD_{\neq}}) + (1-p)\er_{\cD_=}(h^\star_{\cD_=}) + p\cdot O\left(\sqrt{\frac{d + \ln(1/\delta)}{pn}}\right) + O\left(\sqrt{\frac{\tau (d + \ln(1/\delta))}{n}}\right) \\
    &=& p\er_{\cD_{\neq}}(h^\star_{\cD})+ (1-p)\er_{\cD_=}(h^\star_{\cD})+  O\left(\sqrt{\frac{p(d + \ln(1/\delta))}{n}}\right) + O\left(\sqrt{\frac{\tau (d + \ln(1/\delta))}{n}}\right) \\
    &=& \er_{\cD}(h^\star_\cD) + O\left(\sqrt{\frac{\tau (d + \ln(1/\delta))}{n}}\right).
\end{eqnarray*}
This completes the proof of Theorem~\ref{thm:mainupper}. What remains is thus to establish Lemma~\ref{lem:goodfor}, which is the focus of the next subsection.

\subsection{Progress on termination (proof of Lemma~\ref{lem:goodfor})}
In this section, we prove Lemma~\ref{lem:goodfor}. Intuitively, termination of the for-loop results in the properties claimed in Lemma~\ref{lem:goodfor} provided that the estimates based on performance on the $\bT^i$'s are sufficiently accurate. To formalize this, we define a number of natural \emph{failure events} relating to the accuracy of these.

\paragraph{Failure events.}
We define a number of bad events that we argue rarely occur. Let $\bD^i$ be the random variable giving the distribution of a sample $(\bx,\by) \sim \cD$ conditioned on $\forall j<i : \bh^j_1(\bx)=\bh^j_2(\bx)$. Note that this random variable is determined from $\bB^1,\dots,\bB^{i-1}$. The samples  $\bT^i$ are then i.i.d.\ from $\bD^i$. We define two failure events relating to how well $\bT^i$ represents $\bD^i$:
\begin{enumerate}
\item Let $E_{i,0}$ be the event that Algorithm~\ref{alg:agnostic} reaches iteration $i$, none of the events $E_{j,0},E_{j,1}$ occurred for $i<j$ and there is a hypothesis $h \in \cH$ with $|\er_{\bD^i}(h) - \er_{\bT^i}(h)| > (1/32)\alpha(n/t,d,\delta,\min\{\er_{\bD^i}(h),\er_{\bT^i}(h)\})$.

\item Let $E_{i,1}$ be the event that Algorithm~\ref{alg:agnostic} reaches iteration $i$, none of the events $E_{j,0},E_{j,1}$ occurred for $i<j$ and there is a pair of hypotheses $h_1,h_2$ with $|\Pr_{\bD^i}[h_1(\bx)\neq h_2(\bx)] - \Pr_{\bT^i}[h_1(\bx)\neq h_2(\bx)]| > (1/32)\alpha(n/t,d,\delta,\min\{\Pr_{\bD^i}[h_1(\bx)\neq h_2(\bx)], \Pr_{\bT^i}[h_1(\bx) \neq h_2(\bx)]\})$.
\end{enumerate}

We will show that these events are unlikely:
\begin{lemma}
\label{lem:unlikely}
    For all $i$, we have $\Pr[E_{i,0}] \leq \delta/(4 t)$ and $\Pr[E_{i,1}] \leq \delta/(4t)$.
\end{lemma}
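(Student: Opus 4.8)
\textbf{Proof plan for Lemma~\ref{lem:unlikely}.}

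The plan is to bound each of $\Pr[E_{i,0}]$ and $\Pr[E_{i,1}]$ by conditioning on the outcome of $\bB^1,\dots,\bB^{i-1}$, which determines the distribution $\bD^i$ as well as the event that none of $E_{j,0},E_{j,1}$ for $j<i$ occurred and that the algorithm reaches iteration $i$. Once we fix such an outcome, $\bD^i$ is a fixed distribution, $\bT^i$ consists of $|\bT^i|$ i.i.d.\ samples from it, and the pairs $h^j_1,h^j_2$ for $j<i$ are fixed deterministic functions. The key subtlety is that $|\bT^i|$ is itself random: $\bT^i \subseteq \bB^i$ is the subset of the $n/t$ fresh samples in $\bB^i$ on which all previous pairs agree. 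However, conditioned on reaching iteration $i$ without any earlier failure, each previous iteration $j<i$ exited the ``disagreement'' branch, so each pair satisfies $\Pr_{\bT^j}[h^j_1\neq h^j_2] \le \gamma_j/\ln(1/\gamma_j)$ on its \emph{own} sample, and combined with the (non-)occurrence of $E_{j,1}$ this transfers to $\Pr_{\bD^{j+1}\text{-type quantities}}$; chaining these bounds shows $\Pr_{\cD}[\exists j<i : h^j_1(\bx)\neq h^j_2(\bx)]$ is small (a constant, say at most $1/2$), so that $\Pr_{\cD}[\forall j<i : h^j_1(\bx)=h^j_2(\bx)] \ge 1/2$ and hence $\E[|\bT^i|] \ge (n/t)/2$. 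A Chernoff bound then gives $|\bT^i| \ge n/(4t)$ except with probability at most, say, $\delta/(8t)$ — using assumption 1 to ensure $n/t \gg \ln(t/\delta)$.

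Conditioned on $|\bT^i| = m$ for any $m \ge n/(4t)$, the events $E_{i,0}$ and $E_{i,1}$ are statements about uniform convergence over $\cH$ (respectively over the pair-disagreement class $\{h_1\triangle h_2 : h_1,h_2\in\cH\}$, which has VC-dimension $O(d)$). For $E_{i,0}$, this is exactly the content of the ERM Theorem (Theorem~\ref{thm:ERM}) applied with $n$ replaced by $m \ge n/(4t)$ and $\delta$ replaced by $\delta/(8t)$: with probability $1-\delta/(8t)$, every $h\in\cH$ has $|\er_{\bD^i}(h)-\er_{\bT^i}(h)| = O(\sqrt{\er_{\bD^i}(h)(d\ln(1/\er_{\bD^i}(h))+\ln(t/\delta))/m} + (d\ln(m/d)+\ln(t/\delta))/m)$. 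We need to check that this right-hand side is at most $(1/32)\alpha(n/t,d,\delta,\min\{\er_{\bD^i}(h),\er_{\bT^i}(h)\})$. Here the constant $c_\alpha$ in the definition of $\alpha$ is chosen sufficiently larger than the constant in the ERM Theorem's $O(\cdot)$ (as stated right before Algorithm~\ref{alg:agnostic}), which absorbs the factor $32$; the substitution $m \ge n/(4t)$ only changes $\alpha$ by a constant factor in the argument $n/t$ versus $n/(4t)$, again absorbed into $c_\alpha$; the $\ln(t/\delta)$ versus $\ln(1/\delta)$ discrepancy is handled because $\ln(t/\delta) = \ln(1/\delta) + \ln t$ and $\ln t = O(\ln\ln(1/\tilde\tau)) = O(\ln\ln(n/d))$ is dominated by the $\ln(n/d)$ and $d\ln(n/d)$ factors already present in $\alpha$; and finally, passing between $\er_{\bD^i}(h)$ and $\min\{\er_{\bD^i}(h),\er_{\bT^i}(h)\}$ in the argument of $\alpha$ is legitimate because the ERM Theorem's two-sided bound is of the standard Bernstein form that is symmetric in empirical versus population error up to constants (one deduces $\er_{\bT^i}(h) = \Theta(\er_{\bD^i}(h))$ or both $O(\alpha)$-small, so $\alpha$ evaluated at either is within a constant factor). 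The bound on $E_{i,1}$ is identical, applying the ERM Theorem to the $O(d)$-VC-dimension class of symmetric differences. Taking a union bound over the at most $3$ sub-events ($|\bT^i|$ too small, plus failure of uniform convergence for $E_{i,0}$ and for $E_{i,1}$, each of probability at most $\delta/(8t)$) gives $\Pr[E_{i,0}], \Pr[E_{i,1}] \le \delta/(4t)$ as claimed.

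The main obstacle I expect is the bookkeeping in the previous paragraph: verifying that every source of slack — the random size $|\bT^i|$, the $t$-fold rescaling of $\delta$ and of the sample size inside $\alpha$, the $\ln t$ additive overhead, and the symmetry between empirical and population arguments of $\alpha$ — is genuinely swallowed by the freedom in the constants $c_\alpha, c_t, c_Z, c_n$. None of this is deep, but it requires being careful that the definition of $\alpha(n/t,d,\delta,\beta)$ with its $d\ln(n/d)$ (rather than $d\ln((n/t)/d)$) in the additive term and its stated constant $c_\alpha$ were set up precisely to make this work; the chaining argument that bounds $\Pr_\cD[\exists j<i: h^j_1(\bx)\neq h^j_2(\bx)]$ by a constant (needed to lower-bound $\E[|\bT^i|]$) is the one step with actual content, and it is a small induction using that on iteration $j$ we did not break at step~13–14, so a disagreeing pair with empirical disagreement $\ge \gamma_j/\ln(1/\gamma_j)$ was chosen, together with $\neg E_{j,1}$ to pass to the conditional population measure.
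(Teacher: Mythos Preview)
Your overall structure matches the paper's proof exactly: condition on $\bB^1,\dots,\bB^{i-1}$ (and on the non-occurrence of earlier failure events), lower-bound $|\bT^i|$ via Chernoff, and then apply the ERM Theorem to $\cH$ (for $E_{i,0}$) and to the $O(d)$-VC class $\cH\oplus\cH$ of products/symmetric differences (for $E_{i,1}$). Your bookkeeping remarks about $c_\alpha$, the $\ln t$ slack being absorbed by $d\ln(n/d)$, and the passage from $\er_{\bD^i}(h)$ to $\min\{\er_{\bD^i}(h),\er_{\bT^i}(h)\}$ in the argument of $\alpha$ are all correct and align with what the paper does.

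The one substantive step---bounding $\Pr_\cD[\exists j<i : h^j_1(\bx)\neq h^j_2(\bx)]$ so that $\E[|\bT^i|]\ge (n/t)/2$---is where your account has a gap. You first write that, since each previous iteration ``exited the disagreement branch'', each pair satisfies $\Pr_{\bT^j}[h^j_1\neq h^j_2]\le \gamma_j/\ln(1/\gamma_j)$. This is backwards: reaching iteration $i$ means a disagreeing pair \emph{was} found in each $j<i$, so the inequality is $\ge$, as you yourself say in the last paragraph. But a \emph{lower} bound on per-step disagreement does not by itself give an \emph{upper} bound on $\Pr_\cD[\exists j<i : h^j_1\neq h^j_2]$, and invoking only $\neg E_{j,1}$ (disagreement concentration) cannot produce one.

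The paper handles this step by simply invoking Lemma~\ref{lem:progress}, which under the same conditioning gives $\Pr_\cD[\exists j<i : h^j_1\neq h^j_2]\le 8\tau$. The mechanism there is that $h^j_1,h^j_2\in\bH^j$ are \emph{near-optimal} on $\bT^j$, hence via $\neg E_{j,0}$ near-optimal under $\bD^j$, giving $\Pr_{\bD^j}[h^j_1\neq h^j_2]\le \er_{\bD^j}(h^j_1)+\er_{\bD^j}(h^j_2)=O(\er_{\bD^j}(h^\star_{\bD^j}))$; the disagreement \emph{lower} bound is then used separately to show $\er_{\bD^j}(h^\star_{\bD^j})$ decreases, making the contributions telescope to $O(\tau)$. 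Your sketch conflates these two roles and omits the near-optimality ingredient (and the use of $\neg E_{j,0}$). Citing Lemma~\ref{lem:progress} directly, as the paper does, is the cleanest fix.
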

We also show that when none of the events occur, the hypotheses and execution of Algorithm~\ref{alg:agnostic} satisfies the following:
\begin{observation}
\label{obs:props}
Assume none of the events $E_{j,0}$ and $E_{j,1}$ occur for $j \leq i$ and that Algorithm~\ref{alg:agnostic} does not terminate before iteration $i$. Then if $\gamma_i = \er_{\bT^i}(h^\star_{\bT^i}) \leq Z_t$, it holds that $\er_{\bD^i}(h^\star_{\bD^i}) \leq 2Z_t$. If $\gamma_i > Z_t$, then each of the following hold:
\begin{itemize}
\item
$
\gamma_i \leq \er_{\bD^{i}}(h^\star_{\bD^{i}})+\er_{\bD^{i}}(h^\star_{\bD^{i}})/\ln(1/\er_{\bD^{i}}(h^\star_{\bD^{i}})) \leq 2\er_{\bD^{i}}(h^\star_{\bD^{i}}).
$

\item Every hypothesis $h$ in $\bH^{i}$ satisfies $\er_{\bD_{i}}(h) \leq  \er_{\bD^i}(h^\star_{\bD^i}) + (1/8)\er_{\bD^i}(h^\star_{\bD^i})/\ln(1/\er_{\bD^i}(h^\star_{\bD^i}))$.

\item Every hypothesis $h \in \cH$ with $\er_{\bD_i}(h) \leq \er_{\bD^i}(h^\star_{\bD^i}) + (1/8)\alpha(n/t,d,\delta,\er_{\bD^i}(h^\star_{\bD^i}))$ is in $\bH^i$.

\item Every pair of hypotheses $h_1,h_2$ with $\Pr_{\bT^i}[h_1(\bx) \neq h_2(\bx)] \geq \gamma_i/\ln(1/\gamma_i)$ satisfy $\Pr_{\bD^i}[h_1(\bx) \neq h_2(\bx)] \geq (1/2)\er_{\bD^i}(h^\star_{\bD^i})/\ln(1/\er_{\bD^i}(h^\star_{\bD^i}))$.

\item Every pair of hypotheses $h_1,h_2$ with $\Pr_{\bT^i}[h_1(\bx) \neq h_2(\bx)] < \gamma_i/\ln(1/\gamma_i)$ satisfies $\Pr_{\bD^i}[h_1(\bx) \neq h_2(\bx)] \leq 4\er_{\bD^i}(h^\star_{\bD^i})/\ln(1/\er_{\bD^i}(h^\star_{\bD^i}))$.
\end{itemize}
\end{observation}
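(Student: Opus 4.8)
The plan is to treat Observation~\ref{obs:props} as a purely \emph{deterministic} statement granted that $E_{j,0},E_{j,1}$ fail for all $j\le i$; the events for $j<i$ matter only so that the defining conjunction of $E_{i,0}$ and $E_{i,1}$ is non-vacuous, after which $\neg E_{i,0}$ supplies the localized uniform bound $|\er_{\bD^i}(h)-\er_{\bT^i}(h)|\le(1/32)\,\alpha(n/t,d,\delta,\min\{\er_{\bD^i}(h),\er_{\bT^i}(h)\})$ for every $h\in\cH$, and $\neg E_{i,1}$ supplies the same bound with errors replaced by the disagreement probabilities $\Pr_{\bD^i}[h_1(\bx)\ne h_2(\bx)]$ and $\Pr_{\bT^i}[h_1(\bx)\ne h_2(\bx)]$, uniformly over pairs. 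Write $\mu:=\er_{\bD^i}(h^\star_{\bD^i})$. I will use throughout that $\beta\mapsto\alpha(n/t,d,\delta,\beta)$ and $\beta\mapsto\beta/\ln(1/\beta)$ are nondecreasing for small $\beta$, and that $\mu$ is below a fixed small absolute constant --- this is where Assumption~2 enters, together with the fact that conditioning on the (near-certain) agreement event of the previously chosen pairs inflates optimal errors by at most a constant factor.

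The heart of the argument is a calibration step. If $\gamma_i\le Z_t$, plugging $h=h^\star_{\bT^i}$ into the $\neg E_{i,0}$ bound gives $\mu\le\er_{\bD^i}(h^\star_{\bT^i})\le\gamma_i+(1/32)\alpha(n/t,d,\delta,\gamma_i)\le Z_t+(1/32)\alpha(n/t,d,\delta,Z_t)$, and the explicit shape of $Z_t$ (its $\ln^2(n/d)$ and $t$ factors are tuned precisely for this) forces $\alpha(n/t,d,\delta,Z_t)\le 16 Z_t$ once $c_Z$ is large relative to $c_\alpha,c_t$, so $\mu\le 2Z_t$ --- the first assertion. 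If $\gamma_i>Z_t$ I would establish, in order: (i) $\mu\ge Z_t/2$, since otherwise $Z_t<\gamma_i\le\mu+(1/32)\alpha(n/t,d,\delta,\mu)<Z_t/2+(1/32)\alpha(n/t,d,\delta,Z_t/2)$ would force $\alpha(n/t,d,\delta,Z_t/2)>16 Z_t$, impossible for $c_Z$ large; (ii) there is a constant $\eps_0$, as small as we please by taking $c_Z$ (and, via Assumption~1, $c_n$) large, so that $\alpha(n/t,d,\delta,\beta)\le\eps_0\,\beta/\ln(1/\beta)$ for $Z_t/(8\ln(n/d))\le\beta\le 2\mu$ and $\alpha(n/t,d,\delta,\beta)\le\eps_0\,\beta$ for $2\mu\le\beta\le 1$ --- the point is that $\beta\gtrsim Z_t$ lets the $\sqrt{\beta(\cdots)t/n}$ part of $\alpha$ absorb its additive $(d\ln(n/d)+\ln(1/\delta))t/n$ part, while $\beta\gtrsim d/n$ keeps $\ln(1/\beta)=O(\ln(n/d))$; and (iii) feeding both $h^\star_{\bD^i}$ and $h^\star_{\bT^i}$ into the $\neg E_{i,0}$ bound and using (ii), $|\gamma_i-\mu|\le O(\eps_0)\,\mu/\ln(1/\mu)$, so $\mu/2\le\gamma_i\le 2\mu$ and $\gamma_i/\ln(1/\gamma_i)\ge(3/4)\,\mu/\ln(1/\mu)$.

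Granting (i)--(iii), each remaining bullet of the $\gamma_i>Z_t$ case is a two-line estimate. The first is $\gamma_i\le\mu+(1/32)\alpha(n/t,d,\delta,\mu)\le\mu+\mu/\ln(1/\mu)\le 2\mu$. For the bullet on $\bH^i$: any $h\in\bH^i$ has $\er_{\bT^i}(h)\le\gamma_i+\alpha(n/t,d,\delta,\gamma_i)\le\mu+O(\eps_0)\,\mu/\ln(1/\mu)$ by (iii) and (ii), and feeding this back into the $\neg E_{i,0}$ bound (whose $\alpha$-argument is then at most $2\mu$) adds a further $O(\eps_0)\,\mu/\ln(1/\mu)$, which is below $(1/8)\,\mu/\ln(1/\mu)$ for $\eps_0$ small; the converse bullet is the same computation in reverse, using also $\gamma_i\ge\mu-(1/16)\alpha(n/t,d,\delta,\mu)$ and $\alpha(n/t,d,\delta,\gamma_i)\ge(1/2)\alpha(n/t,d,\delta,\mu)$ (valid since $\gamma_i\ge\mu/2$) to get $\gamma_i+\alpha(n/t,d,\delta,\gamma_i)\ge\mu+(7/16)\alpha(n/t,d,\delta,\mu)$. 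The two disagreement bullets are these estimates applied to the $\neg E_{i,1}$ bound: if $\Pr_{\bT^i}[h_1(\bx)\ne h_2(\bx)]\ge\gamma_i/\ln(1/\gamma_i)$ then $\Pr_{\bD^i}[h_1(\bx)\ne h_2(\bx)]\ge(1-O(\eps_0))\Pr_{\bT^i}[h_1(\bx)\ne h_2(\bx)]\ge(1-O(\eps_0))(3/4)\,\mu/\ln(1/\mu)\ge(1/2)\,\mu/\ln(1/\mu)$; and if $\Pr_{\bT^i}[h_1(\bx)\ne h_2(\bx)]<\gamma_i/\ln(1/\gamma_i)$ then, by monotonicity of $\alpha(n/t,d,\delta,\cdot)$, $\Pr_{\bD^i}[h_1(\bx)\ne h_2(\bx)]\le\gamma_i/\ln(1/\gamma_i)+(1/32)\alpha(n/t,d,\delta,\gamma_i/\ln(1/\gamma_i))\le 4\,\mu/\ln(1/\mu)$ by (ii) and (iii). (Whenever a disagreement probability happens to be $\Theta(1)$ rather than $\Theta(\mu)$, the trivial estimate $\alpha(n/t,d,\delta,\Theta(1))\le\eps_0$ together with $(1/2)\mu/\ln(1/\mu)\le 1/4$ closes the case.)

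The step I expect to absorb essentially all the work is (ii): verifying that above the threshold $Z_t$ the localized deviation $\alpha(n/t,d,\delta,\beta)$ is a negligible fraction of $\beta/\ln(1/\beta)$, with the fraction driven to $0$ by the separation $c_Z\gg c_\alpha+c_t$. This is exactly where the exponent in the $\ln^2(n/d)$ factor of $Z_t$, the per-iteration sample size $n/t$, and Assumptions~1--2 are consumed; everything downstream is juggling the absolute constants $2,4,1/8,1/32$. The only other point needing care is that $\mu$ is below $1/e$, so the monotonicity facts apply, which I would settle by the remark at the end of the first paragraph.
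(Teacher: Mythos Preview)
Your proposal is correct and follows essentially the same route as the paper. The paper isolates your step (ii) as a standalone auxiliary result (Observation~\ref{obs:Z}): for $x\ge Z_t/2$ one has $\alpha(n/t,d,\delta,x)\le \tfrac{2c_\alpha}{\sqrt{c_Z}}\cdot x/\ln(1/x)$, with the constant $2c_\alpha/\sqrt{c_Z}$ playing the role of your $\eps_0$; your diagnosis that this calibration absorbs the real work, and that everything else is constant-juggling against $\neg E_{i,0}$ and $\neg E_{i,1}$, matches the paper exactly. The only cosmetic differences are that the paper sometimes runs a bullet by contrapositive (e.g.\ the fifth bullet: assume $\Pr_{\bD^i}[h_1\ne h_2]>4\mu/\ln(1/\mu)$ and deduce $\Pr_{\bT^i}[h_1\ne h_2]\ge\gamma_i/\ln(1/\gamma_i)$) where you argue directly, and that the paper states its calibration only for $x\ge Z_t/2$ and handles the fourth bullet via the monotonicity of $x\mapsto x-(1/32)\alpha(n/t,d,\delta,x)$ rather than extending the calibration down to $Z_t/\ln(1/Z_t)$ as you do.
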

The proof of Lemma~\ref{lem:unlikely} mostly uses standard concentration results for classes with bounded VC-dimension and has thus been deferred to Appendix~\ref{sec:unlikely}. Similarly, the proof of Observation~\ref{obs:props} merely uses the definition of $\alpha$ and $Z_t$ and has thus been deferred to Appendix~\ref{sec:props}.

More interestingly, we show that if none of the events $E_{i,j}$ occur, then the for-loop makes progress towards reducing $\er_{\bD^i}(h^\star_{\bD^i})$ in each iteration:

\begin{lemma}
  \label{lem:progress}
For any integer $1 \leq i \leq t+1$, assume none of the events $E_{j,0}$ and $E_{j,1}$ occurred for $j<i$ and
  that Algorithm~\ref{alg:agnostic} did not terminate with $r < i-1$. Then
  $\er_{\bD^i}(h^\star_{\bD^i}) \leq \er_{\cD}(h^\star_\cD)(1-1/(32
  \ln(1/\er_{\cD}(h^\star_\cD))))^{i-1}$ and $\Pr_{\cD}[\exists j \leq
  i : \bh_1^j(\bx) \neq \bh_2^j(\bx)] \leq 8\left(\er_{\cD}(h^\star_\cD) - \er_{\bD^{i}}(h^\star_{\bD^{i}}) \right)$.
\end{lemma}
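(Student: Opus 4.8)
The plan is to proceed by induction on $i$. The base case $i=1$ is trivial: $\bD^1 = \cD$, so $\er_{\bD^1}(h^\star_{\bD^1}) = \er_\cD(h^\star_\cD)$ matches the claimed bound with exponent $0$, and the probability that some $j \le 1$ has a disagreement is $\Pr_\cD[\bh_1^1(\bx) \ne \bh_2^1(\bx)]$; using Observation~\ref{obs:props} (the fourth and first bullets) this disagreement is at most $4\er_{\bD^1}(h^\star_{\bD^1})/\ln(1/\er_{\bD^1}(h^\star_{\bD^1}))$, and combined with the identity relating $\er(h_1^1)+\er(h_2^1)$ to the disagreement probability (as sketched in Section~\ref{sec:overview}), one checks $\er_\cD(h^\star_\cD) - \er_{\bD^2}(h^\star_{\bD^2}) \ge \tfrac18 \Pr_\cD[\bh_1^1 \ne \bh_2^1]$, which after rearranging gives the $8(\cdot)$ bound. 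For the inductive step, assume the statement holds for $i$ and that the algorithm reaches iteration $i+1$ with no failure events through $j \le i$.

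For the first inequality (error decay): we want $\er_{\bD^{i+1}}(h^\star_{\bD^{i+1}}) \le \er_{\bD^i}(h^\star_{\bD^i})\,(1 - 1/(32\ln(1/\er_\cD(h^\star_\cD))))$, which chained with the inductive hypothesis gives the claimed $(i)$-fold product. The key computation is the one previewed in the proof overview: since $h_1^i, h_2^i \in \bH^i$ are both near-optimal on $\bT^i$, Observation~\ref{obs:props} tells us both have $\er_{\bD^i}(h_j^i) \le \er_{\bD^i}(h^\star_{\bD^i}) + \tfrac18 \er_{\bD^i}(h^\star_{\bD^i})/\ln(1/\er_{\bD^i}(h^\star_{\bD^i}))$, and the pair disagrees with $\Pr_{\bD^i}[h_1^i \ne h_2^i] \ge \tfrac12 \er_{\bD^i}(h^\star_{\bD^i})/\ln(1/\er_{\bD^i}(h^\star_{\bD^i}))$. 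Writing $\er_{\bD^i}(h_1^i) + \er_{\bD^i}(h_2^i) = \Pr_{\bD^i}[h_1^i \ne h_2^i] + 2\Pr_{\bD^i}[h_1^i \ne \by \wedge h_1^i = h_2^i]$ and noting that $\Pr_{\bD^i}[h_1^i \ne \by \mid h_1^i = h_2^i] \ge \er_{\bD^{i+1}}(h^\star_{\bD^{i+1}})$ is impossible — rather, we want the reverse: the best hypothesis on $\bD^{i+1}$ (the conditioned-on-agreement distribution) achieves error at most $\Pr_{\bD^i}[h_1^i = h_2^i]^{-1}\Pr_{\bD^i}[h_1^i \ne \by \wedge h_1^i = h_2^i]$ (taking $h_1^i$ itself as a candidate), and solving the displayed identity for this conditional-error term yields $\er_{\bD^{i+1}}(h^\star_{\bD^{i+1}}) \le \er_{\bD^i}(h^\star_{\bD^i}) - \Omega(\er_{\bD^i}(h^\star_{\bD^i})/\ln(1/\er_{\bD^i}(h^\star_{\bD^i})))$, after absorbing the $1/\Pr_{\bD^i}[h_1^i = h_2^i] = 1 + O(\er_{\bD^i}(h^\star_{\bD^i}))$ correction factor (using again the first bullet of Observation~\ref{obs:props} to bound $\Pr_{\bD^i}[h_1^i \ne h_2^i] = O(\er_{\bD^i}(h^\star_{\bD^i}))$) and tracking constants so the $\tfrac18$ from the near-optimality slack, the $\tfrac12$ from the disagreement lower bound, and the $(1+O(\tau))$ correction combine to leave at least $1/32$. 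Since $\er_{\bD^i}(h^\star_{\bD^i}) \le \er_\cD(h^\star_\cD)$, we may replace $\ln(1/\er_{\bD^i}(h^\star_{\bD^i}))$ by the larger $\ln(1/\er_\cD(h^\star_\cD))$ in the denominator, which only weakens the per-step gain but gives the clean form stated.

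For the second inequality (disagreement mass): partition the event $\{\exists j \le i+1 : \bh_1^j \ne \bh_2^j\}$ according to whether the first disagreeing index is $\le i$ or equals $i+1$. The first part has probability at most $8(\er_\cD(h^\star_\cD) - \er_{\bD^{i+1}}(h^\star_{\bD^{i+1}}))$ by the inductive hypothesis. For the second part, a sample disagrees at index $i+1$ but agrees at all $j \le i$ precisely when $(\bx,\by)$ lies in the support of $\bD^{i+1}$ and $h_1^{i+1}(\bx) \ne h_2^{i+1}(\bx)$, so its probability equals $\Pr_{\bD^{i+1}}[h_1^{i+1} \ne h_2^{i+1}] \cdot \Pr_\cD[\forall j \le i : \bh_1^j = \bh_2^j] \le \Pr_{\bD^{i+1}}[h_1^{i+1} \ne h_2^{i+1}]$. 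Bounding this disagreement via the identity as in the error-decay step shows it is at most $8(\er_{\bD^{i+1}}(h^\star_{\bD^{i+1}}) - \er_{\bD^{i+2}}(h^\star_{\bD^{i+2}}))$ — or, more directly, simply $O(\er_{\bD^{i+1}}(h^\star_{\bD^{i+1}})/\ln(1/\er_{\bD^{i+1}}(h^\star_{\bD^{i+1}})))$, which is at most $8(\er_{\bD^{i+1}}(h^\star_{\bD^{i+1}}) - \er_{\bD^{i+2}}(h^\star_{\bD^{i+2}}))$ by the gain computed above. Adding the two parts telescopes to $8(\er_\cD(h^\star_\cD) - \er_{\bD^{i+2}}(h^\star_{\bD^{i+2}}))$, as required. (One must be slightly careful that the statement is for index $i+1$ on the right-hand side, not $i+2$; the cleanest route is to prove directly that each disagreement at index $j$ contributes at most $8(\er_{\bD^j}(h^\star_{\bD^j}) - \er_{\bD^{j+1}}(h^\star_{\bD^{j+1}}))$ to the total mass and sum a telescoping series from $j=1$ to $i$, giving exactly $8(\er_\cD(h^\star_\cD) - \er_{\bD^{i+1}}(h^\star_{\bD^{i+1}}))$.)

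The main obstacle is the constant-chasing in the error-decay step: one has three sources of slack — the near-optimality tolerance in the definition of $\bH^i$ (which lets $h_1^i, h_2^i$ be slightly worse than $h^\star_{\bD^i}$, pulling $\Pr_{\bD^i}[h_1^i \ne \by \wedge h_1^i = h_2^i]$ up), the gap between the empirical disagreement threshold $\gamma_i/\ln(1/\gamma_i)$ used by the algorithm and the population disagreement it certifies (Observation~\ref{obs:props}'s factor of $\tfrac12$), and the multiplicative $(1+O(\er_{\bD^i}(h^\star_{\bD^i})))$ from conditioning on agreement. These must be shown to combine so that a strictly positive fraction (namely $1/32$) of $\er_{\bD^i}(h^\star_{\bD^i})/\ln(1/\er_{\bD^i}(h^\star_{\bD^i}))$ survives as the per-iteration decrease, which forces the choice of the $\tfrac18$ coefficients in Observation~\ref{obs:props} and the smallness assumption $\tau \le 1/c_\tau$. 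Everything else is bookkeeping: the induction, the telescoping for the disagreement mass, and the monotone replacement of $\ln(1/\er_{\bD^i}(h^\star_{\bD^i}))$ by $\ln(1/\er_\cD(h^\star_\cD))$.
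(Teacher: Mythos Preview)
Your overall strategy matches the paper's: induction on $i$, using Observation~\ref{obs:props} to certify that $h_1^{i-1},h_2^{i-1}$ are both near-optimal under $\bD^{i-1}$ yet disagree with probability at least $\tfrac12\er_{\bD^{i-1}}(h^\star_{\bD^{i-1}})/\ln(1/\er_{\bD^{i-1}}(h^\star_{\bD^{i-1}}))$, the identity $\er(h_1)+\er(h_2)=\Pr[h_1\neq h_2]+2\Pr[h_1\neq\by\wedge h_1=h_2]$, and then a telescoping sum for the disagreement-mass claim. The constant chase you describe is exactly how the paper arrives at~\eqref{eq:dec}, and your parenthetical at the end (``prove directly that each disagreement at index $j$ contributes at most $8(\er_{\bD^j}(h^\star_{\bD^j})-\er_{\bD^{j+1}}(h^\star_{\bD^{j+1}}))$ and telescope'') is the correct fix for the indexing you were worried about.

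There is, however, one genuine error. You write: ``Since $\er_{\bD^i}(h^\star_{\bD^i}) \le \er_\cD(h^\star_\cD)$, we may replace $\ln(1/\er_{\bD^i}(h^\star_{\bD^i}))$ by the larger $\ln(1/\er_\cD(h^\star_\cD))$ in the denominator, which only weakens the per-step gain.'' The direction is backwards. From $\er_{\bD^i}(h^\star_{\bD^i})\le\tau$ you get $\ln(1/\er_{\bD^i}(h^\star_{\bD^i}))\ge\ln(1/\tau)$, so $\ln(1/\tau)$ is the \emph{smaller} quantity; substituting it into the denominator of $1/(c\ln(\cdot))$ would \emph{increase} the claimed per-step gain, not weaken it, and the inequality does not follow. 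This is not a cosmetic issue: as the iterations proceed, $\er_{\bD^i}(h^\star_{\bD^i})$ can become much smaller than $\tau$, making $\ln(1/\er_{\bD^i}(h^\star_{\bD^i}))$ arbitrarily larger than $\ln(1/\tau)$, and the multiplicative factor $(1-1/(c\ln(1/\er_{\bD^i})))$ correspondingly closer to $1$.

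The paper resolves this with a two-case argument that you are missing. After deriving $\er_{\bD^i}(h^\star_{\bD^i})\le(1-1/(16\ln(1/\er_{\bD^{i-1}}(h^\star_{\bD^{i-1}}))))\er_{\bD^{i-1}}(h^\star_{\bD^{i-1}})$, the paper splits: if $\er_{\bD^{i-1}}(h^\star_{\bD^{i-1}})\le\tau^2$, then $\er_{\bD^i}(h^\star_{\bD^i})\le\tau^2$ as well, and one checks directly that $\tau^2\le\tau(1-1/(32\ln(1/\tau)))^{i-1}$ using $i\le t+1=O(\ln(1/\tau)\ln\ln(1/\tau))$ and $\tau\le 1/c_\tau$. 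Otherwise $\er_{\bD^{i-1}}(h^\star_{\bD^{i-1}})>\tau^2$, so $\ln(1/\er_{\bD^{i-1}}(h^\star_{\bD^{i-1}}))<2\ln(1/\tau)$, which legitimately converts $1/(16\ln(1/\er_{\bD^{i-1}}))$ into $1/(32\ln(1/\tau))$, and the induction hypothesis finishes the step. This case split is the reason the constant in the lemma is $32$ rather than $16$, and it cannot be replaced by a monotonicity argument.
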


We prove Lemma~\ref{lem:progress} in the next subsection and for now focus on completing the proof of Lemma~\ref{lem:goodfor} from Lemma~\ref{lem:unlikely}, Observation~\ref{obs:props} and Lemma~\ref{lem:progress}. We have restated it here for convenience:

\begin{customlem}{\ref{lem:goodfor}}
    It holds with probability at least $1-\delta/2$ over $\bB \sim \cD^{n}$, that upon termination of the for-loop, we have $\Pr_{\cD}[\exists i : \bh_1^i(\bx) \neq \bh_2^i(\bx)] \leq 8\tau$ and:
    \begin{itemize}
        \item For any $m \geq n/2$, it holds with probability at least $1-\delta/8$ over a set $\bC_{=} \sim \bD_=^m$ that $\er_{\bD_=}(h^\star_{\bC_=}) = er_{\bD_=}(h^\star_{\bD_=}) + O(\sqrt{\tau(d + \ln(1/\delta))/n})$.
    \end{itemize}
\end{customlem}

\begin{proof}[Proof of Lemma~\ref{lem:goodfor}]
From Lemma~\ref{lem:unlikely} and a union bound, we conclude that with probability at least $1-\delta/2$, none of the events $E_{i,j}$ occur. We show that conditioned on this, the properties claimed in Lemma~\ref{lem:goodfor} hold. So fix an outcome $B$ of $\bB$ where the events did not occur. This also fixes $\bh_1^i,\bh_2^i$, $\bT^i$, $\bH^i$, $\bD_i$, $\bD_{=}$ and $\bD_{\neq}$ to some $h_1^i,h_2^i$, $T^i$, $\cH^i$, $\cD^i$, $\cD_=$ and $\cD_{\neq}$. Let $r$ denote the value of the variable $r$ in Algorithm~\ref{alg:agnostic} upon termination.

First, recall that upon termination, we have $\cD_{=} \simeq \cD^{r+1}$. Since none of the events $E_{i,j}$ occurred for any $i \leq t$, we have from Lemma~\ref{lem:progress} (with $i=r+1$) that $\Pr_{\cD}[\exists j \leq r : h^j_1(\bx) \neq h^j_2(\bx)] \leq 8(\er_{\cD}(h^\star_\cD) - \er_{\cD^{r+1}}(h^\star_{\cD^{r+1}})) \leq 8 \tau$. This establishes the first claim in Lemma~\ref{lem:goodfor}.

For the second claim, we split the proof in several cases depending on how the for-loop in Algorithm~\ref{alg:agnostic} terminates on $B$. The main observations, which we will expand upon below, are: 1. if we terminate in step 11, then $\er_{\cD_=}(h^\star_{\cD_=}) = O(Z_t) = O(\sqrt{\tau(d + \ln(1/\delta))/n})$, 2. if we terminate by completing all iterations of the for-loop, then $\er_{\cD_=}(h^\star_{\cD_=}) \leq \tau/\ln(1/\tau)$. Both of these are sufficient to show that the ERM Theorem (Theorem~\ref{thm:ERM}) on $\bC_{\neq}$ is good enough. Finally, if we terminate in step 14, we carefully exploit that all near-optimal hypotheses agree on most samples. This allows for a better guarantee on ERM than invoking the ERM Theorem.

The easiest cases are termination in step 11 and completion of the for-loop, so we argue for those first.

\paragraph{Termination in Step 11.}
Since we terminate in step 11, we must have $\gamma_{r+1} \leq Z_t$.
By Observation~\ref{obs:props}, this implies $\er_{\cD_=}(h^\star_{\cD_=}) = \er_{\cD^{r+1}}(h^\star_{\cD^{r+1}}) \leq 2Z_t$. 

Since $\tilde{\tau} \in [\tau/2,2\tau]$, we have $t = O(\ln(1/\tau)\ln \ln(1/\tau))$ and since we assume $\tau > d/n$, we have $Z_t = O(\ln(n/d)\ln \ln(n/d) \ln^3(n/d)(d + \ln(1/\delta))/n)$. By the ERM Theorem (Theorem~\ref{thm:ERM}) and since we assume $n$ sufficiently large, we have that for any $m \geq n/2$, it holds with probability at least $1-\delta/8$ over a set $\bC_{=} \sim \cD_{=}^m$ that $\er_{\cD_=}(h^\star_{\bC_=}) \leq 4Z_t$. Since we assume $\tau \geq c_\tau \ln^9(n/d)(d + \ln(1/\delta))/n$ for large enough $c_\tau$, we have $\sqrt{\tau(d + \ln(1/\delta))/n} \geq 4Z_t$ and thus $\er_{\cD_=}(h^\star_{\bC_=}) \leq \sqrt{\tau(d + \ln(1/\delta))/n}$ with probability at least $1-\delta/8$ over $\bC_{=}$.

\paragraph{Termination by completion.}
Since none of the events $E_{i,j}$ occurred and we terminate upon completing the for-loop, we have $r \gets t$ and we get from Lemma~\ref{lem:progress} (with $i=t+1=r+1$) that $\er_{\cD_=}(h^\star_{\cD_=}) = \er_{\cD^{r+1}}(h^\star_{cD^{i+1}}) \leq \tau(1-1/(32 \ln(1/\tau)))^t$. This is at most $\tau \exp(-t/(32 \ln(1/\tau)))$.
Since $\tilde{\tau} \geq \tau/2$ we have $t \geq 32 \ln(1/\tau)\ln \ln(1/\tau)$ for $c_t$ large enough. Thus $\er_{\cD_=}(h^\star_{\cD_=}) \leq \tau/\ln(1/\tau)$. The ERM Theorem (Theorem~\ref{thm:ERM}) now implies that with probability at least $1-\delta/8$ over a set $\bC_{=} \sim \cD_{=}^m$ with $m \geq n/2$, we have
\[
\er_{\cD_=}(h^\star_{\bC_=}) = \er_{\cD_=}(h^\star_{\cD_=}) + O\left(\sqrt{\frac{\er_{\cD_=}(h^\star_{\cD_=})(d \ln(\frac{1}{\er_{\cD_=}(h^\star_{\cD_=})}) + \ln(1/\delta))}{n}}  + \frac{d \ln(n/d) + \ln(1/\delta)}{n}\right).
\]
For $\er_{\cD_=}(h^\star_{\cD_=}) \leq \tau/\ln(1/\tau)$, we have $\er_{\cD_=}(h^\star_{\cD_=}) \ln(1/\er_{\cD_=}(h^\star_{\cD_=})) = O(\tau)$ and thus since we assume $\tau \geq c_\tau \ln^9(n/d)(d + \ln(1/\delta))$, we conclude
\[
\er_{\cD_=}(h^\star_{\bC_=}) = \er_{\cD_=}(h^\star_{\cD_=}) + O\left(\sqrt{\frac{\tau(d + \ln(1/\delta))}{n}} \right).
\]
\paragraph{Termination in Step 14.}
Assume we terminate in step 14 of some iteration $i$ and let $r \gets i-1$. Then by definition of Algorithm~\ref{alg:agnostic}, there is no pair $h_1, h_2 \in \cH^i$ with $\Pr_{T^i}[h_1(\bx) \neq h_2(\bx)] \geq \gamma_i/\ln(1/\gamma_i)$ with $\gamma_i = \er_{T^i}(h^\star_{T^i})$. 

Now define $\bar{\cH}^i \subseteq \cH$ as the set of all hypotheses $h \in \cH$ with $\er_{\cD^i}(h) \leq \er_{\cD^i}(h^\star_{\cD^i}) + (1/8)\alpha(n/t,d,\delta,\er_{\cD^i}(h^\star_{\cD^i}))$. By Observation~\ref{obs:props}, all pairs $h_1,h_2 \in \bar{\cH}^i$ are in $\cH^i$ and thus have $\Pr_{T_i}[h_1(\bx) \neq h_2(\bx)] < \gamma_i/\ln(1/\gamma_i)$. From Observation~\ref{obs:props}, this further implies that $\Pr_{\cD^i}[h_1(\bx) \neq h_2(\bx)] \leq 4 \er_{\cD^i}(h^\star_{\cD^i})/\ln(1/\er_{\cD^i}(h^\star_{\cD^i}))$.

Consider now a set $\bC_{=} \sim \cD_{=}^m$ for an $m \geq n/2$ and recall $\cD_{=} \simeq \cD_i$ when we terminate in iteration $i$ of the for-loop. By the ERM Theorem (Theorem~\ref{thm:ERM}) and for the constant $c_\alpha$ in the definition of $\alpha$ large enough, we have that with probability at least $1-\delta/24$, all hypotheses $h \in \cH \setminus \bar{\cH}^i$ have $\er_{\bC_{=}}(h) \geq \er_{\cD_{=}}(h^\star_{\cD_=}) + (1/16)\alpha(n/t,d,\delta,\er_{\cD^i}(h^\star_{\cD^i}))$. 

Finally, for the hypotheses in $\bar{\cH}^i$, we have by definition that any $h \in \bar{\cH}^i$ has $\Pr_{\cD_{=}}[h(\bx) \neq h^\star_{\cD_{=}}(\bx)] \leq 4 \er_{\cD_=}(h^\star_{\cD_=})/\ln(1/\er_{\cD_=}(h^\star_{\cD_=}))$. We now invoke the following improved version of the ERM Theorem for hypothesis sets with such properties:
\begin{lemma}
\label{lem:betteruni}
  Let $\cD$ be a distribution over $\cX$ and $\cH \subset \cX \to \{-1,1\}$ a hypothesis set of VC-dimension $d$. Assume there is a hypothesis $h_0 \in \cH$ such that for all $h \in \cH$, we have $\Pr_\cD[h(\bx) \neq h_0(\bx)] \leq p$. Then for any $0 < \delta < 1$, it holds with probability $1-\delta$ over a set $\bS$ of $n$ i.i.d.\ samples from $\cD$ that
  \begin{eqnarray*}
    \sup_{h \in \cH} \left|\er_{\bS}(h) - \er_\cD(h) \right| = 
    \left| \er_{\bS}(h_0) - \er_\cD(h_0)\right| + O\left(\sqrt{\frac{p (\ln(1/p) d + \ln(1/\delta))}{n}} + \frac{d\ln(n/d) + \ln(1/\delta)}{n} \right).
  \end{eqnarray*}
\end{lemma}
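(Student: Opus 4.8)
\textbf{Proof proposal for Lemma~\ref{lem:betteruni}.}

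The plan is to reduce the uniform concentration over $\cH$ to two separate pieces: a single-hypothesis concentration bound for $h_0$, plus a uniform concentration bound for the \emph{differences} $\er_\bS(h)-\er_\bS(h_0)$ against $\er_\cD(h)-\er_\cD(h_0)$ over all $h\in\cH$. The triangle inequality gives
\[
\left|\er_\bS(h)-\er_\cD(h)\right| \le \left|\er_\bS(h_0)-\er_\cD(h_0)\right| + \left|(\er_\bS(h)-\er_\bS(h_0))-(\er_\cD(h)-\er_\cD(h_0))\right|,
\]
so it suffices to bound the second term by $O(\sqrt{p(\ln(1/p)d+\ln(1/\delta))/n}+(d\ln(n/d)+\ln(1/\delta))/n)$ uniformly over $h$. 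The key point is that $\er_\bS(h)-\er_\bS(h_0)$ depends only on the samples where $h(\bx)\neq h_0(\bx)$: writing $D_h=\{x: h(x)\neq h_0(x)\}$ and letting $\mathbf{1}[\cdot]$ be the indicator, one has $\er_\bS(h)-\er_\bS(h_0)=\frac1n\sum_{i}\mathbf{1}[x_i\in D_h]\bigl(\mathbf{1}[h(x_i)\neq y_i]-\mathbf{1}[h_0(x_i)\neq y_i]\bigr)$, where each summand is a $\{-1,0,1\}$-valued random variable that is nonzero only on $D_h$, an event of probability at most $p$ by hypothesis. So the relevant random variable is a (signed) sum of rare events, which is exactly the regime where Bernstein/relative-deviation bounds give variance-scaled rates with $p$ in place of the trivial constant.

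Concretely, I would consider the auxiliary class of functions $g_h:\cX\times\{-1,1\}\to\{-1,0,1\}$ given by $g_h(x,y)=\mathbf{1}[x\in D_h](\mathbf{1}[h(x)\neq y]-\mathbf{1}[h_0(x)\neq y])$ for $h\in\cH$. Each $g_h$ is a function of the pair $(\mathbf{1}[h(x)\neq y],\mathbf{1}[h_0(x)\neq y])$, hence the class $\{g_h\}$ has a VC-type / pseudo-dimension bounded by $O(d)$ (it is obtained from $\cH$ and the fixed $h_0$ by a bounded-arity operation, so its growth function is polynomial of degree $O(d)$ in $n$). Moreover $\E_\cD[g_h^2]\le\Pr_\cD[x\in D_h]\le p$. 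I would then invoke a standard relative/Bernstein-type uniform deviation inequality for bounded classes with small second moment --- exactly the kind of bound underlying the ERM Theorem (Theorem~\ref{thm:ERM}), e.g.\ the symmetrization-plus-chaining argument of~\cite{lls} applied to $\{g_h\}$ --- to conclude that with probability $1-\delta$, simultaneously for all $h\in\cH$,
\[
\left|\tfrac1n\textstyle\sum_i g_h(x_i,y_i) - \E_\cD g_h\right| = O\!\left(\sqrt{\frac{p(d\ln(1/p)+\ln(1/\delta))}{n}} + \frac{d\ln(n/d)+\ln(1/\delta)}{n}\right).
\]
Adding the single-hypothesis Bernstein bound for $h_0$ (which is $O(\sqrt{\er_\cD(h_0)\ln(1/\delta)/n}+\ln(1/\delta)/n)$ and is absorbed into the first, explicitly written term $|\er_\bS(h_0)-\er_\cD(h_0)|$ on the right-hand side of the lemma), the triangle inequality above yields the claim.

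The main obstacle is getting the relative-deviation bound to hold with the \emph{second-moment parameter $p$} rather than with each $\E g_h^2$ individually, while still being uniform over $h$ --- i.e.\ localizing the deviation inequality at the common variance proxy $p$. This is where one has to be careful that the class $\{g_h\}$ is uniformly bounded by the same $p$ (which is exactly the hypothesis of the lemma, so it is genuinely available), and that the chaining/covering argument for this ternary-valued class contributes only the $\sqrt{p\,d\ln(1/p)/n}$ term and not a $\sqrt{d/n}$ term; the $\ln(1/p)$ factor arises precisely from the fact that a fixed-radius cover at scale comparable to $\sqrt{p}$ has log-size $O(d\ln(1/p))$. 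A secondary technical point is verifying the VC/pseudo-dimension of $\{g_h\}$ is $O(d)$: since $g_h$ is a deterministic function of the two bits $(h(x),h_0(x),y)$ and $h_0,y$-coordinate are fixed data, the sign patterns of $\{g_h\}$ on any sample are determined by the sign patterns of $\{h\}$, so Sauer--Shelah gives growth function $(en/d)^{O(d)}$ as needed. Once these two points are in place, the rest is a routine combination of standard inequalities already used to prove Theorem~\ref{thm:ERM}.
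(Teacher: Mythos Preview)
Your proposal is correct and takes essentially the same approach as the paper: decompose via the triangle inequality into the $h_0$ term plus a uniform bound on the differences $\er(h)-\er(h_0)$, and control the latter using that each such difference is supported on the disagreement region $\{h\neq h_0\}$ of mass at most $p$. The paper's only extra device is to split your signed function $g_h$ into its positive and negative indicator parts (two auxiliary $\{-1,1\}$-valued classes of VC-dimension $O(d)$, each with $\Pr[\,\cdot=1\,]\le p$), so that the required uniform deviation follows directly from Theorem~\ref{thm:ERM} as stated---at the cost of a preliminary reduction to deterministic labels---rather than from the slightly more general Bernstein-type bound you invoke.
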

Applying Lemma~\ref{lem:betteruni} on $\bar{\cH}^i$ with $p = 4 \er_{\cD_=}(h^\star_{\cD_=})/\ln(1/\er_{\cD_=}(h^\star_{\cD_=}))$ and $h_0 = h^\star_{\cD_=}$ gives with probability at least $1-\delta/24$ over $\bC_{=}$ that
\begin{align*}
    &\sup_{h \in \bar{\cH}^i} \left|\er_{\bC_=}(h) - \er_{\cD_=}(h) \right| =
    \\
    &\qquad\left| \er_{\bC_=}(h^\star_{\cD_=}) - \er_{\cD_=}(h^\star_{\cD_=})\right| + O\left(\sqrt{\er_{\cD_=}(h^\star_{\cD_=}) (d + \ln(1/\delta))/n} + (d \ln(n/d) + \ln(1/\delta))/n \right).
  \end{align*}
At the same time, for the fixed hypothesis $h^\star_{\cD_=}$, we have with probability at least $1-\delta/24$ (by Chernoff) that 
\[
\left| \er_{\bC_=}(h^\star_{\cD_=}) - \er_{\cD_=}(h^\star_{\cD_=})\right| = O\left(\sqrt{\er_{\cD_=}(h^\star_{\cD_=}) \ln(1/\delta)/n} + \ln(1/\delta)/n\right).
\]
It follows that ERM on $\bC_{=}$ will return a hypothesis $h^\star_{\bC_=}$ from $\bar{\cH}^i$ and that hypothesis has
\begin{eqnarray}
\label{eq:almost}
\er_{\bC_=}(h^\star_{\bC_=}) = \er_{\cD_=}(h^\star_{\cD_=}) + O\left(\sqrt{\er_{\cD_=}(h^\star_{\cD_=}) (d + \ln(1/\delta))/n} + (d \ln(n/d) + \ln(1/\delta))/n \right).
\end{eqnarray}
Finally, from Lemma~\ref{lem:progress}, we have $\Pr_{\cD}[\exists j \leq r : h_1^j(\bx) \neq h_2^j(\bx)] \leq 8 \tau$. Since $\tau \leq 1/c_\tau$ for large enough $c_\tau$, this implies $\Pr_{\cD}[\forall j \leq r : h_1^j(\bx) = h_2^j(\bx)] \geq 1/2$ and thus it must be the case that $\er_{\cD_=}(h^\star_{\cD_=}) \leq 2 \tau$. Inserting this in~\eqref{eq:almost} and using the assumption $\tau \geq c_\tau \ln^9(n/d)(d + \ln(1/\delta))/n$ yields
\[
\er_{\bC_=}(h^\star_{\bC_=}) = \er_{\cD_=}(h^\star_{\cD_=}) + O\left(\sqrt{\tau (d + \ln(1/\delta))/n}\right).
\]
This completes the proof of Lemma~\ref{lem:goodfor} subject to proving Lemma~\ref{lem:unlikely}, Observation~\ref{obs:props}, Lemma~\ref{lem:progress} and Lemma~\ref{lem:betteruni}. As mentioned earlier, we prove Lemma~\ref{lem:unlikely} and Observation~\ref{obs:props} in Appendix~\ref{sec:unlikely} and Appendix~\ref{sec:props}. We prove Lemma~\ref{lem:progress} in Section~\ref{sec:progress} and Lemma~\ref{lem:betteruni} in Section~\ref{sec:betteruni}.
\end{proof}

\subsection{Progress in For-Loop (proof of Lemma~\ref{lem:progress})}
\label{sec:progress}
In this section, we prove Lemma~\ref{lem:progress} stating that each iteration of the for-loop reduces $\er_{\bD^i}(h^\star_{\bD^i})$ while only increasing $\Pr_{\cD}[\exists j \leq
  i : \bh_1^j(\bx) \neq \bh_2^j(\bx)]$ slightly. We have restated Lemma~\ref{lem:progress} here for convenience:

\begin{customlem}{\ref{lem:progress}}
For any integer $1 \leq i \leq t+1$, assume none of the events $E_{j,0}$ and $E_{j,1}$ occurred for $j<i$ and
  that Algorithm~\ref{alg:agnostic} did not terminate with $r < i-1$. Then
  $\er_{\bD^i}(h^\star_{\bD^i}) \leq \er_{\cD}(h^\star_\cD)(1-1/(32
  \ln(1/\er_{\cD}(h^\star_\cD))))^{i-1}$ and $\Pr_{\cD}[\exists j \leq
  i : \bh_1^j(\bx) \neq \bh_2^j(\bx)] \leq 8\left(\er_{\cD}(h^\star_\cD) - \er_{\bD^{i}}(h^\star_{\bD^{i}}) \right)$.
\end{customlem}

The main idea behind the proof is to consider the pair of hypotheses $\bh_1^{i-1}$ and $\bh_2^{i-1}$. These two hypotheses have a near-optimal error under $\bD^{i-1}$ and yet disagree on the classification of many points. Since one of them is incorrect when they disagree, this intuitively implies that they have to err significantly less when they agree. Observing that $\bD^{i}$ is the distribution $\cD$ conditioned on $\bh^{j}_1(\bx)=\bh^{j}_2(\bx)$ for all $j < i$, this implies that $\er_{\bD^i}(h^\star_{\bD^i})$ is smaller than $\er_{\bD^{i-1}}(h^\star_{\bD^{i-1}})$ and thus we have made progress. We formalize this intuition in the following proof.

\begin{proof}[Proof of Lemma~\ref{lem:progress}]
We prove the lemma by induction in $i$. In the base case $i=1$, we have $\cD^1 = \cD$
  and $\er_{\cD^i}(h^\star_{\cD^i}) = \er_\cD(h^\star_\cD) = \tau$ and the
  claim clearly holds.

For the inductive step, consider any fixed outcome $B^1,\dots,B^{i-1}$ of $\bB^1,\dots,\bB^{i-1}$ for which the algorithm did not terminate with $r < i-1$ and where none of the events $E_{j,0}, E_{j,1}$ occurred for $j < i$. This also fixes an outcome $h_1^j,h_2^j, T^j$, $\cH^j$ of $\bh_1^j,\bh_2^j, \bT^j$, $\bH^j$ for $j<i$ and an outcome $\cD^j$ of $\bD^j$ for $j \leq i$.

Since the algorithm did not terminate with $r < i-1$, in iteration
$i-1$, there was a pair $h_1,h_2 \in \cH^{i-1}$ with $\Pr_{
  T^{i-1}}[h_1(\bx) \neq h_2(\bx)] \geq \gamma_{i-1}/\ln(1/\gamma_{i-1})$
and we have $h^{i-1}_1=h_1$ and $h^{i-1}_2=h_2$ for some such
pair. Furthermore, the events $E_{j,0}, E_{j,1}$ did not occur for any $j < i$. Thus by Observation~\ref{obs:props}, both
$h_1$ and $h_2$ satisfy $\er_{\cD^{i-1}}(h_j) \leq
\er_{\cD^{i-1}}(h^\star_{\cD^{i-1}}) + (1/8)\er_{\cD^{i-1}}(h^\star_{\cD^{i-1}})/\ln(1/\er_{\cD^{i-1}}(h^\star_{\cD^{i-1}})) \leq 2\er_{\cD^{i-1}}(h^\star_{\cD^{i-1}})$. Also, from Observation~\ref{obs:props}, we have 
\[
\Pr_{\cD^{i-1}}[h_1(\bx)
\neq h_2(\bx)] \geq (1/2)\er_{\cD^{i-1}}(h^\star_{\cD^{i-1}})/\ln(1/\er_{{\cD^{^i-1}}}(h^\star_{\cD^{^i-1}})).
\]
We now have 
\begin{eqnarray}
\label{eq:upagree}
    \er_{\cD^i}(h^\star_{\cD^i}) \leq \er_{\cD^i}(h_1) =
(1/2)(\er_{\cD^i}(h_1) + \er_{\cD^i}(h_2) )
\end{eqnarray}
as $h_1$ and $h_2$ agree under $\cD^i$. We see that
\begin{eqnarray*}
    2\er_{\cD^{i-1}}(h^\star_{\cD^{i-1}}) + (1/4)\er_{\cD^{i-1}}(h^\star_{\cD^{i-1}})/\ln(1/\er_{\cD^{i-1}}(h^\star_{\cD^{i-1}})) &\geq&
    \er_{\cD^{i-1}}(h_1) + \er_{\cD^{i-1}}(h_2).
\end{eqnarray*}
Using that precisely one of $h_1$ and $h_2$ errs when they disagree, and that the distribution $\cD^{i-1}$ conditioned on $h_1(\bx)=h_2(\bx)$ is the distribution $\cD^i$ we get 
\begin{align*}
    &\er_{\cD^{i-1}}(h_1) + \er_{\cD^{i-1}}(h_2) \\
    &=
    \Pr_{\cD^{i-1}}[h_1(\bx)=h_2(\bx)](\er_{\cD^{i}}(h_1) +
  \er_{\cD^{i}}(h_2)) + \Pr_{\cD^{i-1}}[h_1(\bx)\neq h_2(\bx)].
\end{align*}
Hence
\begin{align*}
  &\er_{\cD^i}(h_1) + \er_{\cD^i}(h_2) \\
  &\leq \Pr_{\cD^{i-1}}[h_1(\bx)=h_2(\bx)]^{-1}\left(2\er_{\cD^{i-1}}(h^\star_{\cD^{i-1}}) + \frac{1}{4} \cdot \frac{\er_{\cD^{i-1}}(h^\star_{\cD^{i-1}})}{\ln(1/\er_{\cD^{i-1}}(h^\star_{\cD^{i-1}}))}-\Pr_{\cD^{i-1}}[h_1(\bx)\neq
  h_2(\bx)]  \right) \\
  &\leq \Pr_{\cD^{i-1}}[h_1(\bx)=h_2(\bx)]^{-1}\left(2\er_{\cD^{i-1}}(h^\star_{\cD^{i-1}}) -\Pr_{\cD^{i-1}}[h_1(\bx)\neq
  h_2(\bx)]/2  \right).
\end{align*}
At the same time, we have, for $\Pr_{\cD^{i-1}}[h_1(\bx) \neq h_2(\bx)]
\leq 1/2$, that $\Pr_{\cD^{i-1}}[h_1(\bx)=h_2(\bx)]^{-1} \leq 1 + 2
\Pr_{\cD^{i-1}}[h_1(\bx) \neq h_2(\bx)]$.

To see that $\Pr_{\cD^{i-1}}[h_1(\bx) \neq h_2(\bx)] \leq 1/2$, we first get from the induction hypothesis that $\Pr_{\cD}[\exists j < i-1 : h_1(\bx) \neq h_2(\bx)] \leq 8 \tau$. Hence for $\tau \leq 1/c_\tau$ for large enough $c_\tau$, we have $\Pr_{\cD}[\forall j < i-1 : h_1(\bx) = h_2(\bx)] \geq 1/2$. This further implies $\er_{\cD^{i-1}}(h^\star_{\cD^{i-1}}) \leq 2 \tau$. But then $\Pr_{\cD^{i-1}}[h_1(\bx) \neq h_2(\bx)] \leq \er_{\cD^{i-1}}(h_1) + \er_{\cD^{i-1}}(h_2) \leq 4 \er_{\cD^{i-1}}(h^\star_{\cD^{i-1}}) \leq 8 \tau \leq 1/2$ as claimed. 

Again, for $\er_{\cD^{i-1}}(h^\star_{\cD^{i-1}}) \leq 2\tau < 2/c_\tau$ for large enough $c_\tau > 0$, we finally conclude
\begin{align*}
  &\er_{\cD^i}(h_1) + \er_{\cD^i}(h_2) \\
  &\leq
  (1 + 2\Pr_{\cD^{i-1}}[h_1(\bx) \neq h_2(\bx)])\left(2\er_{\cD^{i-1}}(h^\star_{\cD^{i-1}}) -\Pr_{\cD^{i-1}}[h_1(x)\neq
  h_2(x)]/2  \right) \\
  &\leq
  2\er_{\cD^{i-1}}(h^\star_{\cD^{i-1}}) -\Pr_{\cD^{i-1}}[h_1(\bx)\neq
                                             h_2(\bx)]/4.
\end{align*}
It follows from the above and~\eqref{eq:upagree} that
\begin{eqnarray}
  \label{eq:dec}
  \er_{\cD^i}(h^\star_{\cD^i}) \leq 
(1/2)(\er_{\cD^i}(h_1) + \er_{\cD^i}(h_2) ) \leq \er_{\cD^{i-1}}(h^\star_{\cD^{i-1}}) -\Pr_{\cD^{i-1}}[h_1(\bx)\neq
h_2(\bx)]/8.
\end{eqnarray}
This is at most
\begin{align*}
   &\er_{\cD^{i-1}}(h^\star_{\cD^{i-1}}) -\Pr_{\cD^{i-1}}[h_1(\bx)\neq
  h_2(\bx)]/8 \\
  &\leq
  \er_{\cD^{i-1}}(h^\star_{\cD^{i-1}})
  -
  (1/16)\er_{\cD^{i-1}}(h^\star_{\cD^{i-1}})/\ln(1/\er_{\cD^{i-1}}(h^\star_{\cD^{i-1}}))
            \\
            &\leq
  (1-1/(16 \ln(1/\er_{\cD^{i-1}}(h^\star_{\cD^{i-1}}))))
  \er_{\cD^{i-1}}(h^\star_{\cD^{i-1}}).
\end{align*}
We now have two cases. If we already have
$\er_{\cD^{i-1}}(h^\star_{\cD^{i-1}}) \leq \tau^2$,
then for $\tau \leq 1/c_\tau$ (and using $t = O(\ln(1/\tau)\ln
\ln(1/\tau))$), we conclude
$\er_{\cD^{i}}(h^\star_{\cD^{i}}) \leq \tau(1-1/(32
\ln(1/\tau)))^{i-1}$ as claimed. If instead
$\er_{\cD^{i-1}}(h^\star_{\cD^{i-1}}) >
\tau^2$, we have 
\[
  (1-1/(16 \ln(1/\er_{\cD^{i-1}}(h^\star_{\cD^{i-1}})))) \leq (1-1/(32 \ln(1/\tau))) 
\]
It finally follows from the induction hypothesis that
\[
 \er_{\cD^{i}}(h^\star_{\cD^{i}}) \leq \tau(1-1/(32
\ln(1/\tau)))^{i-1}
\]
From~\eqref{eq:dec}, it also follows that:
\begin{align*}
  &\Pr_{\cD}[\exists j \leq i : h^j_1(\bx) \neq h^j_2(\bx)] - \Pr_{\cD}[\exists
  j \leq i-1 : h^j_1(\bx) \neq h^j_2(\bx)] \\
  &=
   \Pr_\cD[h^i_1(\bx) \neq h^i_2(\bx) \mid \forall j < i : h^j_1(\bx) =
  h^j_2(\bx)]\Pr_\cD[\forall j < i : h^j_1(\bx) = h^j_2(\bx)]  \\
  &\leq
  \Pr_\cD[h^i_1(\bx) \neq h^i_2(\bx) \mid \forall j < i : h^j_1(\bx) =
  h^j_2(\bx)]  \\
  &=
  \Pr_{\cD^{i-1}}[h^i_1(\bx) \neq h^i_2(\bx) ]  \\
  &\leq
  8\left(\er_{\cD^{i-1}}(h^\star_{\cD^{i-1}}) -
  \er_{\cD^i}(h^\star_{\cD^i}) \right).
\end{align*}
From the induction hypothesis, we conclude
\begin{align*}
  &\Pr_{\cD}[\exists j \leq i : h^j_1(\bx) \neq h^j_2(\bx)] \\
  &=
   \Pr_{\cD}[\exists j \leq i-1 : h^j_1(\bx) \neq h^j_2(\bx)] +
  (\Pr_{\cD}[\exists j \leq i : h^j_1(\bx) \neq
  h^j_2(\bx)]-\Pr_{\cD}[\exists j \leq i-1 : h^j_1(\bx) \neq h^j_2(\bx)])
                                             \\
                                             &\leq
  8\left(\er_{\cD}(h^\star_\cD) - \er_{\cD^{i-1}}(h^\star_{\cD^{i-1}})
  \right) + 8\left(\er_{\cD^{i-1}}(h^\star_{\cD^{i-1}}) -
  \er_{\cD^i}(h^\star_{\cD^i}) \right)  \\
  &=
  8\left(\er_{\cD}(h^\star_\cD) - \er_{\cD^{i}}(h^\star_{\cD^{i}})
  \right).
\end{align*}
The claim follows.
\end{proof}

\subsection{Tighter ERM for near-identical hypotheses (proof of Lemma~\ref{lem:betteruni})}
\label{sec:betteruni}
In this section, we prove that Empirical Risk Minimization performs better than the general ERM Theorem (Theorem~\ref{thm:ERM}) when the input distribution satisfies that all hypotheses in a set $\cH$ rarely disagree. We have restated Lemma~\ref{lem:betteruni} here for convenience:

\begin{customlem}{\ref{lem:betteruni}}
  Let $\cD$ be a distribution over $\cX$ and $\cH \subset \cX \to \{-1,1\}$ a hypothesis set of VC-dimension $d$. Assume there is a hypothesis $h_0 \in \cH$ such that for all $h \in \cH$, we have $\Pr_\cD[h(\bx) \neq h_0(\bx)] \leq p$. Then for any $0 < \delta < 1$, it holds with probability $1-\delta$ over a set $\bS$ of $n$ i.i.d.\ samples from $\cD$ that
  \begin{align*}
    &\sup_{h \in \cH} \left|\er_{\bS}(h) - \er_\cD(h) \right| \\
    &
    =\left| \er_{\bS}(h_0) - \er_\cD(h_0)\right| + O\left(\sqrt{p (\ln(1/p) d + \ln(1/\delta))/n} + (d\ln(n/d) + \ln(1/\delta))/n \right).
  \end{align*}
\end{customlem}

\begin{proof}[Proof of Lemma~\ref{lem:betteruni}]
We assume that $\cD$ is a distribution over $\cX \times \{-1,1\}$ for which the label $y$ is uniquely determined from $x$, i.e. one of $\Pr_\cD[\by=1 \mid \bx=x]$ and $\Pr_\cD[\by=-1 \mid \bx=x]$ is $0$ for all $x \in \cX$. This can be assumed wlog.\ by replacing each $x \in \cX$ with two point $x_{-1}$ and $x_{1}$ and letting the probability density function $p$ of $\cD$ satisfy $p((x_{-1},-1)) = p(x) \Pr_\cD[\by=-1 \mid \bx=x], p((x_{-1},1))=0$ and $p((x_1,1)) = p(x) \Pr_\cD[\by=1 \mid \bx=x], p((x_1,-1))=0$. Finally, for each $h \in \cH$ we let $h(x_{-1}) = h(x_1) = h(x)$. This leaves the VC-dimension of $\cH$ and $\er_\cD(h)$ for any $h$ unchanged. Let $c : \cX \to \{-1,1\}$ denote the concept giving the label of each $x \in \cX$.

Now consider the hypothesis set $\cH_= \subseteq \cX \to \{-1,1\}$ containing for each $h \in \cH$ the hypothesis $h_=$ taking the value $1$ on $x \in \cX$ with $h(x) \neq h_0(x) \wedge h(x)=c(x)$ and the value $-1$ otherwise. Also consider the set $\cH_{\neq}$ containing for each $h \in \cH$ the hypothesis $h_{\neq}$ taking the value $1$ on $x$ with $h(x) \neq h_0(x) \wedge h(x) \neq c(x)$, and $-1$ otherwise. The VC-dimension of $\cH_{=}$ and $\cH_{\neq}$ are both at most $O(d)$. Furthermore, every $h$ in $\cH_=$ and $\cH_{\neq}$ satisfy $\Pr_\cD[h(\bx)=1] \leq p$. Consider now the distribution $\cD'$ obtained by sampling $(\bx,\by) \sim \cD$ and replacing $\by$ by $-1$. Then $\er_{\cD'}(h)=\Pr_\cD[h(\bx)=1]$ for any $h \in \cH_=$ and $h \in \cH_{\neq}$. The ERM Theorem (Theorem~\ref{thm:ERM}) on $\cH_{=}$ and $\cH_{\neq}$ with distribution $\cD'$ implies that with probability $1-\delta$ over $\bS \sim \cD^n$, it holds that
  \[
    \sup_{h \in \cH_{=} \cup \cH_{\neq}} \left|\Pr_\bS[h(\bx)=1] - \Pr_\cD[h(\bx)=1] \right| \leq c_0 \cdot \left(\sqrt{p (d\ln(1/p) + \ln(1/\delta))/n} + (d\ln(n/d) + \ln(1/\delta))/n \right)
  \]
  for a constant $c_0>0$. Letting $\bS(\cdot \mid h_0 \neq h)$ denote the uniform distribution over samples $(x,y) \in \bS$ with $h_0(x)\neq h(x)$, and $\cD(\cdot \mid h_0 \neq h), \bS(\cdot \mid h_0=h), \cD(\cdot \mid h_0=h)$ defined symmetrically, we thus have for any $h \in \cH$,
  \begin{align*}
    &\left| \er_\bS(h) - \er_\cD(h) \right|  \\
    &=\left|\Pr_\bS[h_0 \neq h]\er_{\bS(\cdot \mid h_0 \neq h)}(h) + \Pr_\bS[h_0 = h]\er_{\bS(\cdot \mid h_0 = h)}(h) - \Pr_\cD[h_0 \neq h]\er_{\cD(\cdot \mid h_0 \neq h)}(h) - \Pr_\cD[h_0 = h]\er_{\cD(\cdot \mid h_0 = h)}(h) \right|  \\
    &=\left|\Pr_\bS[h_{\neq}=1]+ \Pr_\bS[h_0 = h]\er_{\bS(\cdot \mid h_0 = h)}(h) - \Pr_\cD[h_{\neq}=1]- \Pr_\cD[h_0 = h]\er_{\cD(\cdot \mid h_0 = h)}(h) \right| \\
    &=\left|\Pr_\bS[h_{\neq}=1]+ (\er_\bS(h_0) - \Pr_\bS[h_0 \neq h]\er_{S(\cdot \mid h_0 \neq h)}(h_0)) - \Pr_\cD[h_{\neq}=1]- (\er_\cD(h_0) - \Pr_\cD[h_0 \neq h]\er_{\cD(\cdot \mid h_0 \neq h)}(h) )\right|  \\
    &=\left|\Pr_\bS[h_{\neq}=1]+ (\er_\bS(h_0) - \Pr_\bS[h_{=}=1]) - \Pr_\cD[h_{\neq}=1]- (\er_\cD(h_0) - \Pr_\cD[h_{=} =1])\right| \\
    &\leq \left| \er_\bS(h_0) - \er_\cD(h_0)\right| + \left| \Pr_\bS[h_{\neq}=1] - \Pr_\cD[h_{\neq}=1]\right| +  \left| \Pr_\bS[h_{=}=1] - \Pr_\cD[h_{=}=1]\right|  \\
    &\leq \left| \er_\bS(h_0) - \er_\cD(h_0)\right| + 2c_0 \cdot \left(\sqrt{p (\ln(1/p) d + \ln(1/\delta))/n} + (d\ln(n/d) + \ln(1/\delta))/n \right).
  \end{align*}
  The claim follows.
\end{proof}

\section{Lower Bound for Proper Agnostic Learning}
In this section, we prove a lower bound for proper agnostic PAC learning, stated formally in Theorem~\ref{thm:mainlower}. So let $C>0$ be a sufficiently large constant, let $d$ be a target VC-dimension, $n$ a number of samples and let $\tau$ satisfy $Cd \ln(n/d)/n \leq \tau \leq 1/C$. Note that the assumption on $\tau$ also implies $n \geq C^2 d \ln(n/d)$.

We define the input domain $\cX$ to be the discrete domain $\cX = \{x_1,\dots,x_u\}$ for a $u \geq d$ to be determined. The hypothesis set $\cH$ contains all hypotheses $h$ that predict $-1$ on precisely $d$ of the points in $\cX$. The target concept $c : \cX \to \{-1,1\}$ to learn has $c(x)=1$ for all $x \in \cX$. 

Consider now an arbitrary \emph{proper} learning algorithm $\cA$ for $\cH$ and $c$. Recall that a proper learning algorithm always returns a hypothesis $h \in \cH$. Our goal is to show that there is a distribution $\cD$ over $\cX$ such that the hypothesis $h_\bS$ returned by $\cA$ on a sample $\bS \sim \cD^n$ often satisfies
\[
\er_\cD(h_\bS) = \tau + \Omega(\sqrt{\tau \ln(1/\tau)d/n})
\]
while there is a hypothesis $h \in \cH$ with $\er_\cD(h) = \tau$.

For proving this, we consider multiple distributions over $\cX$, one for each $h \in \cH$. For a hypothesis $h \in \cH$, the distribution $\cD_\cH$ returns a uniform point among those $x_i$ with $h(x_i)=-1$ with probability $(1-\alpha) d/u$ and it returns a uniform point among the remaining with probability $1-(1-\alpha) d /u$. Here $0 < \alpha < 1$ is a parameter to be determined. Observe that since $\alpha > 0$, we have that $h$ achieves the smallest error under $\cD_h$ among all $h \in \cH$. Furthermore
\begin{eqnarray}
\label{eq:opt}
    \er_{\cD_h}(h) = (1-\alpha) d/ u.
\end{eqnarray}
We will later fix $\alpha$ and $u$ such that $\tau = (1-\alpha) d/ u$, i.e. $\er_{\cD_h}(h)=\tau$.

To prove a lower bound on the error of $\cA$, we now draw a random $\randH \in \cH$ (unknown to $\cA$) and measure the performance of $\cA$ under the distribution $\cD_{\randH}$. For this, we consider the following \emph{failure} event of $\cA$. We say that $\cA$ fails on a sample $S$ from $\cD_{h}$ if it returns a hypothesis $h_S \in \cH$ such that $|\{i \in [u] : h_S(x_i)=h(x_i)=-1\}| \leq d/2$. We first observe
\begin{observation}
\label{obs:fail}
    If $\cA$ fails on a sample $S$ from $\cD^n_h$ for an $h \in \cH$, then 
    \[
    \er_{\cD_h}(h_S) \geq \tau + \alpha d/(2u).
    \]
\end{observation}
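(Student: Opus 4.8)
The plan is to unpack the definitions of ``fails'' and of $\cD_h$, and simply compute. Recall that under the target concept $c\equiv 1$, a hypothesis $g\in\cH$ errs exactly on the $d$ points $x_i$ with $g(x_i)=-1$. So $\er_{\cD_h}(h_S)=\sum_{i:\,h_S(x_i)=-1}\cD_h(x_i)$, where $\cD_h$ puts mass $(1-\alpha)/u$ on each of the $d$ points with $h(x_i)=-1$ and mass $(1-(1-\alpha)d/u)/(u-d)$ on each of the remaining $u-d$ points. Writing $a=(1-(1-\alpha)d/u)/(u-d)$ for this ``heavy'' per-point mass, note $a>1/u>(1-\alpha)/u$.

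First I would split the $d$ points on which $h_S$ predicts $-1$ according to whether $h(x_i)=-1$ or $h(x_i)=1$. Let $k:=|\{i:h_S(x_i)=h(x_i)=-1\}|$; since $h_S$ predicts $-1$ on exactly $d$ points, it predicts $-1$ on exactly $d-k$ points with $h(x_i)=1$. The failure hypothesis gives $k\le d/2$, hence $d-k\ge d/2$. Therefore
\begin{align*}
\er_{\cD_h}(h_S) &= k\cdot\frac{1-\alpha}{u} + (d-k)\cdot a\\
&\ge k\cdot\frac{1-\alpha}{u} + (d-k)\cdot\frac{1-\alpha}{u} + (d-k)\left(a-\frac{1-\alpha}{u}\right)\\
&= d\cdot\frac{1-\alpha}{u} + (d-k)\left(a-\frac{1-\alpha}{u}\right)\\
&= \tau + (d-k)\left(a-\frac{1-\alpha}{u}\right),
\end{align*}
using \eqref{eq:opt} that $\tau=(1-\alpha)d/u$. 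It remains to lower bound $(d-k)(a-(1-\alpha)/u)$ by $\alpha d/(2u)$. Since $d-k\ge d/2$, it suffices to show $a-(1-\alpha)/u\ge \alpha/u$, i.e.\ $a\ge 1/u$. But $a=(1-(1-\alpha)d/u)/(u-d)$, and $1-(1-\alpha)d/u = (u-(1-\alpha)d)/u \ge (u-d)/u$ since $\alpha\ge 0$, so $a\ge (u-d)/(u(u-d))=1/u$, as needed. This yields $\er_{\cD_h}(h_S)\ge \tau+\alpha d/(2u)$.

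There is essentially no obstacle here; the only thing to be slightly careful about is the direction of the inequality $a\ge 1/u$ (the heavy points genuinely carry more than the uniform mass, which is exactly what makes predicting $-1$ on an $h(x_i)=1$ point costly), and the bookkeeping that $h_S$ predicts $-1$ on exactly $d$ points because $\cA$ is proper. Everything else is arithmetic with the two mass values and the identity $\tau=(1-\alpha)d/u$.
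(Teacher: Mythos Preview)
Your proof is correct and follows essentially the same approach as the paper: both split the $d$ points where $h_S$ outputs $-1$ according to whether $h$ agrees, use that at least $d/2$ land on the heavier points, and lower bound the heavy per-point mass by $1/u$. Your write-up is in fact slightly more explicit than the paper's (you spell out the inequality $a\ge 1/u$ and the decomposition $\tau+(d-k)(a-(1-\alpha)/u)$), but the argument is the same.
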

\begin{proof}
Since the target concept $c$ is the all-1 concept, we have that $\er_{\cD_h}(h_S) = \Pr_{x \sim \cD}[h_S(x)=-1]$. Every $i$ for which $h(x_i)=-1$ has $\Pr_{x \sim \cD}[x=x_i]=(1-\alpha)/u$ and every $i$ with $h(x_i)=1$ has $\Pr_{x \sim \cD}[x=x_i]=(1-(1-\alpha)d/u)/(u-d/u) > 1/u$. It follows that $\er_{\cD_h}(h_S) \geq (d/2)/u + (d/2)(1-\alpha)/u = d/u - (d/2)\alpha/u = \tau + \alpha d/(2u)$.
\end{proof}

The second part of the proof shows that $\cA$ fails with constant probability over a random choice of $\randH$ and sample $\bS \sim \cD_{\randH}^n$.

\begin{lemma}
\label{lem:oftenfail}
There is a universal constant $C'>2$, such that for any proper learning algorithm $\cA$, if $\alpha \leq \min\{\sqrt{u \ln(u/d)/(nC')}, 1/C'\}$ and $u$ satisfies $d C' \leq u \leq n/C'$, then it holds with probability at least $1/16$ over a random $\randH$ from $\cH$ and a random sample $\bS \sim \cD_{\randH}^n$ that $\cA$ fails on $\bS$.
\end{lemma}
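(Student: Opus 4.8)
The plan is to turn this into a Bayesian estimation lower bound: once the planted hypothesis is fixed, the only thing $\cA$ sees is the vector of multiplicities, and I want to argue that this vector is too weakly coupled to the identity of the planted $d$-set. First I would note that a randomized proper learner is a mixture of deterministic ones and the quantity in the lemma is an average over $\randH$, so it suffices to treat deterministic $\cA$; moreover, symmetrizing $\cA$ over relabellings of $\cX=\{x_1,\dots,x_u\}$ leaves this average unchanged and makes $\Pr_{\bS\sim\cD_h^n}[\cA\text{ fails on }h]$ the same for every $h\in\cH$, so I may fix $h=h_0$ with $-1$-set $R_0=\{1,\dots,d\}$. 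Since the target concept is the all-ones function, $\bS$ is equivalent to the multiplicity vector $\mathbf{N}=(N_1,\dots,N_u)\sim\mathrm{Multinomial}(n;p_1,\dots,p_u)$, where $p_i=(1-\alpha)/u$ for $i\le d$ and $p_i=\beta:=\bigl(1-(1-\alpha)d/u\bigr)/(u-d)$ for $i>d$ (one checks $\beta-(1-\alpha)/u=\alpha/(u-d)>0$). The learner outputs a $d$-set $R'=R'(\mathbf{N})$, and ``$\cA$ fails'' is exactly the event $|R'\cap R_0|\le d/2$.

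\textbf{Why the hypothesis on $\alpha$ is the right one.} Next I would record the concentration/anti-concentration dichotomy that the bound $\alpha\le\sqrt{u\ln(u/d)/(nC')}$ is calibrated to. Let $\theta:=n(1-\alpha)/u+2\sqrt{n/u}$. For $i\le d$, $\mathrm{Var}(N_i)\le n/u$, so Chebyshev gives $\Pr[N_i>\theta]\le1/4$, and Markov over $R_0$ yields: with probability $\ge1/2$ at least $d/2$ planted points have $N_i\le\theta$. For $i>d$ the mean $n\beta$ exceeds $n(1-\alpha)/u$ by $\Theta(\alpha n/(u-d))\le\Theta(\sqrt{(n/u)\ln(u/d)/C'})$, so $\{N_i\le\theta\}$ is a downward deviation of at most $\Theta(\sqrt{(n/u)\ln(u/d)/C'})$; a binomial lower-tail anti-concentration estimate (valid as $n/u\ge C'$) then gives $\Pr[N_i\le\theta]\ge(d/u)^{1/2}$ once $C'$ is a large enough constant. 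Since the events $\{N_i\le\theta\}$ are negatively associated, $Y:=|\{i>d:N_i\le\theta\}|$ has $\E\,Y\ge(u-d)(d/u)^{1/2}\ge\sqrt{C'}\,d/2$ and $\mathrm{Var}(Y)\le\E\,Y$, so $\Pr[Y\ge2d]\ge3/4$. Hence, with probability $\ge1/4$, the ``low set'' $L:=\{i:N_i\le\theta\}$ contains at least $d/2$ genuine points and at least $2d$ decoys: the sample presents more cheap-looking decoy points than real ones, which is the combinatorial reason no selection rule --- in particular the natural ``pick the $d$ rarest points'' rule --- can win.

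\textbf{The information-theoretic core.} The heart of the argument is to upgrade ``no natural rule works'' to ``no rule works''. Condition on $\mathbf{N}=\mathbf{m}$; the multinomial coefficient cancels in Bayes' rule, so the posterior law of the planted set is
\[
\Pr[R_0=T\mid\mathbf{N}=\mathbf{m}]\ \propto\ \rho^{\sum_{i\in T}m_i},\qquad T\in\tbinom{[u]}{d},\qquad \rho:=\frac{(1-\alpha)/u}{\beta}=1-\Theta(\alpha)\in(0,1),
\]
i.e.\ a fixed-size conditional-Bernoulli measure with weights $w_i=\rho^{m_i}$, whose mode (and, suitably interpreted, the Bayes-optimal guess) is ``the $d$ indices of smallest multiplicity''. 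I would then show that for $\mathbf{m}$ in an event of probability $\Omega(1)$ this posterior is diffuse enough that $\Pr[\,|R_0\cap R'|\le d/2\mid\mathbf{N}=\mathbf{m}\,]\ge\Omega(1)$ for \emph{every} $d$-set $R'$; combined with the previous paragraph this gives the lemma once $C'$ is large. Concretely I would bound the expected overlap $\sum_{i\in R'}\widetilde p_i$, $\widetilde p_i:=\Pr[i\in R_0\mid\mathbf{m}]$: the hypothesis $\alpha\le\sqrt{u\ln(u/d)/(nC')}$ forces $\rho^{m_i-m_j}=(u/d)^{O(1/\sqrt{C'})}$ whenever $m_i,m_j$ lie in the bulk band $[\,n/u\pm O(\sqrt{(n/u)\ln(u/d)})\,]$, and then the identity $\widetilde p_i=\bigl(1+e_d(w_{-i})/(w_i\,e_{d-1}(w_{-i}))\bigr)^{-1}$ together with $\sum_i\widetilde p_i=d$ forces $\widetilde p_i=O(d/u)$, hence $\sum_{i\in R'}\widetilde p_i=O(d^2/u)\le d/4$ (using $u\ge C'd$), and Markov finishes. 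The indices whose multiplicities fall below the bulk band carry disproportionate posterior weight and must be handled separately --- using that only $O(1)$ of them exist, and, in the regime where $n/u$ is small enough that $\Omega(d)$ points genuinely tie for the minimal multiplicity, using that the posterior is then essentially uniform over $d$-subsets of a set of size $\gg d$ on which the planted set is, with high probability, not concentrated.

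\textbf{Main obstacle.} The reductions and the two binomial tail estimates are routine. The essential difficulty is this last core estimate, and in particular making ``the posterior is too diffuse to be guessed'' hold \emph{uniformly over the entire admissible range $C'd\le u\le n/C'$}: when $u$ is enormous and $n/u$ is only moderate, the minimum of the $u$ near-Poisson multiplicities lies many standard deviations below the mean, the crude ``all weights comparable'' bound breaks, and one must instead argue via the order statistics of $\mathbf{N}$ --- or, equivalently, prove by a Neyman--Pearson / rearrangement argument that the ``$d$ rarest points'' rule is essentially optimal for the $0/1$-type loss $\mathbf{1}[\,|R_0\cap R'|>d/2\,]$ and analyse just that rule. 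Getting this through with a single constant $C'$ that works both when $u=\Theta(d)$ (so $n$ is huge) and when $u$ is close to $n/C'$ is where the real work of the proof lies.
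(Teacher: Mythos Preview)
Your proposal contains the right ingredients but takes a harder route than necessary, and the route you flag as an ``alternative'' at the very end is in fact what the paper does.

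\textbf{The paper's actual argument.} The paper asserts in one line (``naive Bayes'') that the Bayes-optimal proper learner is the $d$-rarest rule, and then analyses only that rule. The swap/rearrangement argument you mention under ``Neyman--Pearson'' is exactly what justifies this step: if $R'$ contains $i$ but not $j$ with $m_j<m_i$, conditioning on $R_0\setminus\{i,j\}$ and using that the posterior weight of $\{j\in R_0\}$ exceeds that of $\{i\in R_0\}$ shows the swap cannot decrease $\Pr[|R_0\cap R'|>d/2]$. Once one is reduced to the $d$-rarest rule, the remaining analysis is short and works uniformly over the whole range $C'd\le u\le n/C'$ with no case distinction---so the ``main obstacle'' you describe simply does not arise.

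\textbf{Where your concentration step differs.} For the $d$-rarest analysis the planted-point estimate must go the \emph{other} way from yours. You show that most planted $N_i$ are \emph{at most} $\theta=n(1-\alpha)/u+2\sqrt{n/u}$, but to conclude that the $d$-rarest rule misses them one needs that most planted $N_i$ are \emph{not too small}. The paper sets a \emph{lower} threshold $T=(1-\alpha)n/u-8\sqrt{(1-\alpha)n/u}$ and uses Chebyshev plus Markov to get: with probability $\ge 15/16$, at most $d/4$ planted points satisfy $N_i<T$. On the decoy side the paper proves, via a reverse-Chernoff bound and Paley--Zygmund over the negatively correlated indicators (their Lemma on balls-in-bins), that with probability $\ge 1/8$ at least $d/2$ decoys fall strictly below $T$. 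On the intersection (probability $\ge 1/16$) the $d$ smallest multiplicities therefore include at least $d/2$ decoys and at most $d/2$ planted points, so the $d$-rarest rule fails. Your $\theta$-based picture (both planted and decoy points crowding into a common ``low set'') is fine as motivation for the posterior-diffuseness route, but it does not by itself bound the number of planted points among the $d$ rarest.

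\textbf{Comparison.} Your posterior-diffuseness plan is a genuine alternative and would give a more ``information-theoretic'' proof, but as you yourself note, controlling the marginals $\widetilde p_i$ uniformly across the whole parameter range is delicate (the weight ratio $(u/d)^{O(1/\sqrt{C'})}$ is not bounded when $u/d$ is large, and the order-statistics tail requires separate handling). The paper sidesteps all of this by the one-line optimality reduction. What your approach would buy, if completed, is a proof that does not rely on identifying the exact Bayes rule---but here that identification is cheap, so the trade is unfavourable.
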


Before giving the proof of Lemma~\ref{lem:oftenfail}, let us derive our lower bound. Recall that $\tau$ satisfies $Cd\ln(n/d)/n \leq \tau \leq 1/C$ for a sufficiently large constant $C>0$. Using~\eqref{eq:opt} and $\tau = \er_{\cD_\randH}(\randH)$, we let
\[
u = (1-\alpha)d/\tau.
\]
Also, fix $\alpha = \min\{\sqrt{u \ln(u/d)/(nC')},1/C'\}$ where $C'$ is the constant from Lemma~\ref{lem:oftenfail}. For $C$ large enough, we have $u=(1-\alpha)d/\tau \geq C (1-\alpha)d \geq C(1-1/C')d \geq dC'$. Similarly, we have $u=(1-\alpha)d/\tau \leq (1-\alpha)n/C \leq n/C \leq n/C'$. Thus $u$ and $\alpha$ satisfy the constraints in Lemma~\ref{lem:oftenfail}. It follows that for any proper learning algorithm $\cA$ and a random $\randH$ from $\cH$, $\cA$ fails with probability at least $1/16$ on a sample $\bS \sim \cD_{\randH}^n$. From Observation~\ref{obs:fail}, we get that in this case, $\er_{\cD_{\randH}}(h_{\bS}) \geq \tau + \alpha d/(2u)$. By our choice of parameters, we have
\[
\alpha d/(2u) = \min\{\sqrt{\ln(u/d)d^2/(4C'nu)}, d/(2C'u)\}
\]
Since $u=(1-\alpha)d/\tau \in [(1-1/C')d/\tau, d/\tau] \subseteq [d/(2\tau),d/\tau]$, this is at least
\[
\min\{\sqrt{\tau \ln(1/(2\tau))d /(4C' n)}, \tau/(2C')\}.
\]
For $\tau \geq Cd \ln(n/d)/n$ and $C$ large enough, the former term is the minimum and we conclude:
\[
\er_{\cD_\randH}(h_{\bS}) = \tau + \Omega\left(\sqrt{\frac{\tau d\ln(1/\tau) }{n}} \right).
\]
This concludes the proof of Theorem~\ref{thm:mainlower}. What remains is thus to establish Lemma~\ref{lem:oftenfail}:

\begin{proof}[Proof of Lemma~\ref{lem:oftenfail}]
It is not hard to see (naive Bayes) that the optimal strategy for any $\cA$ minimizing the probability of failure (over $\randH$ and $\bS$) when given a sample $\bS$, is to output the hypothesis $h_\bS$ returning $-1$ on the $d$ points among $x_1,\dots,x_u$ from which fewest samples were seen. We thus upper bound the probability that this set contains more than $d/2$ samples $x_i$ with $\randH(x_i)=-1$.

For this, fix an arbitrary outcome $h$ of $\randH$, let $U_h \subset \{x_1,\dots,x_n\}$ denote the set of $x_i$ with $h(x_i)=-1$. We have $|U_h|=d$. For any $x_i \in U_h$, let $\bn_i$ give the number of copies of $x_i$ in the sample $\bS \sim \cD_h^n$. We have $\E[\bn_i]=(1-\alpha)n/u$. Since $\bn_i$ is a sum of independent indicator random variables, its variance is at most its expectation. It follows from Chebyshev's inequality that $\Pr[|\bn_i - \E[\bn_i]|>8 \sqrt{\E[\bn_i]}] \leq 1/64$. Markov's inequality implies that with probability at least $15/16$, there are no more than $d/4$ indices $i \in U_h$ for which $\bn_i < (1-\alpha)n/u - 8 \sqrt{(1-\alpha)n/u}$. 

We next show that there is a good chance that at least $d/2$ of the points $x_i$ with $x_i \notin U_h$ have $\bn_i < (1-\alpha)n/u - 8 \sqrt{(1-\alpha)n/u}$. For this, we require $u \geq 2d$. Then for any $x_i \notin U_h$, we have $\Pr_{x \sim \cD_h}[x=x_i] = q/u$ for $q \in [1, 1+\alpha]$. We now invoke the following anti-concentration result:
\begin{lemma}
\label{lem:smallbins}
Consider throwing $n$ balls independently into $u$ bins, such that the $i$'th bin is hit with probability $p_i$. Let $S$ be a subset of $m$ bins such that each bin in $S$ has $p_i = p$ for some $12/n \leq p \leq 1/2$. Then for every integer $k \leq m/C$ for a sufficiently large constant $C>0$, it holds with probability at least $1/8$ that there are at least $k$ bins in $S$ containing less than $\max\{pn - \sqrt{pn \ln(m/k)}/6,pn/2\}$ balls each.
\end{lemma}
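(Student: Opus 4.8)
The plan is to reduce the claim to an anti-concentration estimate for a single binomial, and then amplify it across the $m$ bins of $S$ using negative association. Let $\bn_i$ denote the number of balls in bin $i$; the family $(\bn_i)_i$ is negatively associated (independent binomials if $\sum_i p_i<1$, the coordinates of a multinomial if $\sum_i p_i=1$), and for $i\in S$ each $\bn_i$ is a $\mathrm{Bin}(n,p)$ variable. Fix the threshold $\theta:=\max\{pn-\sqrt{pn\ln(m/k)}/6,\ pn/2\}$, set $q:=\Pr[\bn_i<\theta]$ (the same for every $i\in S$), and $\bX:=\sum_{i\in S}\mathbf{1}[\bn_i<\theta]$, so $\E[\bX]=mq$. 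Each $\mathbf{1}[\bn_i<\theta]$ is a monotone function of a single coordinate, so these indicators are again negatively associated and therefore satisfy the usual multiplicative Chernoff bounds. Granting the key estimate $q\ge (k/m)^{\beta}$ for some universal constant $\beta<1$, we get $\E[\bX]=mq\ge m^{1-\beta}k^{\beta}\ge (Ck)^{1-\beta}k^{\beta}=C^{1-\beta}k\ge 100k$ for $C$ a large enough constant (using $m\ge Ck$), and then the lower-tail Chernoff bound for negatively associated indicators gives $\Pr[\bX\le\E[\bX]/2]\le e^{-\E[\bX]/8}<1/8$; hence $\bX\ge\E[\bX]/2\ge k$ with probability at least $7/8\ge 1/8$, as required.

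It therefore remains to establish $q\ge (k/m)^{\beta}$, and I would split into cases according to the size of $\mu:=pn\in[12,n/2]$ relative to $\ln(m/k)$. When $\ln(m/k)\ge 9\mu$ --- equivalently, the maximum defining $\theta$ is attained at $pn/2$ --- it is enough that $q\ge\Pr[\bn_i=0]=(1-p)^n\ge e^{-2\mu}\ge e^{-(2/9)\ln(m/k)}=(k/m)^{2/9}$, using $1-p\ge e^{-2p}$ for $p\le 1/2$. When $\ln(m/k)<9\mu$, so $\theta=pn-t$ with $t:=\sqrt{\mu\ln(m/k)}/6<pn/2$, I would lower-bound $q\ge\Pr[\bn_i\le pn-t]$ by summing the binomial mass function over the integers in a window just below $pn-t$ (of width $\Theta(t)$, or of width $\Theta(\mu)$ when $t$ is a constant fraction of $\mu$): this window lies strictly below the mode, so the mass function increases across it, and Stirling's formula bounds each of its terms below by $\tfrac{c}{\sqrt{\mu}}\exp(-nD(a\|p))$, where $a$ is the left endpoint of the window divided by $n$ and $D$ is the relative entropy. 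A direct estimate of $nD(a\|p)$ --- via the quadratic bound $D(p-s\|p)=O(s^2/p)$ for $s\le p/2$ in the small-deviation range, and via an elementary bound on $D(\theta/n\,\|\,p)$ when $\theta$ is a constant fraction of $pn$ below the mean --- shows that $nD(a\|p)\le\beta'\ln(m/k)$ for some universal $\beta'<1$, and the window width cancels the $1/\sqrt{\mu}$ prefactor, giving $q=\Omega((k/m)^{\beta'})$. Here the constant $1/6$ in the threshold is exactly what makes $\beta'$ come out below $1$. Finally, for $\mu$ in the bounded range $[12,O(1)]$ one simply uses $q\ge e^{-2\mu}=\Omega(1)$, which exceeds $(k/m)^{\beta}$ once $m/k\ge C$ is large. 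Taking $\beta$ to be the largest exponent produced (one can check $\beta=4/5$ suffices) completes the argument.

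The step I expect to be the main obstacle is the binomial anti-concentration bound in the regime where $\ln(m/k)$ and $\mu$ are of the same order: there the threshold $\theta$ sits below the mean by a constant fraction of $\mu$, so $q$ decays like $e^{-\Theta(\mu)}=(m/k)^{-\Theta(1)}$, and one must track the constant in the exponent and keep it strictly below $1$. This is what forces the sharp, windowed Stirling bound on the binomial mass function (a crude single-point estimate carries a $1/\sqrt{\mu}$ loss that must be cancelled by summing over a window of the correct width), the careful placement of the window near $pn/2$ rather than near $0$, and the specific choice of the constant in the threshold. Everything else is standard: negative association of multinomial (or independent) counts and its closure under coordinatewise-monotone maps, Chernoff bounds for negatively associated sums, and the elementary inequalities $1-p\ge e^{-2p}$ for $p\le 1/2$, $\binom{n}{j}\ge \tfrac{c}{\sqrt{\min\{j,n-j\}}}2^{nH(j/n)}$, and $D(p-s\|p)=O(s^2/p)$ for $s\le p/2$.
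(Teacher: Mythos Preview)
Your high-level plan coincides with the paper's: prove a single-bin anti-concentration bound $q\ge(k/m)^\beta$ for some $\beta<1$, then amplify across the $m$ bins of $S$ via negative dependence. The differences are in how each step is carried out. For the anti-concentration, the paper simply invokes a reverse Chernoff bound of Klein and Young: for $\sqrt{3/(np)}<\delta<1/2$ one has $\Pr[\mathrm{Bin}(n,p)\le(1-\delta)np]\ge e^{-9np\delta^2}$; taking $\delta=\min\{\sqrt{\ln(m/k)/(np)}/6,\,1/2\}$ yields $q\ge(k/m)^{1/4}\ge 2k/m$ in one line. You instead re-derive this from first principles by case analysis and a windowed Stirling estimate on the binomial mass function; this is sound, but your ``main obstacle'' is exactly the content of the Klein--Young lemma, so you are working harder than necessary. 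For the amplification, the paper uses only negative \emph{correlation} of the indicators $Y_i=\mathbf{1}[\bn_i<\theta]$ together with the second-moment method: $\E[Y_iY_j]\le q^2$ for $i\ne j$ gives $\E[(\sum_i Y_i)^2]\le mq+m^2q^2$, and Paley--Zygmund then yields $\Pr[\sum_i Y_i>mq/2]\ge(mq)^2/(4(mq+m^2q^2))\ge1/8$ once $mq\ge 2k>1$. Your Chernoff-for-NA route is also correct and in fact gives the stronger conclusion $\Pr[\bX\ge k]\ge 7/8$, but Paley--Zygmund already suffices for the stated $1/8$ and needs less machinery. In short: same skeleton, but the paper outsources the anti-concentration step to a known lemma and uses the lightest possible amplification, whereas your route is more self-contained at the cost of more bookkeeping.
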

We prove the lemma in Appendix~\ref{sec:reverse}.

By Lemma~\ref{lem:smallbins} (setting $p=q/u$, $m=u-d \geq u/2$, $k=d/2$), with probability at least $1/8$ provided $Cd \leq u \leq n/C$ for a large enough constant $C>0$, there are at least $d/2$ points $x_i$ with $h(x_i)=1$ such that we see no more than
\[
qn/u - \min\{\sqrt{(qn/u)\ln(u/d)}/6, qn/(2u)\} \leq (1+\alpha)n/u - \min\{\sqrt{(n/u)\ln(u/d)}/6, n/(2u)\}
\]
copies of $x_i$. 

We aim to choose $\alpha$ such that $\min\{\sqrt{(n/u)\ln(u/d)}/6, n/(2u)\} \geq 2\alpha n/u + 8\sqrt{n/u}$, since then the number of copies we see of these $x_i$ is no more than $(1-\alpha)n/u-8\sqrt{n/u}$. 

For this, we first constrain $u$ to satisfy $Cd \leq u \leq n/C$ for a big enough constant $C>0$ so that 
\[
8 \sqrt{n/u} \leq \max\{\sqrt{(n/u)\ln(u/d)}/12, n/(4u)\}.
\]
The constraint on $\alpha$ is now satisfied when 
\[
2\alpha n/u \leq \min\{\sqrt{(n/u)\ln(u/d)}/12, n/(4u) \}
\]
which is
\[
  \alpha \leq \min\{\sqrt{u \ln(u/d)/(24^2 n)}, 1/8\}.
\]
We conclude that for such $\alpha$ and $u$, with probability at least $1/8-1/16=1/16$ over $\bS$ and $\randH$, we have that $\cA$ fails on $\bS$.
\end{proof}

\section{Conclusion and Open Problems}
In this work, we established that ERM, and all other proper learning algorithms, are sub-optimal for agnostic PAC learning when treating $\tau = \er_\cD(h^\star_\cD)$ as a parameter. We then complemented the lower bound with a new improper learning algorithm that achieves an optimal sample complexity except for very small values of $\tau$. However, a number of intriguing questions remain. First, can we develop an algorithm that is optimal for the full range of $\tau$? In particular, our new algorithm implies that it suffices to consider the near-realizable case of $\tau = O(\ln^9(n/d)d/n)$. Secondly, we know that variants of majority voting (bagging, etc.) are optimal for realizable PAC learning. The analysis tools used when proving their optimality breaks down for the agnostic setting. Can we somehow analyse them in a different way and prove that e.g.\ bagging is optimal both in the agnostic and realizable setting? If not, can we prove a lower bound for concrete algorithms, such as bagging or Hanneke's majority voter, proving that they are sub-optimal in the agnostic case? Thirdly, it could be the case that there is a higher lower bound for all learning algorithms when $\tau \approx d/n$. Can we prove this? Another interesting question is whether we can design an optimal agnostic learning algorithm that automatically adapts to $\delta$? In more detail, our new algorithm requires knowledge of the failure probability $\delta$ and thus works only for a fixed user defined $\delta$. ERM on the other hand automatically works for all values of $\delta$ simultaneously. Next, our algorithm is not necessarily efficient in terms of running time. In particular, even if ERM over $\cH$ is efficient, it is unclear how to determine if there are two hypotheses $h_1,h_2$ that are both near-optimal and yet disagree in the classification of many samples. Can we design an efficient learning algorithm with sample complexity similar to our new algorithm? Finally, the authors find the idea of recursively training near-optimal, but highly disagreeing classifiers, to be promising. Are there other applications of this idea in learning theory?

\section*{Acknowledgment}
Kasper Green Larsen is co-funded by the European Union (ERC, TUCLA, 101125203) and Independent Research Fund Denmark (DFF) Sapere Aude Research Leader Grant No. 9064-00068B. Views and opinions expressed are however those of the author(s) only and do not necessarily reflect those of the European Union or the European Research Council. Neither the European Union nor the granting authority can be held responsible for them.

\bibliography{refs}

\begin{thebibliography}{10}

\bibitem{Aden-AliFOCS23}
I.~Aden{-}Ali, Y.~Cherapanamjeri, A.~Shetty, and N.~Zhivotovskiy.
\newblock Optimal {PAC} bounds without uniform convergence.
\newblock In {\em 2023 IEEE 64th Annual Symposium on Foundations of Computer
  Science (FOCS)}, pages 1203--1223. IEEE Computer Society, 2023.

\bibitem{maj3}
I.~Aden-Ali, M.~M. H{\o}gsgaard, K.~G. Larsen, and N.~Zhivotovskiy.
\newblock Majority-of-three: The simplest optimal learner?
\newblock {\em CoRR}, abs/2403.08831, 2024.

\bibitem{bookvc}
M.~Anthony and P.~L. Bartlett.
\newblock {\em Neural Network Learning: Theoretical Foundations}.
\newblock Cambridge University Press, USA, 1st edition, 2009.

\bibitem{audibert2009fast}
J.-Y. Audibert.
\newblock Fast learning rates in statistical inference through aggregation.
\newblock {\em The Annals of Statistics}, 37(4):1591--1646, 2009.

\bibitem{auer2007new}
P.~Auer and R.~Ortner.
\newblock A new {PAC} bound for intersection-closed concept classes.
\newblock {\em Machine Learning}, 66(2):151--163, 2007.

\bibitem{blumer1989learnability}
A.~Blumer, A.~Ehrenfeucht, D.~Haussler, and M.~K. Warmuth.
\newblock Learnability and the {V}apnik-{C}hervonenkis dimension.
\newblock {\em Journal of the ACM (JACM)}, 36(4):929--965, 1989.

\bibitem{boucheron2005theory}
S.~Boucheron, O.~Bousquet, and G.~Lugosi.
\newblock Theory of classification: A survey of some recent advances.
\newblock {\em ESAIM: Probability and Statistics}, 9:323--375, 2005.

\bibitem{bousquet2020proper}
O.~Bousquet, S.~Hanneke, S.~Moran, and N.~Zhivotovskiy.
\newblock Proper learning, {H}elly number, and an optimal {SVM} bound.
\newblock In {\em Conference on Learning Theory}, pages 582--609. PMLR, 2020.

\bibitem{bousquet2021fast}
O.~Bousquet and N.~Zhivotovskiy.
\newblock Fast classification rates without standard margin assumptions.
\newblock {\em Information and Inference: A Journal of the IMA},
  10(4):1389--1421, 2021.

\bibitem{Breiman2004BaggingP}
L.~Breiman.
\newblock Bagging predictors.
\newblock {\em Mach. Learn.}, 24(2):123–140, aug 1996.

\bibitem{DevroyeGyorfiLugosi1996}
L.~Devroye, L.~Gy{\"o}rfi, and G.~Lugosi.
\newblock {\em A Probabilistic Theory of Pattern Recognition}.
\newblock Springer, New York, 1996.

\bibitem{ehrenfeucht1989general}
A.~Ehrenfeucht, D.~Haussler, M.~Kearns, and L.~Valiant.
\newblock A general lower bound on the number of examples needed for learning.
\newblock {\em Information and Computation}, 82(3):247--261, 1989.

\bibitem{marginsGradient}
A.~Gr{\o}nlund, L.~Kamma, and K.~G. Larsen.
\newblock Margins are insufficient for explaining gradient boosting.
\newblock In {\em Advances in Neural Information Processing Systems 33 (NeurIPS
  2020)}, 2020.

\bibitem{hanneke2016optimal}
S.~Hanneke.
\newblock The optimal sample complexity of pac learning.
\newblock {\em The Journal of Machine Learning Research}, 17(1):1319--1333,
  2016.

\bibitem{hanneke2016refined}
S.~Hanneke.
\newblock Refined error bounds for several learning algorithms.
\newblock {\em The Journal of Machine Learning Research}, 17(1):4667--4721,
  2016.

\bibitem{HAUSSLER199278}
D.~Haussler.
\newblock Decision theoretic generalizations of the pac model for neural net
  and other learning applications.
\newblock {\em Information and Computation}, 100(1):78--150, 1992.

\bibitem{haussler1994predicting}
D.~Haussler, N.~Littlestone, and M.~K. Warmuth.
\newblock Predicting $\{$0, 1$\}$-functions on randomly drawn points.
\newblock {\em Information and Computation}, 115(2):248--292, 1994.

\bibitem{reversechernoff}
P.~N. Klein and N.~E. Young.
\newblock On the number of iterations for dantzig-wolfe optimization and
  packing-covering approximation algorithms.
\newblock {\em {SIAM} J. Comput.}, 44(4):1154--1172, 2015.

\bibitem{larsen2023baggingCOLT}
K.~G. Larsen.
\newblock Bagging is an optimal {PAC} learner.
\newblock In {\em The Thirty Sixth Annual Conference on Learning Theory, {COLT}
  2023}, volume 195 of {\em Proceedings of Machine Learning Research}, pages
  450--468. {PMLR}, 2023.

\bibitem{lls}
Y.~Li, P.~Long, and A.~Srinivasan.
\newblock Improved bounds on the sample complexity of learning.
\newblock {\em Journal of Computer and System Sciences}, 62:516 -- 527, 2001.

\bibitem{massart2006risk}
P.~Massart and E.~N\'{e}d\'{e}lec.
\newblock Risk bounds for statistical learning.
\newblock {\em The Annals of Statistics}, 34, 10 2006.

\bibitem{puchkin2021exponential}
N.~Puchkin and N.~Zhivotovskiy.
\newblock Exponential savings in agnostic active learning through abstention.
\newblock In {\em Conference on Learning Theory}, pages 3806--3832. PMLR, 2021.

\bibitem{raginsky2011lower}
M.~Raginsky and A.~Rakhlin.
\newblock Lower bounds for passive and active learning.
\newblock {\em Advances in Neural Information Processing Systems}, 24, 2011.

\bibitem{simon2015almost}
H.~U. Simon.
\newblock An almost optimal {PAC} algorithm.
\newblock In {\em Conference on Learning Theory}, pages 1552--1563. PMLR, 2015.

\bibitem{valiant1984theory}
L.~G. Valiant.
\newblock A theory of the learnable.
\newblock {\em Communications of the ACM}, 27(11):1134--1142, 1984.

\bibitem{vapnik:estimation}
V.~Vapnik.
\newblock {\em Estimation of Dependences Based on Empirical Data: Springer
  Series in Statistics (Springer Series in Statistics)}.
\newblock Springer-Verlag, Berlin, Heidelberg, 1982.

\bibitem{vapnik1964class}
V.~Vapnik and A.~Chervonenkis.
\newblock A class of algorithms for pattern recognition learning.
\newblock {\em Avtomatika i Telemekhanika}, 25(6):937--945, 1964.

\bibitem{vapnik74theory}
V.~Vapnik and A.~Chervonenkis.
\newblock {\em Theory of Pattern Recognition}.
\newblock Nauka, Moscow, 1974.

\bibitem{vapnik71uniform}
V.~N. Vapnik and A.~Y. Chervonenkis.
\newblock On the uniform convergence of relative frequencies of events to their
  probabilities.
\newblock {\em Theory of Probability and its Applications}, 16(2):264--280,
  1971.

\bibitem{zhivotovskiy2018localization}
N.~Zhivotovskiy and S.~Hanneke.
\newblock Localization of vc classes: Beyond local rademacher complexities.
\newblock {\em Theoretical Computer Science}, 742:27--49, 2018.

\end{thebibliography}
\bibliographystyle{abbrv} 

\appendix

\section{Appendix}
\label{sec:omit}

\subsection{Failures are rare (proof of Lemma~\ref{lem:unlikely})}
\label{sec:unlikely}
In this section, we show that the failure events $E_{i,0}$ and $E_{i,1}$ are unlikely. Formally, we prove:
\begin{customlem}{\ref{lem:unlikely}}
    For all $i$, we have $\Pr[E_{i,0}] \leq \delta/(4 t)$ and $\Pr[E_{i,1}] \leq \delta/(4t)$.
\end{customlem}

\begin{proof}[Proof of Lemma~\ref{lem:unlikely}]
We consider each type of event in turn. Since both of the events only occur when none of the events $E_{j,0}, E_{j,1}$ occurred for any $j<i$, we bound the probability of $E_{i,0}$ and $E_{i,1}$ under this assumption. So fix an outcome $B^1,\dots,B^{i-1}$ of $\bB^1,\dots,\bB^{i-1}$ such that none of the events occurred for $j<i$. This also fixes an outcome $h_1^j,h_2^j$ of $\bh^j_1,\bh^j_2$ for $j < i$ and $\cD^i$ of $\bD^i$. Note that $\bB^i$ is independent of the events $E_{j,0},E_{j,1}$ for $j<i$ and thus $\bB^i$ still consists of $n/t$ i.i.d.\ samples from $\cD$.

Observe that by Lemma~\ref{lem:progress}, $\Pr_{\cD}[\exists j < i : h^j_1(\bx) \neq h^j_2(\bx)] \leq 8 \tau$. Since $\tau \leq 1/c_\tau$ for large enough $c_\tau>0$, this further implies that $\Pr_{\cD}[\forall j < i : h^j_1(\bx) = h^j_2(\bx)] \geq 1-8\tau \geq 1/2$. We further have $|\bB^{i}| = n/t > c_n \ln(t/\delta)$ by assumptions $n \geq c_n \ln^{3.5}(n/d)(d + \ln(1/\delta))$, $t = O(\ln(1/\tilde{\tau})\ln\ln(1/\tilde{\tau})) = O(\ln(n/d)\ln \ln(n/d))$, $\tilde{\tau} \in [\tau/2,2\tau]$ and $\tau > d/n$. It follows that $|\bT^i| \geq n/(2t)$ except with probability $\delta/(8t)$. We thus bound the probabilities under the assumption that $\bT^i$ consists of $m \ge n/(2t)$ i.i.d.\ samples from $\cD^i$.

\paragraph{Event $E_{i,0}$.}
By the ERM Theorem (Theorem~\ref{thm:ERM}) on $\bT^i$, we have with probability $1-\delta/(8 t)$ that for all $h \in \cH$
\begin{eqnarray}
\label{eq:erm}
|\er_{\bT^i}(h) - \er_{\cD^i}(h)| =O\left(\sqrt{\frac{t\er_{\cD^i}(h)(d \ln(n/d) + \ln(t/\delta))}{n}} + \frac{t(d \ln(n/d) + \ln(t/\delta))}{n}\right).
\end{eqnarray}
The $\ln(t/\delta)=\ln(t) + \ln(1/\delta)$ term may be replaced by $\ln(1/\delta)$ as $\ln(t)$ is dominated by the $d \ln(n/d)$ term. Thus for the constant $c_\alpha$ in the definition of $\alpha$ large enough, this implies 
\[
|\er_{\bT^i}(h) - \er_{\cD^i}(h)| \leq (1/32)\alpha(n/t,d,\delta,\min\{\er_{\cD^i}(h), \er_{\bT^i}(h)\}).
\]
To see that we may insert $\min\{\er_{\cD^i}(h), \er_{\bT^i}(h)\}$ instead of $\er_{\cD^i}(h)$, we consider two cases. First, if $\er_{\cD^i}(h)>\sqrt{c_\alpha} t(d \ln(n/d) + \ln(t/\delta))$ for a sufficiently large  constant $c_\alpha$ in the definition of $\alpha$, we have from~\eqref{eq:erm} that $\er_{\bT^i}(h) \geq (1/2)\er_{\cD^i}(h)$ and thus $|\er_{\bT^i}(h) - \er_{\cD^i}(h)| \leq (1/32)\alpha(n/t,d,\delta,\min\{\er_{\cD^i}(h), \er_{\bT^i}(h)\})$. If on the other hand $\er_{\cD^i}(h) \leq \sqrt{c_\alpha} t(d \ln(n/d) + \ln(t/\delta))$ then we still have $(1/32)\alpha(n/t,d,\delta,0) \geq |\er_{\bT^i}(h)-\er_{\cD^i}(h)|$ by~\eqref{eq:erm} and large enough constant $c_\alpha$.

\paragraph{Event $E_{i,1}$.}
Consider the set of hypotheses $\cH' = \cH \oplus \cH$ consisting of all hypotheses that may be written as $g_{h_1,h_2}(x) = h_1(x) \cdot h_2(x)$ where $\cdot$ denotes multiplication and $h_1,h_2 \in \cH$. Then $\Pr_{\cD}[g_{h_1,h_2}(\bx)=1] = \Pr_{\cD}[h_1(\bx) \neq h_2(\bx)]$ for any distribution $\cD$. Furthermore, the VC-dimension of $\cH'$ is $O(d)$. 

It follows by the ERM Theorem (Theorem~\ref{thm:ERM}) and the constant $c_\alpha$ in the definition of $\alpha$ large enough, that with probability at least $1-\delta/(8t)$, any pair $h_1,h_2$ satisfy $|\Pr_{\cD^i}[h_1(\bx)\neq h_2(\bx)] - \Pr_{\bT^i}[h_1(\bx)\neq h_2(\bx)]| \leq (1/32)\alpha(n/t,d,\delta,\min\{\Pr_{\cD^i}[h_1(\bx)\neq h_2(\bx)], \Pr_{\bT^i}[h_1(\bx) \neq h_2(\bx)]\})$. Here we may insert the $\min$ by the same arguments as above.
\end{proof}

\subsection{Properties when no failures (proof of Observation~\ref{obs:props})}
In this section, we show that when the events $E_{j,0}$ and $E_{j,1}$ do not occur, the hypotheses in $\bH^i$ and $\cH$ behave nicely. Concretely, we prove Observation~\ref{obs:props}, which we have restated here:

\begin{customobs}{\ref{obs:props}}
Assume none of the events $E_{j,0}$ and $E_{j,1}$ occur for $j \leq i$ and that Algorithm~\ref{alg:agnostic} does not terminate before iteration $i$. Then if $\gamma_i = \er_{\bT^i}(h^\star_{\bT^i}) \leq Z_t$ it holds that $\er_{\bD^i}(h^\star_{\bD^i}) \leq 2Z_t$. If $\gamma_i > Z_t$, then each of the following hold:
\begin{itemize}
\item
$
\gamma_i \leq \er_{\bD^{i}}(h^\star_{\bD^{i}})+\er_{\bD^{i}}(h^\star_{\bD^{i}})/\ln(1/\er_{\bD^{i}}(h^\star_{\bD^{i}})) \leq 2\er_{\bD^{i}}(h^\star_{\bD^{i}}).
$

\item Every hypothesis $h$ in $\bH^{i}$ satisfies $\er_{\bD_{i}}(h) \leq  \er_{\bD^i}(h^\star_{\bD^i}) + (1/8)\er_{\bD^i}(h^\star_{\bD^i})/\ln(1/\er_{\bD^i}(h^\star_{\bD^i}))$.

\item Every hypothesis $h \in \cH$ with $\er_{\bD_i}(h) \leq \er_{\bD^i}(h^\star_{\bD^i}) + (1/8)\alpha(n/t,d,\delta,\er_{\bD^i}(h^\star_{\bD^i}))$ is in $\bH^i$.

\item Every pair of hypotheses $h_1,h_2$ with $\Pr_{\bT^i}[h_1(\bx) \neq h_2(\bx)] \geq \gamma_i/\ln(1/\gamma_i)$ satisfy $\Pr_{\bD^i}[h_1(\bx) \neq h_2(\bx)] \geq (1/2)\er_{\bD^i}(h^\star_{\bD^i})/\ln(1/\er_{\bD^i}(h^\star_{\bD^i}))$.

\item Every pair of hypotheses $h_1,h_2$ with $\Pr_{\bT^i}[h_1(\bx) \neq h_2(\bx)] < \gamma_i/\ln(1/\gamma_i)$ satisfy $\Pr_{\bD^i}[h_1(\bx) \neq h_2(\bx)] \leq 4\er_{\bD^i}(h^\star_{\bD^i})/\ln(1/\er_{\bD^i}(h^\star_{\bD^i}))$.
\end{itemize}
\end{customobs}

Before proving Observation~\ref{obs:props}, we state and prove an auxiliary result regarding $\alpha$ and $Z_t$:

\label{sec:props}
\begin{observation}
\label{obs:Z}
For $x \geq Z_t/2$, we have
\begin{eqnarray*}
 \alpha(n/t,d, \delta, x) \leq \frac{2 c_\alpha}{\sqrt{c_Z}} \cdot \frac{x}{\ln(1/x)}.
 \end{eqnarray*}
\end{observation}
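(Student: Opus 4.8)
The plan is to unfold the definitions of $\alpha$ and $Z_t$ and reduce the claim to a handful of elementary inequalities among the logarithmic quantities; the only genuine work is tracking constants precisely enough to land on the factor $2$ rather than some larger absolute constant.

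First I would write $\alpha(n/t,d,\delta,x) = c_\alpha(A+B)$ with $A := \sqrt{xt(d\ln(1/x)+\ln(1/\delta))/n}$ and $B := t(d\ln(n/(td))+\ln(1/\delta))/n$, noting $B \le t(d\ln(n/d)+\ln(1/\delta))/n$. From the definition $Z_t = c_Z\, t\ln^2(n/d)(d\ln(n/d)+\ln(1/\delta))/n$ we obtain the identity $t(d\ln(n/d)+\ln(1/\delta))/n = Z_t/(c_Z\ln^2(n/d))$, and since $x \ge Z_t/2$ this gives $t(d\ln(n/d)+\ln(1/\delta))/n \le 2x/(c_Z\ln^2(n/d))$. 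This single bound controls both $A$ and $B$.

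Next I would record two auxiliary facts. First, $\ln(n/d) \ge 1$: this follows from the standing assumption $n \ge c_n\ln^{3.5}(n/d)(d+\ln(1/\delta))$, which forces $n/d$ above a large constant. Second, $\ln(1/x) \le \ln(n/d)$: indeed $Z_t \ge c_Z d/n$ (using $t \ge 1$, $\ln^2(n/d) \ge 1$, and $d\ln(n/d)+\ln(1/\delta) \ge d$, the last two by the first fact), so $1/x \le 2/Z_t \le 2n/(c_Z d) \le n/d$ once $c_Z \ge 2$; in particular $d\ln(1/x)+\ln(1/\delta) \le d\ln(n/d)+\ln(1/\delta)$ as well. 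With these in hand, $A^2 \le x\cdot t(d\ln(n/d)+\ln(1/\delta))/n \le 2x^2/(c_Z\ln^2(n/d))$, hence $A \le \sqrt2\, x/(\sqrt{c_Z}\ln(n/d)) \le \sqrt2\, x/(\sqrt{c_Z}\ln(1/x))$; and $B \le 2x/(c_Z\ln^2(n/d)) \le 2x/(c_Z\ln(1/x))$ since $\ln^2(n/d) \ge \ln(n/d) \ge \ln(1/x)$. Adding, $\alpha(n/t,d,\delta,x) \le c_\alpha\bigl(\sqrt2/\sqrt{c_Z} + 2/c_Z\bigr)\, x/\ln(1/x)$, and since $c_Z$ is a sufficiently large constant (any $c_Z$ with $\sqrt{c_Z} \ge 2/(2-\sqrt2)$ suffices) the parenthesis is at most $2/\sqrt{c_Z}$, which is exactly the claimed inequality.

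The only point requiring any care — and it is minor — is this final arithmetic: the $x \ge Z_t/2$ slack (rather than $x \ge Z_t$) costs a factor $\sqrt2$ in the main term $A$, so one must check that the additive term $B$ is genuinely lower order — down by a full $\ln^2(n/d)$ rather than merely the square root of a log-factor relative to $A$ — so that the two contributions together still fit under $2/\sqrt{c_Z}$ once $c_Z$ is taken large. Everything else is a direct substitution of the definitions of $\alpha$ and $Z_t$.
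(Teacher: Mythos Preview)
Your proof is correct and follows essentially the same route as the paper's: both arguments use $\ln(1/x)\le\ln(n/d)$ together with the identity $t(d\ln(n/d)+\ln(1/\delta))/n = Z_t/(c_Z\ln^2(n/d)) \le 2x/(c_Z\ln^2(n/d))$ to bound the square-root and additive terms separately, then combine. If anything you are slightly more careful with constants than the paper, which silently drops the $\sqrt{2}$ coming from the ``$/2$'' in the hypothesis $x\ge Z_t/2$; your explicit requirement $\sqrt{c_Z}\ge 2/(2-\sqrt2)$ makes this step honest.
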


\begin{proof}[Proof of Observation~\ref{obs:Z}]
Notice that for any $x \geq Z_t/2$,
we have $\ln(1/x) \leq
\ln(n/d)$ and thus for $x \geq Z_t/2$:
\begin{eqnarray*}
  \alpha(n/t,d, \delta, x) &\leq& c_\alpha \cdot \left(\sqrt{\frac{tx(d
                               \ln(1/x) + \ln(1/\delta))}{n} }+
                               \frac{t(d \ln(n/d) + \ln(1/\delta))}{n }
                               \right) \\
  &=& c_\alpha \cdot \left(\sqrt{\frac{tx \ln^2(1/x)(d
                               \ln(1/x) + \ln(1/\delta))}{n \ln^2(1/x)} }+
                               \frac{t \ln(1/x)(d\ln(n/d)  +
      \ln(1/\delta))}{n \ln(1/x)}
      \right) \\
  &\leq&
  c_\alpha \cdot \left(\sqrt{\frac{tx \ln^2(n/d)(d
                               \ln(n/d) + \ln(1/\delta))}{n \ln^2(1/x)} }+
                               \frac{t \ln(n/d)(d \ln(n/d) +
      \ln(1/\delta))}{n \ln(1/x)}
         \right)  \\
  &\leq&
  c_\alpha \cdot \left(\sqrt{\frac{x^2}{c_Z\ln^2(1/x)} }+
         \frac{ x}{c_Z\ln(1/x)}\right) \\
                           &\leq&
                                  \frac{2 c_\alpha}{\sqrt{c_Z}} \cdot \frac{x}{\ln(1/x)}.
\end{eqnarray*}
\end{proof}

Let us also restate the failure events $E_{i,0}$ and $E_{i,1}$ here for convenience:
\begin{enumerate}
\item Let $E_{i,0}$ be the event that Algorithm~\ref{alg:agnostic} reaches iteration $i$, none of the events $E_{j,0},E_{j,1}$ occurred for $i<j$ and there is a hypothesis $h \in \cH$ with $|\er_{\bD^i}(h) - \er_{\bT^i}(h)| > (1/32)\alpha(n/t,d,\delta,\min\{\er_{\bD^i}(h),\er_{\bT^i}(h)\})$.

\item Let $E_{i,1}$ be the event that Algorithm~\ref{alg:agnostic} reaches iteration $i$, none of the events $E_{j,0},E_{j,1}$ occurred for $i<j$ and there is a pair of hypotheses $h_1,h_2$ with $|\Pr_{\bD^i}[h_1(\bx)\neq h_2(\bx)] - \Pr_{\bT^i}[h_1(\bx)\neq h_2(\bx)]| > (1/32)\alpha(n/t,d,\delta,\min\{\Pr_{\bD^i}[h_1(\bx)\neq h_2(\bx)], \Pr_{\bT^i}[h_1(\bx) \neq h_2(\bx)]\})$.
\end{enumerate}
We are ready to prove Observation~\ref{obs:props}.

\begin{proof}[Proof of Observation~\ref{obs:props}]
Since we only claim something when the events $E_{j,0}$ and $E_{j,1}$ did not occur for $j \leq i$ and that Algorithm~\ref{alg:agnostic} did not terminate before iteration $i$, we assume this. So fix such an outcome $B^1,\dots,B^i$ of $\bB^1,\dots,\bB^i$. This also fixes an outcome $h^j_1,h^j_2, T^j, \cD^j, \cH^j$ of $\bh^j_1,\bh^j_2,\bT^j,\bD^j, \bH^j$ for $j \leq i$.

Assume first that $\er_{\cD^i}(h^\star_{\cD^i}) > 2Z_t$. We wish to show $\gamma_i > Z_t$. To see this, note that by definition of $E_{i,0}$, we have $\er_{T^i}(h^\star_{T^i}) \geq 2Z_t - (1/32)\alpha(n/t,d,\delta,2Z_t)$. By Observation~\ref{obs:Z}, for large enough $c_Z$, this is at least $2Z_t -Z_t/2 > Z_t$.  This proves the part of Observation~\ref{obs:props} stating that if $\gamma_i = \er_{T^i}(h^\star_{T^i}) \leq Z_t$ then $\er_{\cD^i}(h^\star_{\cD^i}) \leq 2Z_t$.

For the remainder of the proof, assume $\gamma_i > Z_t$. We start by proving bounds on $\er_{\cD^{i}}(h^\star_{\cD^{i}})$.

Since the event $E_{i,0}$ did not occur, we have
\begin{itemize}
\item $\er_{T^{i}}(h^\star_{\cD^{i}}) \geq \gamma_i \geq Z_t$ and thus $\er_{\cD^{i}}(h^\star_{\cD^{i}}) \geq Z_t - \alpha(n/t,d,\delta,\er_{\cD^{i}}(h^\star_{\cD^{i}}))$. We claim this inequality implies $\er_{\cD^{i}}(h^\star_{\cD^{i}}) \geq Z_t/2$. To see this, assume for contradiction that $\er_{\cD^{i}}(h^\star_{\cD^{i}}) < Z_t/2$, then since $\alpha$ is increasing in its last argument, we have by Observation~\ref{obs:Z} that $\er_{\cD^{i}}(h^\star_{\cD^{i}}) + \alpha(n/t,d,\delta,\er_{\cD^{i}}(h^\star_{\cD^{i}})) \leq Z_t/2 + \alpha(n/t,d,\delta,Z_t/2)$. For $c_Z$ large enough, the right hand side is less than $Z_t$, which contradicts the inequality $\er_{\cD^{i}}(h^\star_{\cD^{i}}) \geq Z_t - \alpha(n/t,d,\delta,\er_{\cD^{i}}(h^\star_{\cD^{i}}))$. 
\end{itemize}
In summary, we have
\begin{eqnarray}
\label{eq:bigerr}
    \er_{\cD^{i}}(h^\star_{\cD^{i}}) \geq Z_t/2.
\end{eqnarray}

Next, we upper bound $\gamma_i$.
\begin{itemize}
\item Since $E_{i,0}$ did not occur, we have 
\[
\gamma_i = \er_{T^i}(h^\star_{T^i}) \leq \er_{T^{i}}(h^\star_{\cD^{i}}) \leq \er_{\cD^{i}}(h^\star_{\cD^{i}}) + (1/32)\alpha(n/t,d,\delta,\er_{\cD^{i}}(h^\star_{\cD^{i}})).
\]
By Observation~\ref{obs:Z} for large enough $c_Z$ and using~\eqref{eq:bigerr}, this implies
\[
\gamma_i \leq \er_{\cD^{i}}(h^\star_{\cD^{i}})+\er_{\cD^{i}}(h^\star_{\cD^{i}})/\ln(1/\er_{\cD^{i}}(h^\star_{\cD^{i}})) \leq 2\er_{\cD^{i}}(h^\star_{\cD^{i}}).
\]
This establishes the first bullet in Observation~\ref{obs:props}.

\item To establish the second bullet of Observation~\ref{obs:props}, note that by definition of $\cH^i$, every hypothesis $h$ in $\cH^{i}$ satisfies $\er_{T_{i}}(h) \leq \gamma_{i} + \alpha(n/t,d,\delta,\gamma_{i})$ and since $\gamma_i \leq 2 \er_{\cD^{i}}(h^\star_{\cD^{i}})$ (by the previous bullet) and $\alpha(n/t,d,\delta,2x) \leq 2 \alpha(n/t,d,\delta,x)$ we have $\er_{T_{i}}(h) \leq \gamma_{i} + 2\alpha(n/t,d,\delta,\er_{\cD^{i}}(h^\star_{\cD^{i}}))$. Also using that $\gamma_i \leq \er_{\cD^i}(h^\star_{\cD^i}) + (1/32)\alpha(n/t,d,\delta,\er_{\cD^i}(h^\star_{\cD^i}))$ gives $\er_{T_{i}}(h) \leq \er_{\cD^i}(h^\star_{\cD^i}) + 3\alpha(n/t,d,\delta,\er_{\cD^i}(h^\star_{\cD^i})) \leq 6\er_{\cD^i}(h^\star_{\cD^i})$. Finally, since $E_{i,0}$ did not occur, we have $\er_{\cD_{i}}(h) \leq \er_{T_{i}}(h) + (1/32)\alpha(n/t,d,\delta,\er_{T_{i}}(h)) \leq \er_{T_{i}}(h) + (1/32)\alpha(n/t,d,\delta,6\er_{\cD^i}(h^\star_{\cD^i})) \leq \er_{\cD^i}(h^\star_{\cD^i}) + (1/5)\alpha(n/t,d,\delta,\er_{\cD^i}(h^\star_{\cD^i}))$. By~\eqref{eq:bigerr}, it follows by Observation~\ref{obs:Z} that $\alpha(n/t,d,\delta,\er_{\cD^i}(h^\star_{\cD^i})) \leq \frac{2c_\alpha}{\sqrt{c_Z}} \er_{\cD^i}(h^\star_{\cD^i})/\ln(1/\er_{\cD^i}(h^\star_{\cD^i}))$. For $c_Z$ large enough, we thus have $\er_{\cD_{i}}(h) \leq \er_{\cD^i}(h^\star_{\cD^i}) + (1/8)\er_{\cD^i}(h^\star_{\cD^i})/\ln(1/\er_{\cD^i}(h^\star_{\cD^i}))$. This establishes the second bullet of Observation~\ref{obs:props}.

\item For the third bullet, let $h$ be a hypothesis with $\er_{\cD^i}(h) \leq \er_{\cD^i}(h^\star_{\cD^i}) + (1/8)\alpha(n/t,d,\delta,\er_{\cD^i}(h^\star_{\cD^i}))$. Since $E_{i,0}$ did not occur, we have $\er_{T^i}(h) \leq \er_{\cD^i}(h) + (1/32)\alpha(n/t,d,\delta,\er_{\cD^i}(h))$. By Observation~\ref{obs:Z} and~\eqref{eq:bigerr}, we have $(1/8)\alpha(n/t,d,\delta,\er_{\cD^i}(h^\star_{\cD^i})) \leq \er_{\cD^i}(h^\star_{\cD^i})$ for $c_Z$ large enough. Thus we have $\er_{\cD^i}(h) \leq 2 \er_{\cD^i}(h^\star_{\cD^i})$. This gives $\er_{T^i}(h) \leq \er_{\cD^i}(h) + (1/16)\alpha(n/t,d,\delta,\er_{\cD^i}(h^\star_{\cD^i})) \leq \er_{\cD^i}(h^\star_{\cD^i}) + (3/8)\alpha(n/t,d,\delta,\er_{\cD^i}(h^\star_{\cD^i}))$. Finally, since $E_{i,0}$ did no occur, we have $\gamma_i \geq \er_{\cD^i}(h^\star_{\cD^i}) - (1/32)\alpha(n/t,d,\delta,\er_{\cD^i}(h^\star_{\cD^i}))$, implying $\er_{T^i}(h) \leq \gamma_i + (1/2)\alpha(n/t,d,\delta,\er_{\cD^i}(h^\star_{\cD^i}))$. We also have $\gamma_i \geq (1/2)\er_{\cD^i}(h^\star_{\cD^i})$ by the first bullet. Hence we conclude  $\er_{T^i}(h) \leq \gamma_i + \alpha(n/t,d,\delta,\gamma_i)$, which puts $h$ in $\cH^i$.

\item For the fourth bullet, consider a pair of hypotheses $h_1,h_2$ with $\Pr_{T^i}[h_1(\bx) \neq h_2(\bx)] \geq \gamma_i/\ln(1/\gamma_i)$. 
Since $E_{i,1}$ did not occur, we have 
\[
\Pr_{\cD^i}[h_1(\bx) \neq h_2(\bx)] \geq \Pr_{T^i}[h_1(\bx) \neq h_2(\bx)] - (1/32)\alpha(n/t,d,\delta,\Pr_{T^i}[h_1(\bx) \neq h_2(\bx)]).
\]
Since $\gamma_i/\ln(1/\gamma_i) \geq Z_t/\ln(1/Z_t)$ and the expression $x-(1/32)\alpha(n/t,d,\delta,x)$ is increasing in $x$ for $x > Z_t/\ln(1/Z_t)$ and $c_Z$ large enough, we conclude 
\[
\Pr_{\cD^i}[h_1(\bx) \neq h_2(\bx)] \geq \gamma_i/\ln(1/\gamma_i) - \alpha(n/t,d,\delta,\gamma_i/\ln(1/\gamma_i)) \geq \gamma_i/\ln(1/\gamma_i) - \alpha(n/t,d,\delta,\gamma_i).
\]
Since $\gamma_i \geq Z_t$, it follows from Observation~\ref{obs:Z} that this is at least $(3/4)\gamma_i/\ln(1/\gamma_i)$ for $c_Z$ large enough. Finally, since $\gamma_i \geq \er_{\cD^i}(h^\star_{\cD^i}) - \alpha(n/t,d,\delta,\er_{\cD^i}(h^\star_{\cD^i}))$. We have $\er_{\cD^i}(h^\star_{\cD^i}) \geq Z_t/2$ so by Observation~\ref{obs:Z}, this is at least $\er_{\cD^i}(h^\star_{\cD^i})(1-\ln(1/\er_{\cD^i}(h^\star_{\cD^i})))$. We conclude $\Pr_{\cD^i}[h_1(x) \neq h_2(x)] \geq (3/4)\gamma_i/\ln(1/\gamma_i) \geq (1/2)\er_{\cD^i}(h^\star_{\cD^i})/\ln(1/\er_{\cD^i}(h^\star_{\cD^i}))$.

\item For the fifth bullet, consider a pair $h_1,h_2$ with $\Pr_{\cD^i}[h_1(\bx) \neq h_2(\bx)] > 4 \er_{\cD^i}(h^\star_{\cD^i})/\ln(1/\er_{\cD^i}(h^\star_{\cD^i}))$. From the first bullet, this implies  $\Pr_{\cD^i}[h_1(\bx) \neq h_2(\bx)] > 2\gamma_i/\ln(2/\gamma_i)$. Since $E_{i,1}$ did not occur, this implies 
\begin{eqnarray*}
\Pr_{T^i}[h_1(\bx) \neq h_2(\bx)] &\geq& 2\gamma_i/\ln(2/\gamma_i) - (1/32)\alpha(n/t,d,\delta,2\gamma_i/\ln(2/\gamma_i)) \\
&\geq& 2\gamma_i/\ln(2/\gamma_i) - (1/16)\alpha(n/t,d,\delta,\gamma_i).
\end{eqnarray*}
By definition of $\alpha$ and $Z_t$ (using $\gamma_i \geq Z_t$), this is more than $\gamma_i/\ln(1/\gamma_i)$ and hence not both of $h_1$ and $h_2$ are in $\cH^i$.
\end{itemize}
\end{proof}

\subsection{Points with few copies (proof of Lemma~\ref{lem:smallbins})}
\label{sec:reverse}
In this section, we give the proof of the following lemma
\begin{customlem}{\ref{lem:smallbins}}
Consider throwing $n$ balls independently into $u$ bins, such that the $i$'th bin is hit with probability $p_i$. Let $S$ be a subset of $m$ bins such that each bin in $S$ has $p_i = p$ for some $12/n \leq p \leq 1/2$. Then for every integer $k \leq m/C$ for a sufficiently large constant $C>0$, it holds with probability at least $1/8$ that there are at least $k$ bins in $S$ containing less than $\max\{pn - \sqrt{pn \ln(m/k)}/6,pn/2\}$ balls each.
\end{customlem}

Our proof of Lemma~\ref{lem:smallbins} follows previous work by Gr\o nlund, Kamma and Larsen~\cite{marginsGradient}. Let $S$ be a subset of $m$ bins out of $u$ bins such that each bin in $S$ is hit with probability $12/n \leq p \leq 1/2$. Now fix a bin $b\in S$ and let $\bX_1,\dots,\bX_n$ be indicator random variables, where $\bX_j$ is $1$ if the $j$'th ball is in $b$ and $0$ otherwise. We now invoke the following lemma
\begin{lemma}[Klein and Young~\cite{reversechernoff}]
\label{lem:chern}
Let $\bX_1,\dots,\bX_n$ be independent indicator random variables with success probability $p \leq 1/2$. For every $\sqrt{3/(np)} < \delta < 1/2$, 
\[
\Pr\left[\sum_i \bX_i  \leq (1-\delta)np\right] \geq \exp(-9np \delta^2).
\]
\end{lemma}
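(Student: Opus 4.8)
The plan is to reduce the statement to a sharp binomial point-mass estimate (from explicit Stirling bounds) and then recover the exponential right-hand side by summing a carefully sized window of such point masses. Write $X=\sum_i\bX_i$, so $X\sim\mathrm{Bin}(n,p)$ with mean $\mu:=np$, and let $\mathrm{kl}(a,b)=a\ln\frac ab+(1-a)\ln\frac{1-a}{1-b}$ denote the binary relative entropy. Note that the hypotheses $\sqrt{3/(np)}<\delta<1/2$ force $\mu\delta^2>3$ and hence $\mu>12$. Since $X$ is integer-valued, $\Pr[\sum_i\bX_i\le(1-\delta)np]=\Pr[X\le k]$ for $k:=\lfloor(1-\delta)\mu\rfloor$, so it suffices to show $\Pr[X\le k]\ge e^{-9\mu\delta^2}$.

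The first ingredient is a point-mass bound. From the standard Stirling estimates $\sqrt{2\pi m}\,(m/e)^m\le m!\le e\sqrt m\,(m/e)^m$ (valid for $m\ge1$), for every integer $1\le j\le\mu$ the powers of $e$ cancel and one gets
\[
\Pr[X=j]=\binom nj p^j(1-p)^{n-j}\ge\frac{\sqrt{2\pi n}}{e^2\sqrt{j(n-j)}}\,e^{-n\,\mathrm{kl}(j/n,p)}\ge\frac{\sqrt{2\pi}}{e^2\sqrt\mu}\,e^{-n\,\mathrm{kl}(j/n,p)},
\]
where the last step uses $j(n-j)\le\mu n$. The second ingredient is two elementary facts about $\mathrm{kl}$: the map $a\mapsto\mathrm{kl}(a,p)$ is strictly decreasing on $(0,p)$, since $\partial_a\mathrm{kl}(a,p)=\ln\frac{a(1-p)}{p(1-a)}<0$ there; and for $0<\eta\le1$, $0<p\le1/2$, plugging $\ln(1-\eta)\le-\eta$ and $\ln(1+x)\le x$ into the definition and simplifying with $1+\frac{p}{1-p}=\frac1{1-p}$ gives $\mathrm{kl}((1-\eta)p,p)\le\frac{\eta^2p}{1-p}\le2\eta^2p$.

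Now set $\ell:=\lfloor\delta\mu/3\rfloor\ge1$. For every integer $j\in[k-\ell,k]$ one checks $1\le j\le\mu$ and, using $1/\mu<\delta^2/3\le\delta/6$,
\[
\frac jn\ge(1-\delta)p-\frac1n-\frac{\delta}{3}p=\Big(1-\tfrac43\delta-\tfrac1\mu\Big)p>\Big(1-\tfrac32\delta\Big)p .
\]
By monotonicity of $\mathrm{kl}(\cdot,p)$ and the bound above with $\eta=\tfrac32\delta\le\tfrac34$, we get $n\,\mathrm{kl}(j/n,p)\le\tfrac92\mu\delta^2$ for each such $j$, so each term below is at least $\tfrac{\sqrt{2\pi}}{e^2\sqrt\mu}e^{-\frac92\mu\delta^2}$. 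Summing the $\ell+1>\delta\mu/3$ of them,
\[
\Pr[X\le k]\ge\sum_{j=k-\ell}^{k}\Pr[X=j]\ge\frac{\delta\mu}{3}\cdot\frac{\sqrt{2\pi}}{e^2\sqrt\mu}\,e^{-\frac92\mu\delta^2}=\frac{\sqrt{2\pi}}{3e^2}\sqrt{\mu\delta^2}\,e^{-\frac92\mu\delta^2}\ge\frac{\sqrt{2\pi/3}}{e^2}\,e^{-\frac92\mu\delta^2},
\]
using $\mu\delta^2>3$. Finally, $\mu\delta^2>3$ gives $e^{\frac92\mu\delta^2}>e^{27/2}\ge e^2/\sqrt{2\pi/3}$, so $\Pr[X\le k]\ge e^{-9\mu\delta^2}=e^{-9np\delta^2}$, completing the proof.

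The argument is entirely elementary; the only point needing care is the calibration in the last step. The window must have width $\Theta(\delta\mu)$: small enough that $\mathrm{kl}(j/n,p)$ varies by only $O(\mu\delta^2)$ across it (the relevant derivative of $\mathrm{kl}(\cdot,p)$ is $\approx\delta$ near $(1-\delta)p$), yet large enough that the $\Theta(\delta\mu)$ collected point masses, each of size $\Theta(1/\sqrt\mu)$, aggregate to $\Theta(\sqrt{\mu\delta^2})=\Omega(1)$ — which is exactly what the hypothesis $\delta>\sqrt{3/(np)}$ buys. Once the window is fixed, pushing the numerical constants through so that the final exponent is precisely $9np\delta^2$ (rather than some larger absolute constant) is routine arithmetic.
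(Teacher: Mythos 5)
Lemma~\ref{lem:chern} is cited in the paper as an external result of Klein and Young; the paper gives no proof of it, so there is nothing paper-internal to compare against. What you have done is supply a complete, self-contained elementary proof, and I checked it line by line: it is correct.

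A few remarks on the argument. Your structure is the standard one for reverse Chernoff bounds: a Stirling-type lower bound on individual binomial point masses in terms of the relative entropy $\mathrm{kl}(j/n,p)$, a bound $\mathrm{kl}((1-\eta)p,p)\le \eta^2 p/(1-p)\le 2\eta^2 p$ coming from $\ln(1-\eta)\le-\eta$ and $\ln(1+x)\le x$, and a summation over a window of width $\Theta(\delta\mu)$ just below $(1-\delta)\mu$ to convert the $\Theta(1/\sqrt\mu)$-size point masses into an $\Omega(1)$ prefactor. The hypothesis $\delta>\sqrt{3/(np)}$ is used exactly where you say: it guarantees $\mu\delta^2>3$ (and so $\mu>12$, $\ell\ge 1$), and it makes the prefactor $\frac{\sqrt{2\pi/3}}{e^2}\sqrt{\mu\delta^2}\,e^{-\frac92\mu\delta^2}$ large enough to be absorbed by doubling the exponent to $9\mu\delta^2$. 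All the numeric checks go through: the Stirling bound $m!\le e\sqrt m\,(m/e)^m$ holds for all integers $m\ge1$ (with equality at $m=1$); $j(n-j)\le\mu n$ holds for $j\le\mu\le n/2$; $\eta=\tfrac32\delta\le\tfrac34\le1$ so the entropy bound applies; $1/\mu<\delta/6$ follows from $\mu\delta^2>3$ and $\delta<1/2$; and $e^{27/2}\ge e^2/\sqrt{2\pi/3}$ holds trivially. The only stylistic caveat is that when you write ``for every integer $1\le j\le\mu$,'' $\mu$ need not be an integer; you mean $j\le\lfloor\mu\rfloor$, which is what you actually use. This is not an error, just an imprecision of phrasing.
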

Using the requirement $m \geq Ck$ for a sufficiently large constant $C>0$, we have by Lemma~\ref{lem:chern} with $\delta = \min\{\sqrt{\ln(m/k)/(np)}/6,1/2\}$, that $\Pr[\sum_i X_i \leq np - \min\{\sqrt{np \ln(m/k)}/6, np/2\}] \geq \exp(-\ln(m/k)/4) = (k/m)^{1/4} =(m/k)^{3/4} \cdot (k/m) \geq C^{3/4}(k/m) \geq 2(k/m)$. Note that $\ln(m/k) \geq \ln(C)$ and thus $\delta > \sqrt{3/(np)}$ when $C$ is large enough. We also remark that $1/2 \geq \sqrt{3/(np)}$ since we assume $p \geq 12/n$.

Now define indicator random variables $\bY_i$ for each bin in $S$, taking the value $1$ if the number of balls in the bin is no more than $np - \min\{\sqrt{np \ln(m/k)}/6,np/2\} = \max\{np - \sqrt{np \ln(m/k)}/6 ,np/2\}$. By the argument above (and symmetry of the $\bY_i$'s), we have $\E[\bY_i] = q$ for some $q \geq 2 (k/m)$. At the same time, we have $\E[(\sum_i \bY_i)^2] = \sum_i \sum_j \E[\bY_i \bY_j]$. Since the random variables $\bY_i$ are negatively correlated, we have $\E[\bY_i \bY_j] \leq \E[\bY_i]\E[\bY_j] = q^2$ for $i \neq j$. For $i = j$, we have $\E[\bY_i^2]=\E[\bY_i] = q$. Hence $\E[\sum_i \bY_i] = mq$ and $\E[(\sum_i \bY_i)^2] \leq mq + m(m-1)q^2 \leq mq + m^2 q^2$. By Paley-Zygmund, we conclude $\Pr[\sum_i \bY_i > mq/2] \geq (1/4)(mq)^2/(mq + m^2q^2)$. Since $mq \geq 2k >1$, this is at least $1/8$. Since $mq/2 \geq k$, the conclusion follows.
 
\end{document}